 \renewcommand\footnotetextcopyrightpermission[1]{} 
\let\@authorsaddresses\@empty
\author{Noam Aigerman$^*$}
\affiliation{%
  \institution{University of Montreal}
  \country{Canada}}
\author{Thibault Groueix$^*$}
\affiliation{%
  \institution{Adobe Research}
  \country{USA}}
\begin{document}

\newcommand{\theprompt}[1]{{''#1``}}
\newcommand{\R}{\mathbb{R}}
\newcommand{\Q}{\mathbb{Q}}
\newcommand{\vc}[1]{\mathbf{#1}}
\newcommand{\Cres}{\mathcal{X}}
\newcommand{\transpose}{{\mathsf T}}
\newcommand{\sfT}{{\mathsf T}}
\newcommand{\dd}[2]{\frac{\partial#1}{\partial#2}}
\newcommand{\dn}[1]{\nabla #1 \cdot\n}
\newcommand{\dt}[1]{\nabla #1 \cdot\t}
\newcommand{\gT}[1]{\nabla #1^\transpose}
\newcommand{\norm}[1]{\left\| #1\right\|}
\newcommand{\abs}[1]{\left\vert#1\right\vert}
\newcommand{\babs}[1]{\Big \vert#1 \Big \vert}

\newcommand{\set}[1]{\left\{#1\right\}}
\newcommand{\parr}[1]{\left (#1\right )}
\newcommand{\brac}[1]{\left [#1\right ]}
\newcommand{\ip}[1]{\left \langle #1 \right \rangle }

\newcommand{\wallpaper}[1]{{\tiny #1}}
\newcommand{\orbI}{\wallpaper{442}}
\newcommand{\orbII}{\wallpaper{333}}
\newcommand{\orbIII}{\wallpaper{632}}
\newcommand{\orbIV}{\wallpaper{2222}}

\newcommand{\torus}{\wallpaper{O}}
\newcommand{\mob}{\wallpaper{*x}}
\newcommand{\cylinder}{\wallpaper{**}}
\newcommand{\klein}{\wallpaper{xx}}
\newcommand{\projective}{\wallpaper{22x}}

\newcommand{\orbIhyb}{\wallpaper{4*2}}
\newcommand{\orbIIhyb}{\wallpaper{3*3}}
\newcommand{\orbRhyb}{\wallpaper{2*22}}
\newcommand{\orbIVhyb}{\wallpaper{22*}}

\newcommand{\reflectI}{\wallpaper{*442}}
\newcommand{\reflectII}{\wallpaper{*333}}
\newcommand{\reflectIII}{\wallpaper{*632}}
\newcommand{\reflectIV}{\wallpaper{*2222}}


\newcommand{\change}[1]{#1}
\newtheorem{thm}{Theorem}
\newcommand{\noam}[1]{}
\newcommand{\thibault}[1]{}
\newcommand{\OTE}{OTE}
\newcommand{\grp}{\mathcal{G}}
\newcommand{\action}{g}
\newcommand{\tile}{\mathcal{T}}
\newcommand{\V}{\mathbf{V}}
\newcommand{\T}{\mathbf{T}}
\newcommand{\vertex}{\mathbf{v}}
\newcommand{\mesh}{\textbf{M}}
\newcommand{\bpairs}{\mathcal{P}^\partial}
\newcommand{\image}{\mathcal{I}}
\newcommand{\render}{\mathcal{R}}
\newcommand{\x}[0]{\mathbf{x}}
\newcommand*{\eg}{e.g.\@\xspace}
\newcommand*{\ie}{i.e.\@\xspace}

\newenvironment{absolutelynopagebreak}
  {\par\nobreak\vfil\penalty0\vfilneg
   \vtop\bgroup}
  {\par\xdef\tpd{\the\prevdepth}\egroup
   \prevdepth=\tpd}

\title{Generative Escher Meshes}



\begin{abstract}
This paper proposes a fully-automatic, text-guided generative method for producing perfectly-repeating, periodic, tile-able 2D imagery, such as the one seen on floors, mosaics, ceramics, and the work of M.C. Escher. In contrast to square texture images that are seamless when tiled, our method generates non-square tilings which comprise solely of repeating copies of the same object. It achieves this by optimizing both geometry and texture of a 2D mesh, yielding a non-square tile in the shape and appearance of the desired object, with close to no additional background details, that can tile the plane without gaps nor overlaps. We enable optimization of the tile's shape by an unconstrained, differentiable parameterization of the space of all valid tileable meshes for given \change{boundary conditions} stemming from a symmetry group. Namely, we construct a differentiable family of linear systems derived from a 2D mesh-mapping technique - Orbifold Tutte Embedding - by considering the mesh's Laplacian matrix as differentiable parameters. We prove that the solution space of these linear systems is exactly all possible valid tiling configurations, thereby providing an end-to-end differentiable representation for the entire space of valid tiles. We render the textured mesh via a differentiable renderer,  and leverage a pre-trained image diffusion model to induce a loss on the resulting image, updating the mesh's parameters so as to make its appearance match the text prompt. We show our method is able to produce plausible, appealing results, with non-trivial tiles, for a variety of different periodic tiling patterns. 
\vspace{-20pt}
\end{abstract}

%
%




\begin{teaserfigure}
    \centering
    \captionsetup{justification=centering}
    \begin{subfigure}[b]{0.245\linewidth}
        \includegraphics[width=\linewidth]{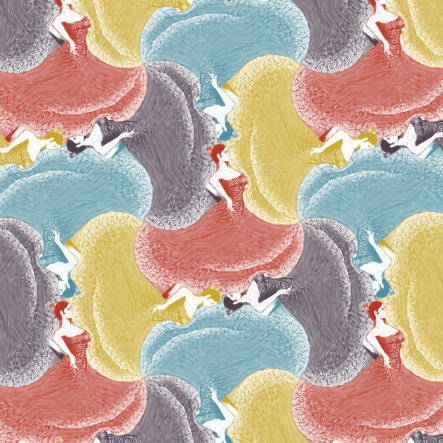}
        \caption{\theprompt{A flamenco dancer} \orbI}
        \end{subfigure}
    \begin{subfigure}[b]{0.245\linewidth}
        \includegraphics[width=\linewidth]{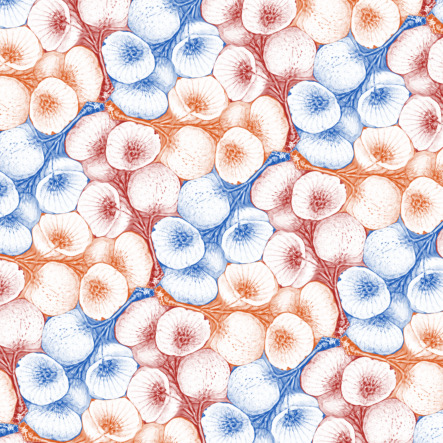}
        \caption{\theprompt{Poppies} \orbII}
        \label{fig:overleaf1}
    \end{subfigure}
    \hfill
    \begin{subfigure}[b]{0.245\linewidth}
        \includegraphics[width=\linewidth]{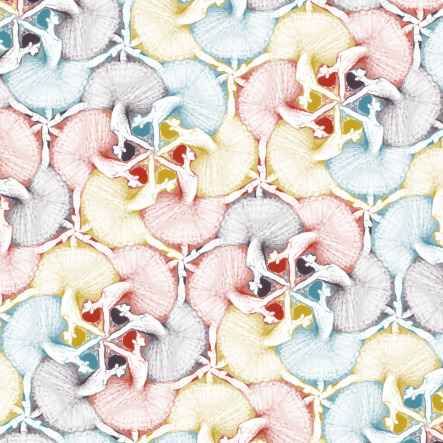}
        \caption{\theprompt{A ballet dancer} \orbIII}
        \label{fig:overleaf2}
    \end{subfigure}
        \hfill
    \begin{subfigure}[b]{0.245\linewidth}
        \includegraphics[width=\linewidth]{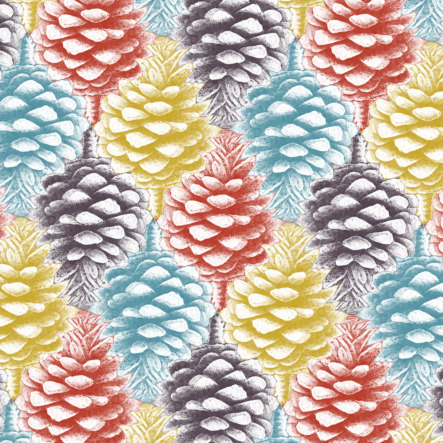}
        \caption{\theprompt{A pinecone} \orbIV}
        \end{subfigure}
    \\
    \begin{subfigure}[b]{0.245\linewidth}
        \includegraphics[width=\linewidth]{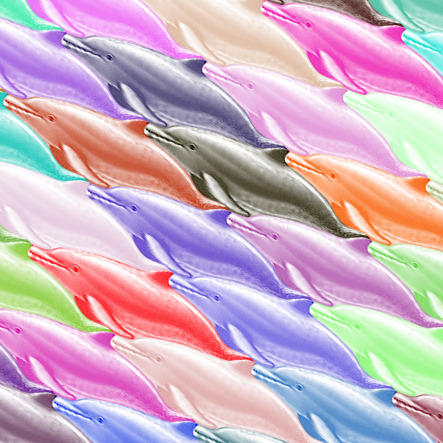}
        \caption{\theprompt{A dolphin} \torus}
        \end{subfigure}
    \hfill
    \begin{subfigure}[b]{0.245\linewidth}
        \includegraphics[width=\linewidth]{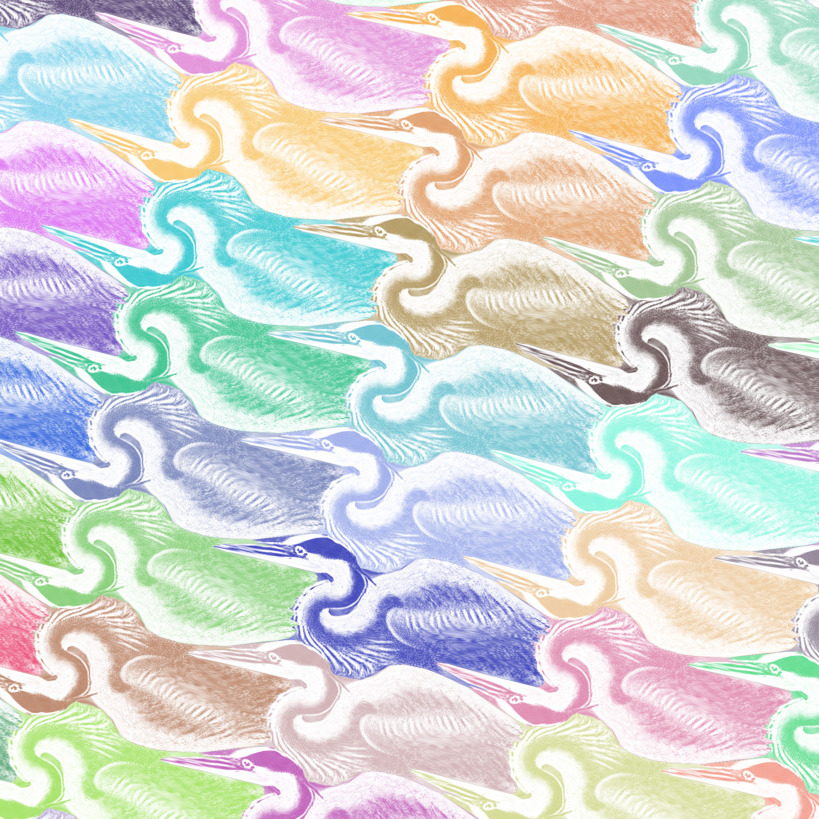}
        \caption{\theprompt{A heron} \klein}
        \label{fig:overleaf2}
    \end{subfigure}
        \hfill
    \begin{subfigure}[b]{0.245\linewidth}
        \includegraphics[width=\linewidth]{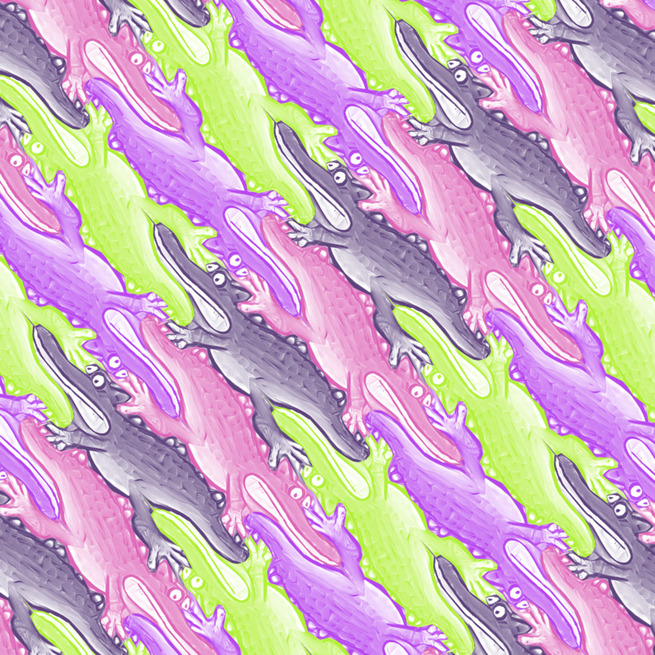}
        \caption{\theprompt{An alligator} \projective}
    \end{subfigure}
    \begin{subfigure}[b]{0.245\linewidth}
        \includegraphics[width=\linewidth]{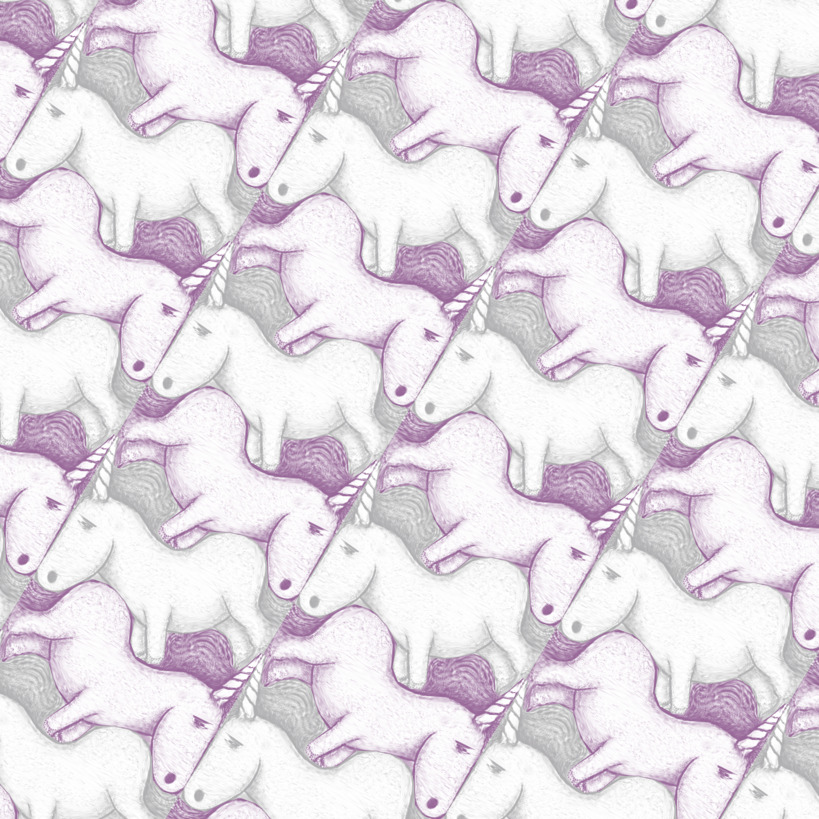}
        \caption{\theprompt{A unicorn} \mob}
        \label{fig:overleaf2}
    \end{subfigure}
    \captionsetup{justification=justified}
            \vspace*{-2mm}
   \caption{\textbf{Tilings automatically
    generated by our method.} Our method automatically generates a textured non-square 2D mesh that tiles the plane without any overlaps nor gaps between the tiles, matching a given a text prompt (written beneath each result) and a chosen symmetry group (denoted by a string to the right of the prompt, using \textit{orbifold notation}~\cite{conway2008symmetries}).  } 
    \label{fig:teaser}
\end{teaserfigure}

\maketitle
\def\thefootnote{*}\footnotetext{Equal contribution}\def\thefootnote{\arabic{footnote}}

\section{Introduction}

This paper proposes a data-driven method for generating textured 2D shapes which can tile the 2D plane. Symmetry and repetition are one of the key characteristics of humans' aesthetic behavior, serving as a cornerstone of design in most cultures around the world since the dawn of humanity. Namely, the ability to automatically produce such patterns has applications in, e.g., fabric design, architecture, interior design, and illustration.

Producing aesthetic, perfectly-repeating periodic imagery is a demanding artistic task which lies at the intersection of human perception and euclidean geometry: geometrically, the pattern must adhere to the stringent designated layout, connect with itself seamlessly, and repeat continuously ad-infinitum; perceptually, the imagery must be a plausible representation of the desired object, as well as hold  visual appeal. Indeed, the beauty of \change{artist M.C. Escher's tilings~\cite{schattschneider2004escher,escher2000mc}} lies in his ability to work within these stringent geometric constraints and produce very complex, realistic objects which still align with themselves perfectly, to yield an infinite tiling.

\begin{figure*}[t]
    \centering
    
        \begin{subfigure}[t]{0.196\linewidth}
        \includegraphics[width=\linewidth]{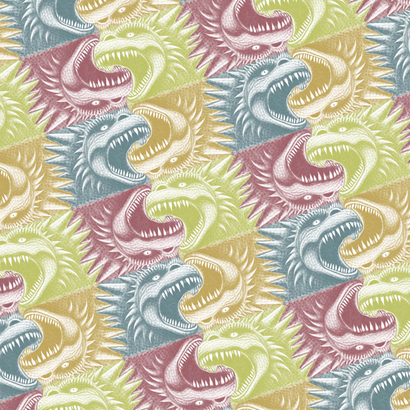}
        \caption{\theprompt{Godzilla} \orbIVhyb}
        \end{subfigure}
        \begin{subfigure}[t]{0.196\linewidth}
        \includegraphics[width=\linewidth]{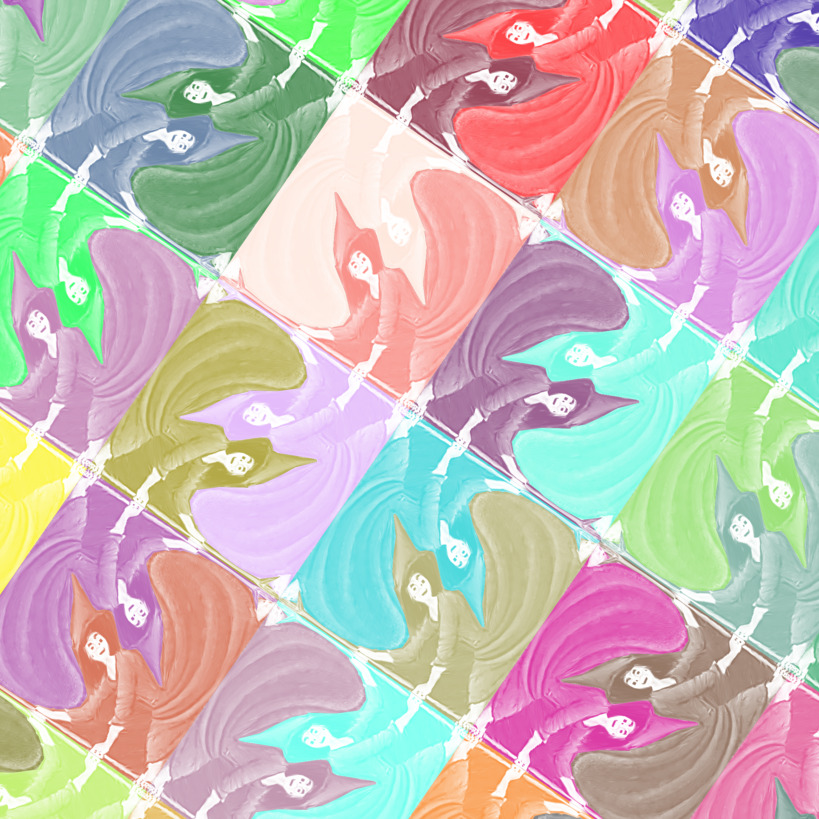}
        \caption{\theprompt{A witch} \orbRhyb}
        \end{subfigure}
        \begin{subfigure}[t]{0.196\linewidth}
        \includegraphics[width=\linewidth]{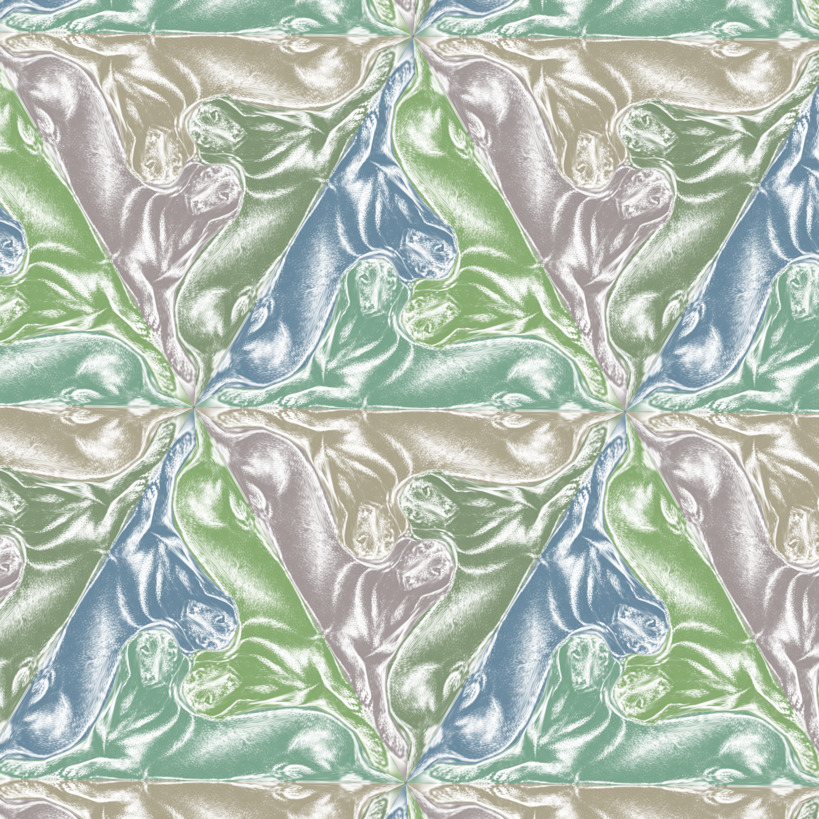}
        \caption{\theprompt{A Dachshund} \orbIIhyb}
        \end{subfigure}
        \begin{subfigure}[t]{0.196\linewidth}
        \includegraphics[width=\linewidth]{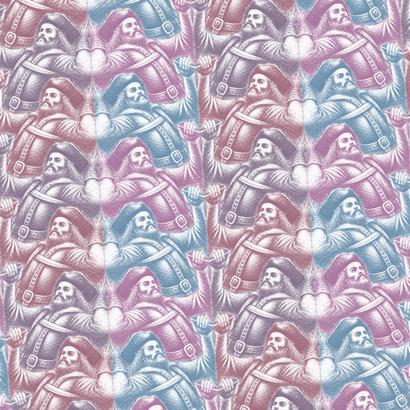}
        \caption{\theprompt{A pirate} \cylinder}
        \end{subfigure}
        \begin{subfigure}[t]{0.196\linewidth}
        \includegraphics[width=\linewidth]{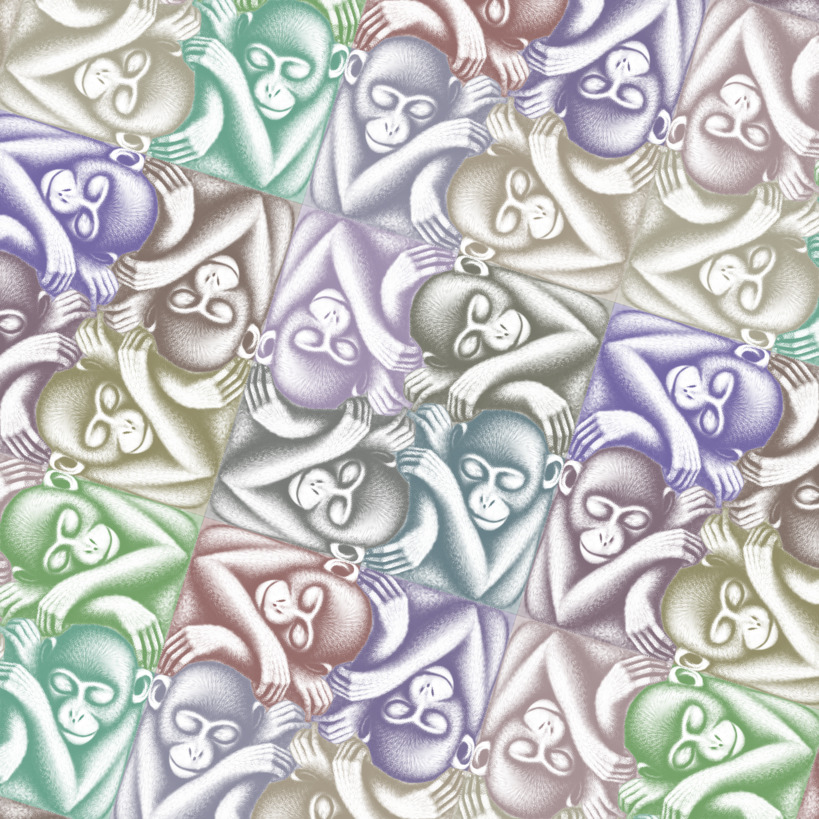}
        \caption{\theprompt{Sleeping monkey} \orbIhyb}
        \end{subfigure}
        \caption{The other  ``interesting'' wallpaper groups left out of Figure~\ref{fig:teaser}.  These five groups have reflectional symmetry which forces the corresponding part of the boundary to remain a straight line, leading to a more-restricted domain and less elaborate tile shapes. The remaining four have a completely fixed boundary.}
        \label{fig:other_five}
\end{figure*}
The goal of this work is thus to provide a text-guided, fully automatic generative approach for aesthetic and plausible periodic imagery. Namely, given a desired textual description of the object and the desired symmetry pattern of the 2D plane, our method produces a textured 2D mesh which: visually resembles the textual description; adheres to the geometric rules that ensure it can be used to tile the plane; and, contains only the foreground object, so as to produce a tiling effect similar to Escher's works. Figure~\ref{fig:teaser} and Figure~\ref{fig:other_five} show several results produced by our method for different prompts. Throughout the paper, we use a unique string to identify each symmetry group via \emph{orbifold notation}~\cite{conway2008symmetries}, which we write next to the text prompt. Refer to Figure 1 of the supplementary material for the illustrated wallpaper groups.


 This goal faces a fundamental geometric challenge: as the mesh's shape is modified during the text-guided optimization, it must maintain its tile-ability. No two copies of the mesh can overlap one another, and the mesh itself must be overlap-free (i.e., the boundary doesn't self-overlap and no triangles of the mesh are inverted), otherwise it does not well-define a tileable bounded shape that can be rendered or fabricated. This requirement is difficult to satisfy, since the space of valid tiles is highly non-convex and does not directly lend itself to optimization using gradient-based techniques involving neural networks, such as the one we aim to use.

We overcome this challenge through our key technical contribution: a \emph{differentiable tile representation}, which casts the highly non-convex, constrained space of valid mesh configurations that are tileable, into an unconstrained, differentiable representation. 
We achieve this by building on top of the theory of Orbifold Tutte Embeddings~\cite{aigerman2015orbifold}. That work provides a method to UV-map a triangular mesh into a 2D tile, however it does not provide a way to control the specific resulting shape of the tile, which is crucial for our task of modifying the tile towards a specific visual appearance. Our main observation is that we can construct a\textit{ differentiable family} of linear systems,  by considering the mesh's Laplacian's edge weights as differentiable, optimizeable parameters. This yields a differentiable representation which enables us to control the resulting shape of the tile - we provide a formal proof this representation enables us to achieve all possible valid tiles, and only them.


With this differentiable tile representation at hand, we can optimize the tile's shape via standard  unconstrained gradient-based optimization. We texture the mesh and render it through a differentiable renderer. The resulting render is fed to Score Distillation Sampling~\cite{dreamfusion}, which uses a trained image diffusion model to provide a loss for optimizing an image w.r.t. a given textual description. Both texture and shape of the mesh are optimized simultaneously, crucial in order to attain tileable shapes containing little to no background. 

Our experiments show that our method is able to generate tileable shapes for all the different wallpapers groups (which include all possible euclidean tilings exhibited in M.C. Escher's work), producing visually appealing and plausible tiles which contain almost solely the desired object, and yield significantly more impressive tiling results compared with a pure image-based generative approach.

To summarize, our contributions are:
\begin{itemize}[leftmargin=15pt]
    \item We provide the first method for fully-automatic text-guided generation of non-square tilings containing only the foreground object, applicable to any of the euclidean symmetry groups.
    \item As a technical contribution, we propose a new differentiable formulation built on top of orbifold Tutte embeddings~\cite{aigerman2015orbifold}, by treating mesh Laplacians as optimizeable parameters, which enables controlling the mesh's 2D embedding while ensuring it remains tileable and overlap-free. 
    \item We provide a simple proof that this differentiable formulation covers exactly the space of all possible overlap-free 2D mesh embeddings that tile the plane w.r.t. \change{boundary conditions corresponding to the} given symmetry group.
\end{itemize}
\textit{See \url{ https://github.com/ThibaultGROUEIX/GenerativeEscherMeshes} for our code and implementation.}



\begin{figure*}
    \centering
    \includegraphics[width=\textwidth]{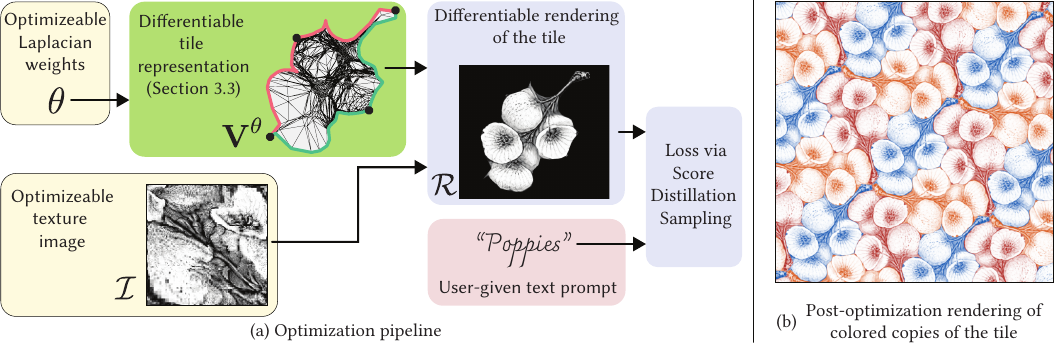}
    \caption{\textbf{Overview of our method.} (a) \textit{the optimization pipeline}: the parameters $\theta$ control our novel differentiable tile representation (Section~\ref{ss:diff_tile}), which outputs the vertices $\V^\theta$ of the mesh, guaranteed to represent a valid tile, with no self-overlaps, that can tile the plane perfectly. The mesh is textured using the texture image $\image$ and rendered via a differentiable renderer, producing the final render $\render$ of the textured tile. The render is fed into Score Distillation Sampling~\cite{dreamfusion} along with the text prompt. From there, gradients are back-propagated in reverse to the optimizable parameters - $\theta$ and $\image$. \\(b) \textit{production of the final image of the tiling}: After optimization is finished, we take copies of the resulting tile, color them with different colors, and tile the plane with them, to produce the final images shown in this paper. }
    \label{fig:overview}
\end{figure*}
\section{Related work}

\paragraph{2D Tilings.}
Planar symmetries give rise to a rich set of rules and possible tilings, see~\cite{conway2008symmetries} for a comprehensive discussion. Tilings come in different variants, e.g., alternating between two different tile shapes, or aperiodic tilings which use a fixed set of tiles but connect them in different ways~\cite{penrose1974role,smith2023aperiodic}. 
We propose a generative method for the simple case of tilings formed by quotients of symmetry groups, i.e., produced by repeatedly applying a set of rigid transformations to one tile - this case is extensive enough to include all of Escher's euclidean tilings.

\paragraph{Tilings in graphics.} As a computational problem combining pleasing aesthetics and geometry, tilings have long been of interest to the graphics community. Escherization~\cite{kaplan2000escherization} is a major source of inspiration for our work. It fits a tileable polygon to a given shape through an optimization process. Similarly to us, the authors aim to formulate an unconstrained parameterization of different tile shapes, via angles and lengths. However, these parameters only define a ``tiling polygon'', which is comprised of a fixed, small set of ``tiling vertices'' (e.g., a square for most examples in Figure 1 in supplementary). The edges between these can be subdivided with ``shape vertices'', which are required to satisfy a similar type of boundary conditions to the ones used in our work, and serve to add details to the tile's shape. However, the rules devised by the authors do not directly ensure these additional ``shape vertices'' (necessary for achieving more elaborate shapes such as the ballerina in Figure~\ref{fig:teaser})  form a boundary which is free of self-overlaps. As discussed before, ensuring an overlap-free boundary is critical to our application, as the image-based guidance of SDS~\cite{dreamfusion} yields strong deformations during optimization which immediately lead to self-overlaps if not reined. Additionally, the parameterization of~\cite{kaplan2000escherization} is solely for a tileable \emph{boundary} polygonal curve, and does not consider the parameterization of a tileable triangular mesh. Followups present better fitting methods~\cite{escherization_eigenvalue} and extend this method with a deformable mesh~\cite{escherization_deformation}, however as they apply a similar parameterization, they cannot prohibit self-overlaps as well. In summary, these methods are geared towards a different goal of turning an existing shape into a tile, and not towards optimization via perceptual losses.

Other works reconstruct shapes from existing tiles such as mosaics~\cite{hausner2001simulating,smith2005animosaics} and fabricable pieces~\cite{chen2017fabricable},  focus on designing tiles for tiling a surface~\cite{eigensatz2010paneling,fu2010k,singh2010triangle}, or deforming templates to match layouts~\cite{peng2014computing} and creating checkerboard patterns~\cite{peng2019checkerboard}. Recently, TilinGNN~\cite{TilinGNN} employ a GNN to design patterns for connecting given tiles to form a specific shape. None of these works tackles our task: optimizing a tile for perceptual appearance, while ensuring it tiles the plane.

\paragraph{Image generation of tileable texture.}
Texture synthesis has been a major focus of computer graphics and vision in the past twenty years~\cite{efros2001image,efros1999texture,kwatra2003graphcut,dong2007optimized,kwatra2005texture, portilla2000parametric}, see~\cite{akl2018survey, raad2018survey} for surveys. \cite{moritz2017texture} use a PatchMatch variant to preserve the stationarity of a synthesized image. 
 ~\cite{Rodriguez-Pardo2019AutomaticPatterns} optimally crop an image for tiling. 
\cite{li2020inverse} use GraphCuts to improve the tileability of the image's borders. GANs can generate tileable square images by adding periodic constraints~\cite{bergmann2017learning}, tiling the latent space~\cite{rodriguezpardo2022SeamlessGAN} or modifying the padding pattern~\cite{tilegen, photomat}. All these works focus on producing seamless textures over a standard rectangular image such that copies of the image connect seamlessly. This cannot ensure only a single repeating object without any background nor account for specific symmetries such as the 6-way symmetry of the ballet dancers in Figure~\ref{fig:teaser}. In contrast, we aim to find \emph{one} non-square geometric shape that can be repeated infinitely without any gaps nor overlaps to map the plane.

\paragraph{Text-guided generation of images and graphics.}
CLIP~\cite{clip} provides a coupling between images and text in the same latent space  which enables text-guided optimization of appearance of pixels~\cite{clipstyler}, Bezier curves~\cite{clipasso,clipdraw}, NeRFs~\cite{dreamfields}, and 3D triangle meshes~\cite{clipmesh,text2mesh,textDeformer}.  This was followed by diffusion models' breakthrough in text-guided generative approaches for images~\cite{origdiffusion2, origdiffusion1,imagen,dalle,stablediffusion}. 
Central to this work, DreamFusion~\cite{dreamfusion}   introduces Score Distillation Sampling (SDS), a technique to turn a trained diffusion model into a prior which can be used as a loss in a traditional optimization setting, which they show outperforms  CLIP guidance. \cite{sjc} introduce a concurrent variant of SDS, while other works report improvements by applying SDS to generative vector art~\cite{vectorart} and 3D generation~\cite{magic3d,fantasia3d, latentnerf, realfusion}. Similarly, we also use SDS, for the completely-novel task of generating tilings.

\paragraph{Overlap-free mesh embeddings}
Our method hinges on our ability to ensure the mesh is overlap-free while enforcing periodic boundary constraints. \cite{aigerman2015orbifold} provide an extension of Tutte's embedding~\cite{tutte1963draw,floater2003one,gortler2006discrete} for tilings,  which we turn modifiable and differentiable in order to incorporate it into the gradient-based optimization of a perceptual loss. Many other methods exist for ensuring overlap-free mesh embeddings, however they cannot be used in our setting: energy-based methods~\cite{schuller2013locally,smith2015bijective,rabinovich2017SLIM} provide a barrier term which prevents triangles from inverting and the boundary from self-overlapping.  While a barrier term can be incorporated as a regularizer, it exhibits extremely-large amplitudes and produces large gradients which  compete with SDS's own gradients, and additionally requires performing a line search as well as collision detection on the boundary, significantly increasing the run time. \textit{Convexification} methods such as~\cite{lipman2012bounded,kovalsky2014controlling} restrict the space of possible embeddings and require slow, more advanced optimization techniques. Other methods such as using auxiliary structures~\cite{muller2015air,jiang2017simplicial}, or iterative constructive processes~\cite{progressiveembeddings,simplexassembly} are not applicable in a general gradient-based optimization.  

\section{Method}
\label{sec:method}
\begin{figure}[t]
    \centering
    \captionsetup{justification=centering}
        
    \begin{subfigure}[t]{0.49\linewidth}
        \includegraphics[width=\linewidth]{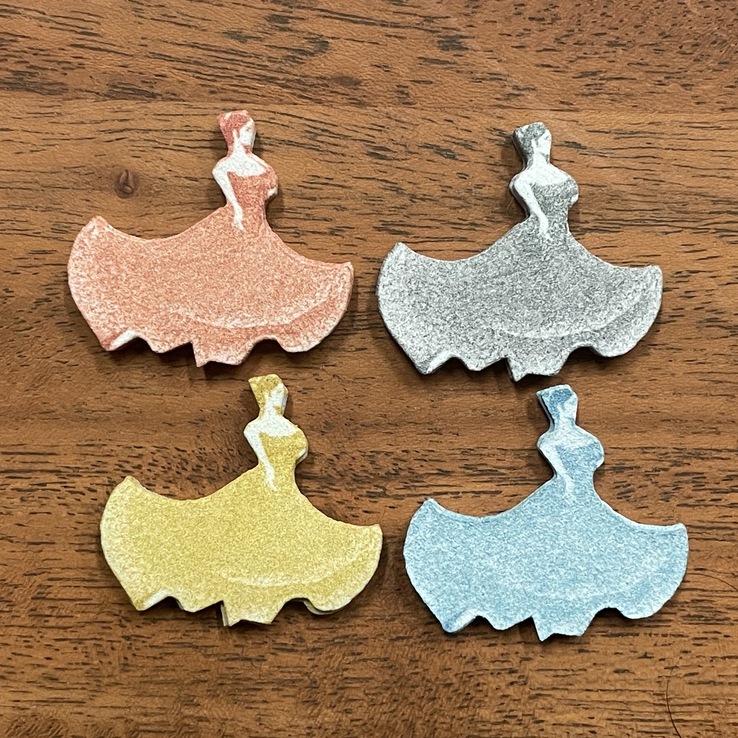}
        \caption{Fabricated tiles}
        \label{fig:print2}
    \end{subfigure}
    \hfill
    \begin{subfigure}[t]{0.49\linewidth}
        \includegraphics[width=\linewidth]{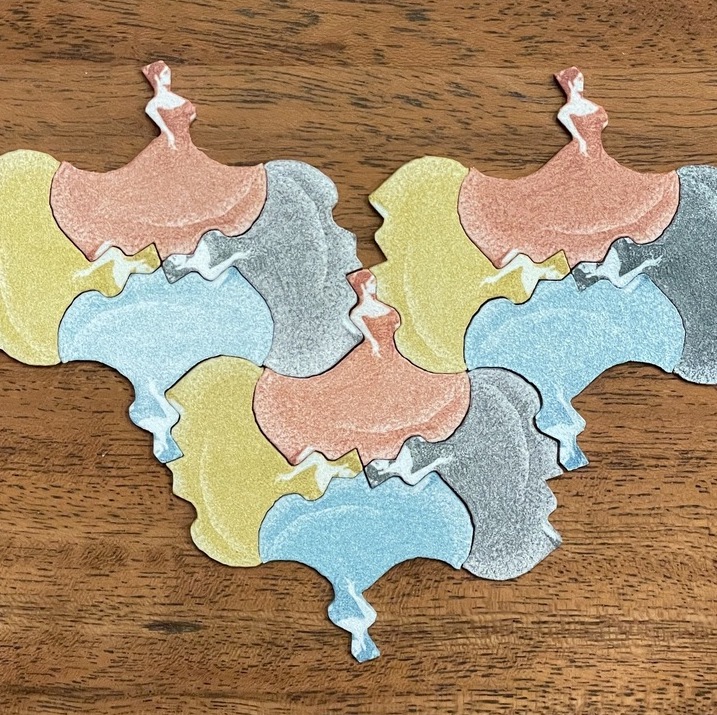}
        \caption{Plane Tesselation}
        \label{fig:print3}
    \end{subfigure}
    \vspace{-10pt}
    \captionsetup{justification=justified}
    \caption{\textbf{Fabrication.}  The output of our method can be readily used to 3D print physical tiles.}
    \label{fig:farbication}
\end{figure}

We assume to be given a user-input of a text prompt describing the object that should make up the tiling. We are also given the user's choice of \change{symmetry pattern} (Subsection \ref{ss:mesh_cond}) which describes how the generated object will repeat itself in order to tile the plane. Our \emph{goal} is to produce an infinite, perfectly-repeating tiling that (i) visually matches the text;  (ii) comprises solely of the desired object with little to no background; and (iii) respects the desired symmetry pattern. 

A naive approach of directly using a generative technique (e.g., a GAN or a diffusion process) to produce a square image that matches the text-prompt will trivially satisfy requirement (i); however, it cannot ensure the tiling will comprise solely of the desired object (thus does not satisfy (ii)), and cannot accommodate for all possible types of symmetry groups, e.g., ones with 6-way  symmetry such as the ballet dancers in Figure~\ref{fig:teaser} (thus does not satisfy (iii)). 

 Instead, to satisfy all three requirements, our core idea is to define the tile $\tile$ as a 2D triangular mesh with vertices $\V$ and triangles $\T$. Each vertex of the mesh has fixed UV coordinates which enable us to texture the mesh with respect to a texture-image $\image$. 
 We can then jointly optimize both the tile's geometry via its vertices, and its colors via the texture (see Figure~\ref{fig:overview}) - this joint optimization encourages the resulting tile to contain only the desired foreground imagery. We optimize these parameters with respect to an off-the-shelf energy defined in Score Distillation Loss (SDS)~\cite{dreamfusion}, quantifying how much the rendering of the textured, deformed mesh visually-matches the input text prompt.

 The core \textit{challenge} lies in the geometric part of the task - optimizing the shape of the mesh while ensuring it is still a valid tile, i.e., can cover the entire plane without gaps nor overlaps. To describe our novel differentiable layer that produces tileable meshes, we first review basic theory of planar symmetries (Subsection \ref{ssec:theory}) and devise necessary and sufficient conditions for tileability (Subsection~\ref{ss:mesh_cond}).
\subsection{Preliminaries: planar symmetries} 
\label{ssec:theory}
The following subsection discusses the basic theoretical underpinning of this work. We refer the reader to \textit{The Symmetry of Things}~\cite{conway2008symmetries}  \change{and \textit{Tilings and Patterns}~\cite{grunbaum1987tilings}} for a comprehensive discussion. 

We are concerned with infinite tilings of the plane, composed of the basic tile $\tile\subset\R^2$ and infinitely-many copies of it. Each copy is shifted by some euclidean isometry $g$, placing the copy at $g\parr{\tile}$.  The copies of the tile must completely cover $\R^2$, without any gaps nor overlaps between different copies. 

In mathematical terms, the different patterns - possible ways to generate tilings by placing copies of a tile - \change{stem from} the symmetry groups of the plane, aptly called the \emph{wallpaper groups}. There are exactly $17$ such groups (see supplementary material), each one comprising of an infinite discrete set of isometric  affine transformations - rotation, translation, reflection, and a reflection composed with a translation (called a ``glide reflection'').  \change{Thus, a tiling always corresponds to one of the 17 wallpaper groups} $\grp$ \change{(although different tiling patterns can be created from the same group)}. 

Any point  $p\in\tile$ in the interior of the tile appears exactly once in each copy $g\parr{\tile}$ of the tile, as $g\parr{p}$.
All copies of the point $p$ are an \emph{equivalence class}, $\brac{p}\triangleq\set{g\parr{p}|g\in\grp}$, and the tile itself is thus a representation of the \emph{quotient} of $\R^2$ under $\grp$, $\tile\equiv\R^2/\grp$, where we select one representative point $p\in\R^2$ for each equivalence class $\brac{p}$. The different choices of representative points for the equivalence classes lead to all the possible tile shapes. We mainly restrict our attention to simply-connected tiles that are topological disks, however our method can support other topologies, see Figure~\ref{fig:dual}.

\subsection{Conditions for valid tiles} 
\label{ss:mesh_cond}
Given the symmetry group $\grp$, what are the necessary and sufficient conditions on the placement of $\V$ that ensure that the mesh $\tile=\parr{\V,\T}$ can tile the plane using $\grp$?

\begin{wrapfigure}[20]{r}{0.24\textwidth}
\vspace{-10pt}
\includegraphics[width=0.24\textwidth]{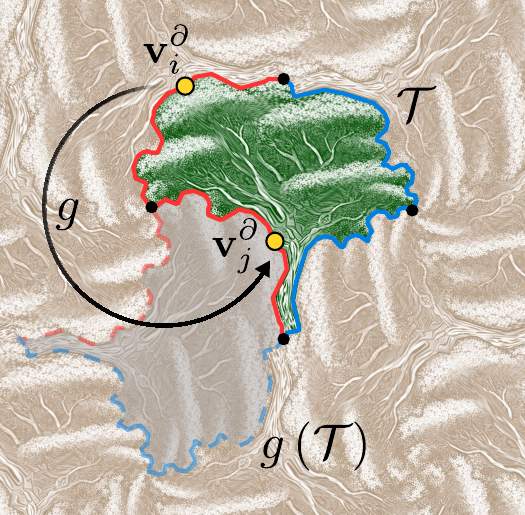}
    \caption{The way in which copies of the tile connect to one another defines  correspondences between different parts of the boundary, and as a result boundary conditions.}
    \label{fig:tile_structure}
\end{wrapfigure}
First, since we require the tiling to be overlap-free, the mesh itself should be overlap-free as well. Now, consider a tileable mesh $\tile$ (green tree in inset), and one of its copies $\action\parr{\tile}$ which is directly adjacent to it (dark-brown tree), \ie, their boundaries align. This alignment implies that when a boundary vertex $\vertex^\partial_i$  of $\tile$ is copied to a vertex $\action\parr{\vertex^\partial_i}$ of the tile $\action\parr{\tile}$, the copy aligns with another boundary vertex $\vertex^\partial_j$ of the original tile:

\begin{equation}
    \label{eq:boundary_constraints}
    \action\parr{\vertex_i^\partial} = \vertex_j^\partial.
\end{equation}
Thus every boundary vertex $\vertex^\partial_i$ has a unique corresponding boundary vertex $\vertex^\partial_j$ which aligns with it in some copy, through an isometry $\action$ (a specific one for each vertex of the mesh).
These correspondences are assigned for a given mesh by choosing a known, canonical vertex-placement which is tileable (e.g., a square), and deducing correspondence between vertices for that placement.

This yields the necessary and sufficient conditions for a \emph{valid} tile: 
\begin{enumerate}[noitemsep,partopsep=0pt,topsep=0pt,parsep=0pt]
    \item All boundary vertices satisfy the boundary conditions 
    \label{item:boundary}
    Eq.~\eqref{eq:boundary_constraints}.
    \item the mesh does not self-overlap. \label{item:overlap_cond}
\end{enumerate}
%
\begingroup

Note that avoiding overlaps (Condition 2) is crucial, as simply \setlength{\columnsep}{5pt}%
\begin{wrapfigure}[4]{r}{.5in}
  \centering
  \vspace{-1.5em}\includegraphics[width=\linewidth]{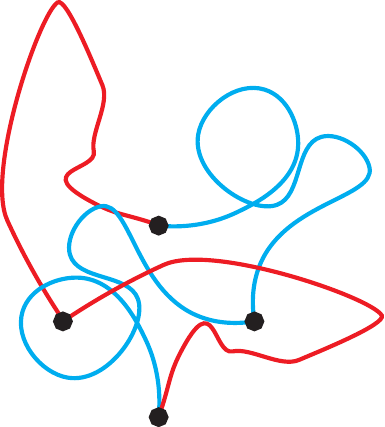}
\end{wrapfigure}
  satisfying the boundary conditions, Equation~\eqref{eq:boundary_constraints}, easily produces meaningless meshes: the inset shows an example satisfying the same periodic boundary conditions as the above illustration with the trees, however self-overlaps render it useless as it does not well-define an inside/outside partition of space. Additionally, self-overlaps may happen not only due to the boundary, but also due to inverted triangles in the interior of the mesh, overlapping other triangles, in turn leading to entangled configurations that  cause severe rendering issues, prohibiting the use of visual guidance via neural networks.

\change{Lastly, note that different choices of the basic tile lead to different boundary correspondences, and to a different family of possible tiles, for the same wallpaper group. These families differ from one another by how the tile copies align with one another (based on the boundary correspondences). Figure~\ref{fig:base_shape_orbII} shows two possible tilings generated for the same wallpaper group: i) using boundary conditions derived from a rhombus-shaped basic tile (same one used for this wallpaper group everywhere else in this paper), leading to 2 pairs of corresponding boundary segments; ii) using boundary conditions stemming from a hexagon-shaped basic tile, with 3 pairs of corresponding boundary segments. We consider the basic tile as a user input, as it relies on a per-case constructive process. We illustrate the basic tile we chose to use for each wallpaper group in the supplementary material.}
\begin{figure}
    \centering
    \captionsetup{justification=centering}
        \begin{subfigure}[b]{0.49\linewidth}
        \includegraphics[width=\linewidth]{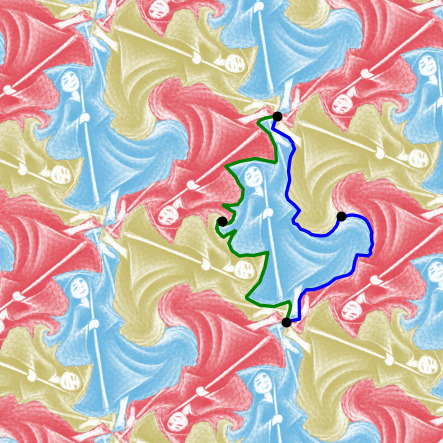}
        \caption{\theprompt{A witch} \orbII}
        \end{subfigure}
        \begin{subfigure}[b]{0.49\linewidth}
        \includegraphics[width=\linewidth]{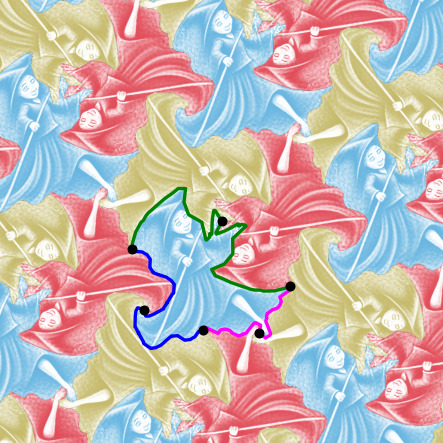}
        \caption{\theprompt{A witch} \orbII}
        \end{subfigure}
\\
        

\captionsetup{justification=justified}
\vspace*{-6pt}
    \caption{\change{\textbf{Different boundary conditions for the same symmetry group.} Different choices of the boundary conditions, Equation~\eqref{eq:boundary_constraints}, lead to different families of tiles for the same wallpaper group, each family having different correspondences on its boundary  (visualized with matching colors): left has two parts of its boundary in correspondence to one another, right has three parts.}}
    \label{fig:base_shape_orbII}
\vspace*{-10pt}
\end{figure}

\endgroup



\subsection{An optimizeable representation of tileable meshes}
\label{ss:diff_tile}
We now focus on the main challenge standing between us and generative tilings: given a chosen wallpaper group $\grp$, devise an \emph{unconstrained} optimization scheme which can modify the vertex positions $\V$ of $\tile$, while ensuring that it remains a \emph{valid tile}, i.e., satisfies the tiling conditions defined at the end of Subsection~\ref{ss:mesh_cond}, which ensure it can cover the entirety of $\R^2$ without any overlaps. 

More specifically, we will represent the vertex positions as a function $\V^\theta$ of another set of parameters $\theta\in\R^k$, and then use $\theta$ in an unconstrained optimization setting. For that, we require that our representation  satisfies the following properties: 
\begin{itemize}[leftmargin=10pt,noitemsep,partopsep=0pt,topsep=0pt,parsep=0pt]

    \item\textbf{Validity}:  $\tile=\parr{\V^\theta,\T}$ is a valid tile for \emph{any} $\theta$. This entails the representation yields an \emph{unconstrained} optimization problem.
    \label{item:valid}
    \item \textbf{Completeness}: Any valid tile can be achieved for some $\theta$. This ensures our representation is expressive and not restricted to a limited class of tile shapes.
    \label{item:all}
    \item\textbf{Differentiability}:  $\frac{\partial V^\theta}{\theta}$ exists and is well-defined, except for a measure-0 set. This enables gradient-based optimization.
\end{itemize}
The validity property requires satisfying the two necessary and sufficient tiling conditions from Subsection~\ref{ss:mesh_cond}: Condition~\ref{item:boundary} enforces the boundary conditions, Equation~\eqref{eq:boundary_constraints}, which are merely linear equalities that can be easily incorporated within unconstrained optimization. On the other hand, Condition~\ref{item:overlap_cond} mandates that the mesh has no self-overlaps, leading to a significant challenge as the space of overlap-free configurations of a given triangulation is highly non-convex and cannot be directly enforced as a constraint.

To construct our differentiable representation, we first consider another work, Orbifold Tutte Emebedding~\cite{aigerman2015orbifold}, which focused on a different task related to the same type of tilings as ours, namely computing overlap-free \emph{UV maps} of 3D meshes into those tilings. Their algorithm can be summarized in our context as follows. Given a mesh $\mesh$ with boundary conditions as in Equation~\eqref{eq:boundary_constraints}: {i. Compute the mesh's laplacian $L$} (they use either the cotan~\cite{pinkall1993computing} or MVC~\cite{floater2003mvc} laplacians);
{ii. Solve a linear system}  which requires that the boundary conditions of Equation \eqref{eq:boundary_constraints} are satisfied, and additionally that the embedding is discrete-harmonic with respect to $L$, \ie, each $\vertex_i$ satisfies: 

\begin{equation}
    \label{eq:harmonic}
    \sum_{\vertex_j\in\mathcal{N}^i}L_{ij}\parr{\vertex_i-\vertex_j}=0, 
\end{equation}

where $\mathcal{N}^i$ are all the neighbors of $\vertex_i$ in the tiled mesh. This algorithm is guaranteed to produce a mesh that is a valid tile without any self-overlaps. However, it does not provide any way to control the shape of the resulting tile. As such, the embedding cannot be interacted with, necessary for modifying the mesh w.r.t. the text-guided objective function of SDS~\cite{dreamfusion} (see comparison in Figure~\ref{fig:comparison}). Thus, we next propose a method to extend~\cite{aigerman2015orbifold} to  a differentiable representation.



Our core observation is that instead of using a \emph{fixed} linear system as is done in~\cite{aigerman2015orbifold}, we can construct a differentiable parameterization of a \emph{space} of linear systems, whose solutions parameterize exactly the \textit{entire} space of valid tile shapes. This is achieved by parameterizing the space of mesh \emph{Laplacians}: Assign a scalar parameter $\theta_{ij}\in\R$ to each edge $\parr{i,j}$. Let $f:\R\leftrightarrow\R^+$ be a bijective function from the real numbers to the positive real numbers. Define the mesh Laplacian $L^\theta$ whose entries are parameterized by $\theta$:

\begin{equation}
    L^\theta_{ij}= \begin{cases}
  -f\parr{\theta_{ij}}  &  \text{edge $\parr{i,j}$ exists in $\tile$} \\
 \sum_{k}{f\parr{\theta_{ik}}} & \text{$i=j$}\\
 0 & \text{else}
\end{cases}
\end{equation}

Finally, define the vertex positions $\V^\theta$ as the solution to the linear system \eqref{eq:boundary_constraints}, \eqref{eq:harmonic}, w.r.t the laplacian $L^\theta$, to obtain the parameterization $\theta\to\V^\theta$. This parameterization is trivially differentiable, as a composition of differentiable functions. The next theorem proves that this representation can yield all valid tilings, and only them.

\begin{thm}\label{thm:OTE} A placement of the vertices $\V^*$  is a valid tile iff there exist parameters $\theta$ s.t. $\V^*$ satisfies Equations~\eqref{eq:boundary_constraints},\eqref{eq:harmonic} w.r.t. $L^\theta$.
\end{thm}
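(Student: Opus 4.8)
The plan is to prove the two implications of the biconditional separately, since they rely on quite different ideas: the forward (\emph{if}) direction is essentially a direct appeal to the overlap-freeness guarantee of Orbifold Tutte Embeddings~\cite{aigerman2015orbifold}, whereas the reverse (\emph{only if}) direction---completeness---requires a constructive barycentric argument that is the genuinely new content.

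For the \emph{if} direction, suppose $\V^*$ satisfies Equations~\eqref{eq:boundary_constraints} and~\eqref{eq:harmonic} with respect to some $L^\theta$. The crucial observation is that, by construction, every off-diagonal entry of $L^\theta$ equals $-f\parr{\theta_{ij}}$ with $f$ mapping into $\R^+$, so \emph{all edge weights are strictly positive} regardless of $\theta$. Thus $L^\theta$ is a legitimate Tutte-type (orbifold) Laplacian, and Equation~\eqref{eq:harmonic} is exactly the discrete-harmonicity condition for it. I would then invoke the main embedding theorem of~\cite{aigerman2015orbifold}, which states that solving this linear system under the boundary conditions~\eqref{eq:boundary_constraints} with positive weights yields an overlap-free embedding. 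This gives Condition~\ref{item:overlap_cond}, while Condition~\ref{item:boundary} holds by hypothesis; hence $\V^*$ is a valid tile.

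For the \emph{only if} direction, suppose $\V^*$ is a valid tile. Condition~\ref{item:boundary} immediately gives that $\V^*$ satisfies Equation~\eqref{eq:boundary_constraints}, so it remains to produce weights making~\eqref{eq:harmonic} hold, which I would do vertex-by-vertex. Because the mesh is overlap-free (Condition~\ref{item:overlap_cond}), for each vertex $\vertex_i$ carrying a harmonic constraint the triangles incident to it in the tiled mesh form an embedded star, so $\vertex_i^*$ lies in the \emph{interior} of the simple polygon traced by its one-ring neighbors $\mathcal{N}^i$. That polygon is contained in the convex hull of those neighbors, and an open neighborhood of an interior point of the polygon lies in the hull, so $\vertex_i^*$ lies in the \emph{interior} of the convex hull of $\set{\vertex_j^* : j\in\mathcal{N}^i}$. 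By the standard characterization of the relative interior of a finite convex hull, such a point is a \emph{strictly positive} convex combination of the neighbors: there exist $\lambda_{ij}>0$ with $\sum_j \lambda_{ij}=1$ and $\vertex_i^*=\sum_{j}\lambda_{ij}\vertex_j^*$. Rearranging yields $\sum_{j}\lambda_{ij}\parr{\vertex_i^*-\vertex_j^*}=0$, which is precisely Equation~\eqref{eq:harmonic} with weights $f\parr{\theta_{ij}}=\lambda_{ij}$. Since $f$ is a bijection onto $\R^+$ and each $\lambda_{ij}>0$, the parameters $\theta_{ij}=f^{-1}\parr{\lambda_{ij}}$ exist and realize these weights.

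I expect the main obstacle to be the geometric lemma underlying the completeness direction---that overlap-freeness forces each interior vertex strictly inside the convex hull of its one-ring---together with the bookkeeping of the orbifold boundary. In particular I must be careful about which vertices carry the harmonic constraint versus the boundary constraint~\eqref{eq:boundary_constraints}, and verify that vertices near the fundamental-domain boundary still have an embedded star once their neighbors are taken in the \emph{tiled} mesh (the lifted plane), so that the convex-combination argument applies there as well. A secondary point is the symmetry of $L^\theta$: since the construction is meant to subsume asymmetric weightings such as mean-value coordinates~\cite{floater2003mvc}, I would treat $\theta_{ij}$ as indexed per half-edge, so that the per-row weights $\lambda_{ij}$ can be chosen independently without requiring $\lambda_{ij}=\lambda_{ji}$.
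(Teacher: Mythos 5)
Your proposal is correct and takes essentially the same route as the paper: the validity direction is delegated to the embedding theorem of \cite{aigerman2015orbifold} (legitimized by the strict positivity of the weights $f\parr{\theta_{ij}}$), and the completeness direction is the positive-barycentric-coordinate argument, writing each vertex of an overlap-free configuration as a positive convex combination of its one-ring in the tiled mesh and inverting $f$ to recover $\theta$. The only difference is that you spell out details the paper's two-line proof leaves implicit, namely why the coefficients can be taken strictly positive (the vertex lies in the interior of the hull of its embedded star) and that $\theta_{ij}$ is indexed per half-edge so no symmetry of $L^\theta$ is needed.
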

\begin{proof} $(\Rightarrow)$ 
Assume the mesh is tileable. Then by the necessary and sufficient tiling conditions (Section~\ref{ss:mesh_cond}) we have that $\V^*$ satisfies Equation~\eqref{eq:boundary_constraints}, and additionally, the configuration has no overlaps. For a mesh with no overlaps, each vertex lies in the convex hull of its neighbors in the tiling,  and hence can be written in positive
barycentric coordinates of its neighbors, $v_i = \sum_{j\in\mathcal{N}^i}  b_{ij}v_j$. Since $b_{ij}$ sum to $1$ for each $i$, we have $\sum_{j\in\mathcal{N}^i} b_{ij} (v_j-v_i)=0$. Choosing $\theta$ s.t. $f\parr{\theta_{ij}} =  b_{ij}$ entails $\V^*$ also satisfies Equation~\eqref{eq:harmonic}$. (\Leftarrow)$ this direction is proved in~\cite{aigerman2015orbifold}. 
\end{proof}

 We do not pertain to significant mathematical novelty by presenting this proof, as it states a fact that is immediate to deduce to many people working on graph embeddings and graph laplacians in broader contexts than ours. Our contribution is in the observation that this specific fact yields an unconstrained, differentiable parameterization of valid, overlap-free tilings.  As far as we know, we are the first to provide such a parameterization and to propose to use the laplacian as a differentiable parameter to control tilings. Note: we did not assume the laplacian is symmetric ($L^\theta_{ij}\neq L^\theta_{ji}$), and indeed our method fully supports non-symmetric laplacians. 

Lastly, every wallpaper group is well-defined up to a rigid motion, and since we optimize for a perceptive loss which is not rotation-invariant, we apply a global rotation $\mathbf{R}_\phi$ by an angle $\phi$, \ie, $\V^\theta\cdot\mathbf{R}^T_\phi$, and add $\phi$ as a variable of our optimization.

\subsection{Text-guided optimization  via the differentiable pipeline}
\label{ssec:tile_optim}
The other components of our pipeline follow a standard approach:
\paragraph{Rendering the tile.} The  mesh we use in all experiments is  a regular triangulation of a $40\times40$ square (however initialization with any valid tile is possible). We use the initial vertex coordinates as UV coordinates, which are kept static throughout optimization. We texture the mesh using an image $\image$, whose pixel values we optimize.  We render the textured mesh using a differentiable renderer (Nvdiffrast~\cite{nvdiffrast}), thereby obtaining a raster image of the tile $\render$, from which we  back-propagate gradients to our optimizable parameters, see Figure~\ref{fig:overview}.

\paragraph{Loss via Score Distillation Sampling.} To optimize the rendered tile's appearance to match the given text prompt,  we use Score Distillation Sampling (SDS)~\cite{dreamfusion}, which  defines a loss  $\mathcal{L}_\text{SDS}$ via a pre-trained image diffusion model, enabling text-guided optimization of the tile's appearance. We emphasize SDS serves as a blackbox. Any other differentiable loss can be used, either from perceptive modules, or a geometric one. \change{For instance, it is possible to additionally added a regularization term to reduce the distortion of the deformation, though for our task which requires extreme deformation, large amounts of distortion may be desired.}

\paragraph{Avoiding trivial square tiles.}
We employ three simple strategies to prevent generation of square images with background color: first, we randomize the background color of the renderings to prevent SDS from using a specific color as a background color the mesh (see Figure~\ref{fig:ablation} for an ablation); second, we render the shape without texture in half of the optimization steps; third, we set a higher learning rate  for the geometry optimization than for the texture, such that textures emerge slower, giving the shape time to form. We found that these strategies mitigate the trivial solution systematically.
\paragraph{Greyscale vs. colored textures.} 
For this paper, we choose to focus mainly on greyscale textures and not RGB colors, although our method supports color as well (see Figure~\ref{fig:ablation} and supplementary). This is mainly due to the following reason: we follow~\cite{magic3d,dreamfusion, vectorart}, who observe high \textit{guidance} significantly improves the quality of the results. The guidance parameter $\alpha$ defines the extrapolation magnitude, $e = e_p + \alpha(e_p-e_n)$ where $e$ is the final text embedding used, $e_p$ is the text embedding of the positive prompt (the desired target image) , and $e_n$ is the negative prompt (images to avoid)   - we use the empty string. We use $\alpha=100$ in all experiments. Unfortunately, as noted in the above papers, setting a high guidance leads to saturated colors, which is even more pronounced in 2D than in the previous 3D applications which used SDS. Thus, we choose to mainly use greyscale textures. Once a better image-based text-guided approach will be available instead of SDS, our method will seamlessly be able to swap out one for the other, as it does not rely on it except for defining the final loss to optimize. We additionally note that in our current optimization, colored images lead to the resulting tiles containing slightly more background colors (see Figure~\ref{fig:ablation}).

\paragraph{Implementation details.}
We use Pytorch \cite{pytorch} and Adam~\cite{kingma2014adam} with batch size $8$. We use~\cite{torchsparsesolve} as a sparse solver,  Nvdiffrast~\cite{nvdiffrast} for differentiable rendering, the SDS  implementation of Threestudio~\cite{threestudio,diffusers} with Stable Diffusion~\cite{stablediffusion}.  We use a learning rate of $0.1$ for the laplacian parameters $\theta$, and $0.01$ for $\image$. We apply the sigmoid function $f$ to $\theta$ and $\image$ and  affinely rescale the result to the range $\brac{0.05,0.95}$.
We  optimize for $6000$ iterations which take $30$ minutes on a single A100 GPU. We have not optimized hyper-parameters for speed.

\section{Results}

\begin{figure}[t]
    \centering    \captionsetup{justification=centering}
        \begin{subfigure}[b]{0.49\linewidth}
    \includegraphics[width=\linewidth]{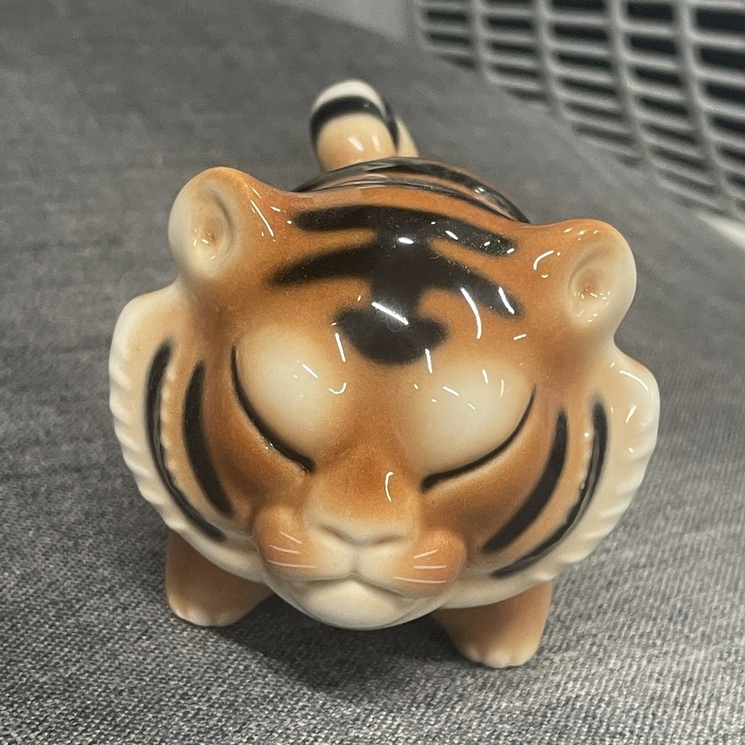}
        \end{subfigure}    
         \begin{subfigure}[b]{0.49\linewidth}
    \includegraphics[width=\linewidth]{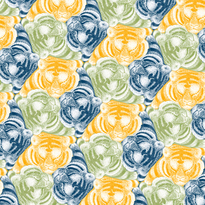}
        \end{subfigure}
        \\
        \begin{subfigure}[b]{0.49\linewidth}
    \includegraphics[width=\linewidth]{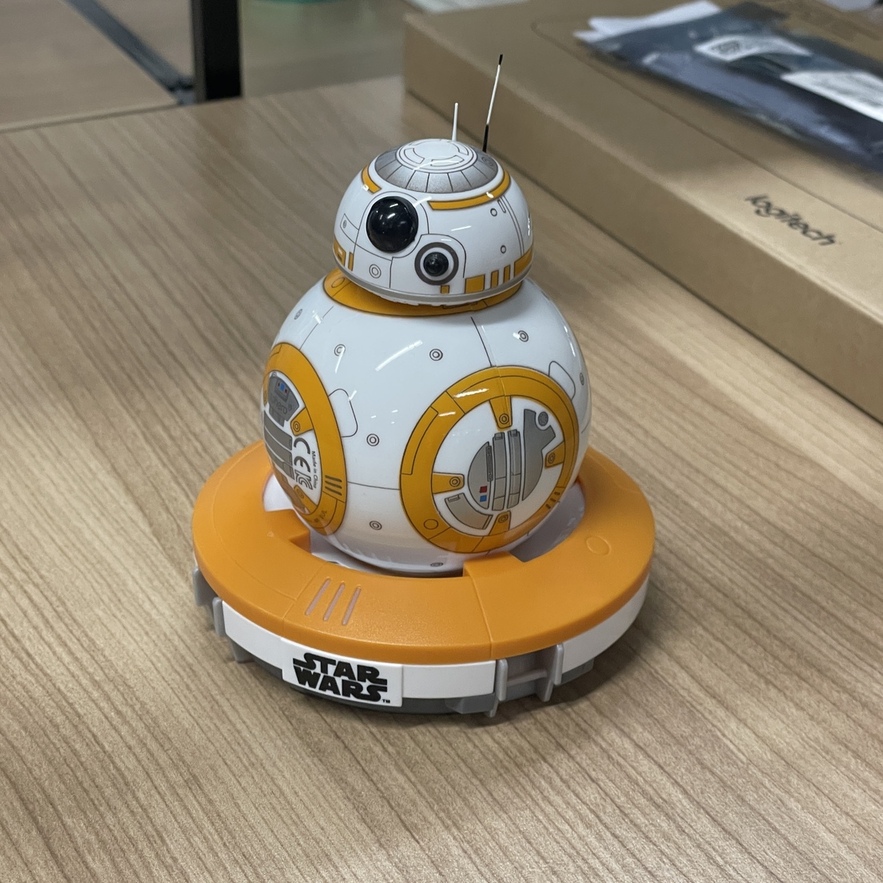}
        \end{subfigure}   
        \begin{subfigure}[b]{0.49\linewidth}
        \includegraphics[width=\linewidth]{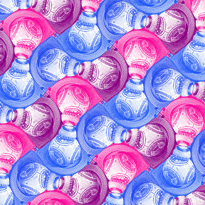}
        \end{subfigure}
        \\
        \begin{subfigure}[b]{0.49\linewidth}
\includegraphics[width=\linewidth]{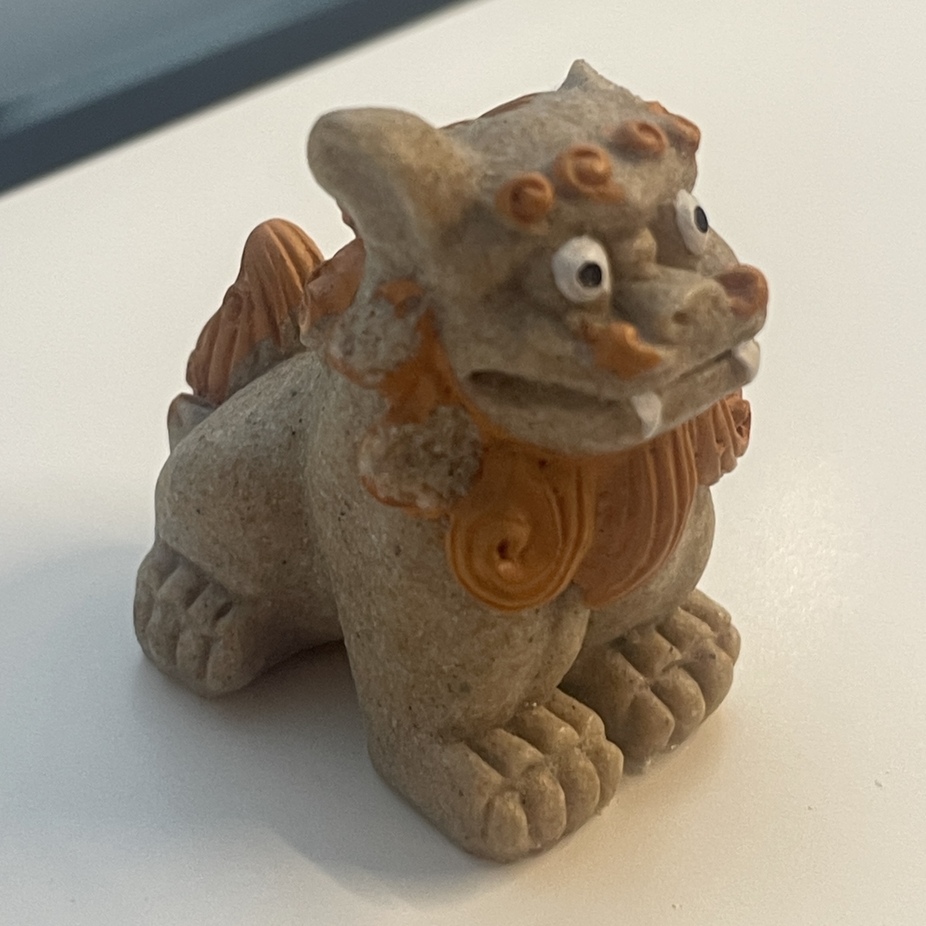}
        \end{subfigure}
        \begin{subfigure}[b]{0.49\linewidth}
        \includegraphics[width=\linewidth]{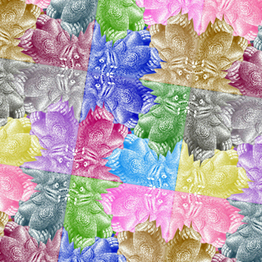}
        \end{subfigure}

\captionsetup{justification=justified}
    \caption{\textbf{Tiles from given photos.} We can generate tiles of objects from given photos, following Dreambooth~\cite{dreambooth}.}
    \label{fig:dreambooth}
\end{figure}
\begin{figure}[t]
    \centering
    \captionsetup{justification=centering}
        \begin{subfigure}[b]{0.32666666666666666\linewidth}
        \includegraphics[width=\linewidth]{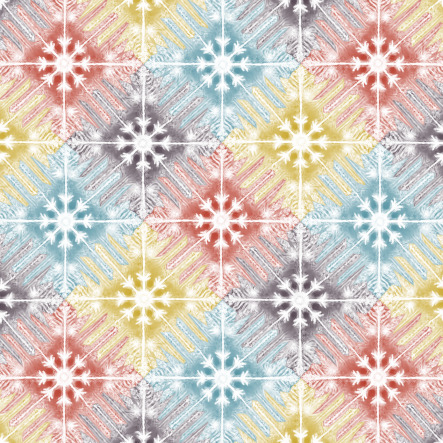}
        \caption{\theprompt{A snowflake} \orbI}
        \end{subfigure}
        \begin{subfigure}[b]{0.32666666666666666\linewidth}
        \includegraphics[width=\linewidth]{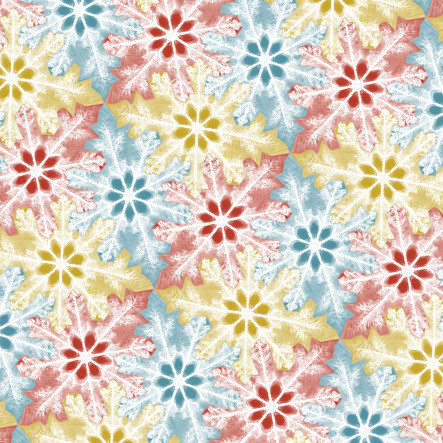}
        \caption{\theprompt{A snowflake} \orbII}
        \end{subfigure}
        \begin{subfigure}[b]{0.32666666666666666\linewidth}
        \includegraphics[width=\linewidth]{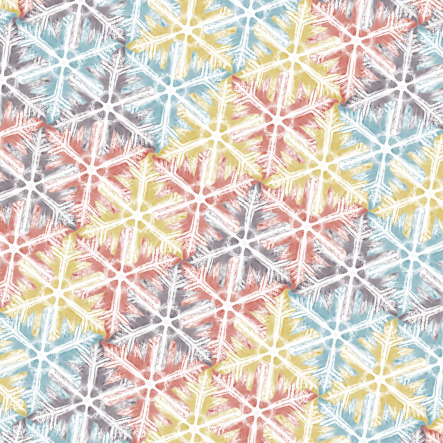}
        \caption{\theprompt{A snowflake} \orbII}
        \end{subfigure}
\\
\captionsetup{justification=justified}
\vspace{-5pt}
    \caption{\textbf{Crystals generated in $3$ different symmetry groups.} Our method's expressiveness enables it to generate additional visual inner-symmetries (reflections) within each tile.}
    \label{fig:snowklakes}
\end{figure}

We now detail our main results, ablations, and comparisons. We evaluated our method over a large collection of prompts describing diverse objects, in several different styles, for all wallpaper groups (see supp. for a detailed discussion). In most cases, the tiles capture solely the desired object, with its appearance being highly-plausible in accordance with the desired prompt. Our method always produces tiles which tile the plane without overlaps.

To visualize the infinite tiling produced by the optimized tiles, we render copies of the mesh transformed via the  isometries of the wallpaper group, coloring each copied tile with a different color. \change{The selection of color is left as an aesthetic choice for the user - we show examples where we color each copy consistently based on its orientation (similarly to Escher's works) or with a random arbitrary color.} \textit{We emphasize the final tiling is only constructed for visualization purposes and is not part of the optimization process - we optimize a single grayscale rendering (see Figure~\ref{fig:overview}).}

 We annotate a short summary of the text prompt under each tiling, as our full prompts are lengthy and contain instructions for the desired visual style, as is usually done in generative imaging. The necessity for this ``prompt engineering''  is mainly indicative to SD's~\cite{stablediffusion} behavior. We emphasize that the prompts do not contain instructions regarding the image's background nor tileability.  We list the full prompts in the supplemental. 

Out of the 17 wallpaper groups, the 13 shown in Figure~\ref{fig:teaser} and Figure~\ref{fig:other_five} are ``interesting'', in the sense of having tiles that are not fixed. The remaining 4 are generated solely by reflections, and have a fixed boundary that forms a convex polygon, which is uninteresting for our task as it implies solely optimizing texture of a fixed convex polygon. We thus conduct experiments solely on the 13. Please refer to the supp. for results on the other 4. 

Please also refer to the supplemental for additional examples of each of the experiments, as well as other results: generating tilings for all 17 wallpaper groups for a single prompt in order to evaluate the success rate of the method,  textual-inversion~\cite{dreambooth} to generate tiles from captured photos, and fabrication of the tiles.
\subsection{Experiments}
\paragraph{More results.}Figure~\ref{fig:gallery} shows a gallery of various results of our method (see two more such galleries in the supplemental). In spite of the stringent geometric constraints on the shape of the tile, our method still manages to produce tiles with highly non-convex, elongated and convoluted shapes (puzzle piece, yoga pose, letter Z). The optimization often results in a ``creative'' way to plausibly represent the desired object, e.g., rendering the dalmatian dog in a wide-angle shot. In cases in which a realistic photograph of the object is requested, the optimization may choose more convex silhouettes (chimpanzee), or truncating less-salient parts (kangaroo). \textit{Please refer to the supplementary for several more galleries}.


\paragraph{Split tiles.} Inspired by Escher's tilings which often have two interleaved objects~\cite{dihedral}, we can split the tile into $k$ sub-tiles, assign them $k$ prompts, render each separately, and sum the SDS loss across all of them (we still treat the mesh as a single tile). Figure~\ref{fig:dual} shows a couple of our results - see the supplementary for more. We note that we observed the losses competing with one another, leading to less-plausible results, and more convex shapes.

\begin{figure}[t] 
    \centering
    \captionsetup{justification=centering}

    \begin{subfigure}[t]{0.49\linewidth}
        \includegraphics[width=\linewidth]{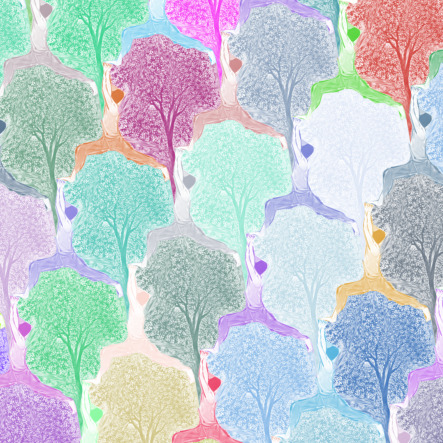}
        \caption{\theprompt{Man doing yoga in a tree pose} + \theprompt{A tree} \torus} 
    \end{subfigure}
    \hfill
    \begin{subfigure}[t]{0.49\linewidth}
        \includegraphics[width=\linewidth]{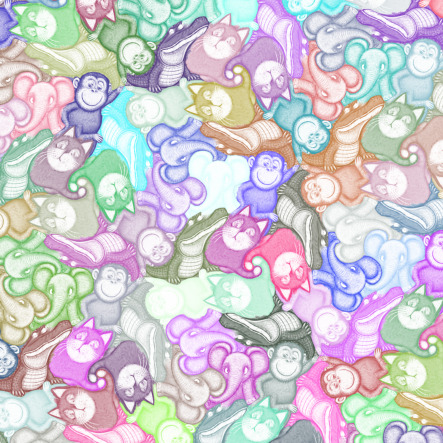}
        \caption{\theprompt{A cat} +
        \theprompt{An alligator} +
        \theprompt{A monkey} +
        \theprompt{An elephant} \orbII } 
    \end{subfigure}
    
    \captionsetup{justification=justified }
    \caption{\textbf{Multi-tile generation.} By dividing the mesh, our approach can optimize  $k$ sub-tiles, for $k$ different prompts (here $k=2,4$).}
    \label{fig:dual}
\end{figure}

\paragraph{Internal symmetries.} Instead of a single tile, We can optimize the appearance of the result of several tiles tiled together. As Figure~\ref{fig:symmetry_figure} shows, this enables generating tilings in which each tile is an ``uber tile'' made up of copies of smaller tiles, thus holding internal symmetries (each crab is mirror-symmetric and each rose is 6-way rotationally-symmetric).   
\begin{figure}
    \centering
    \captionsetup{justification=centering}
        \begin{subfigure}[b]{0.49\linewidth}
        \includegraphics[width=\linewidth]{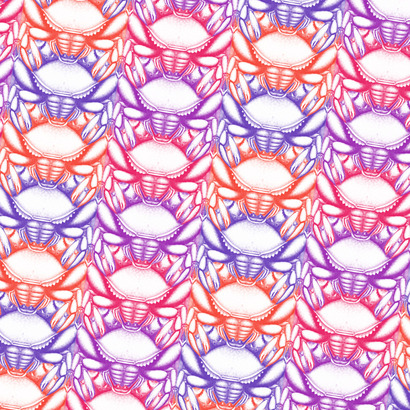}
        \caption{\theprompt{A crab} \cylinder}
        \end{subfigure}
        \begin{subfigure}[b]{0.49\linewidth}
        \includegraphics[width=\linewidth]{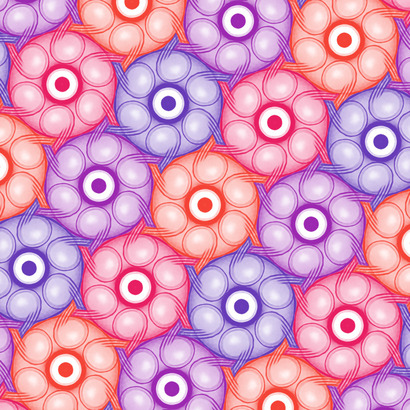}
        \caption{\theprompt{Ninja eye with powers} \orbIII}
        \end{subfigure}
\\
        \begin{subfigure}[b]{0.49\linewidth}
        \includegraphics[width=\linewidth]{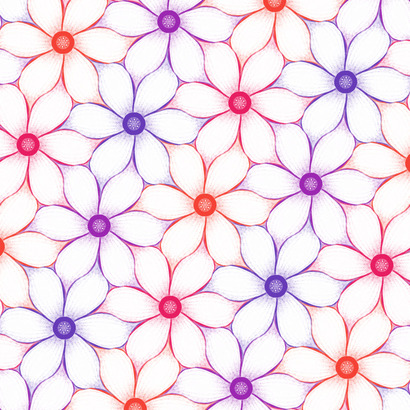}
        \caption{\theprompt{A flower} \orbIII}
        \end{subfigure}
        \begin{subfigure}[b]{0.49\linewidth}
        \includegraphics[width=\linewidth]{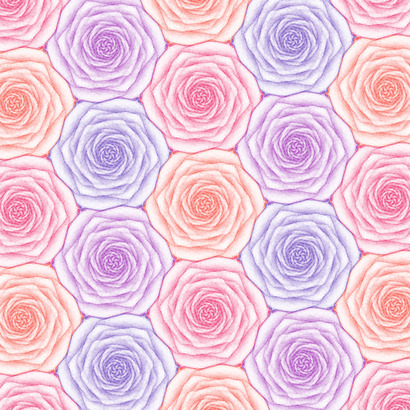}
        \caption{\theprompt{A rose} \orbIII}
        \end{subfigure}
\captionsetup{justification=justified}
    \caption{\textbf{Internally-symmetric tiles.} Considering several tiles as one copy while optimizing for visual loss enables generating tiles which have internal symmetries (reflectional for $\cylinder$, 6-way rotational for $\orbIII$, in these examples).}
    \label{fig:symmetry_figure}
\end{figure}

\paragraph{Fabrication.} Our approach could potentially be used to design fabricable tiles for interior design - we show a small proof of concept with 3D-printed tiles in  Figure~\ref{fig:farbication}. As our method produces perfectly-tileable meshes, the printed pieces perfectly-fit into one another.

\paragraph{Textual Inversion}
In Figure~\ref{fig:dreambooth} we show we can additionally use DreamBooth~\cite{dreambooth} to receive several photos of an object, find its corresponding text embedding and then produce a tile that matches its visuals. This approach is more limited as the method usually either does not match the object's appearance, or does not produce an interesting tile.


\paragraph{Failure cases.} A few examples of the common failure modes we encountered are shown in Figure~\ref{fig:limitation}. We inherit the limitations of the underlying diffusion model~\cite{stablediffusion}, such as its inability to consistently produce five-fingered hands. For the Espresso, a close-to-convex tile is achieved due to the unwarranted addition of the saucer, leading to an uninspiring tiling. Objects such as the ants, which require many thin, elongated features often lead to a significant increase in background pixels in the tile.

To illustrate the success rate of our method, we provide three unfiltered full galleries in the supplementary for the output of our method for all 17 wallpaper groups from a single given prompt.

\begin{figure}[b]
    \centering
    \captionsetup{justification=centering}
        \begin{subfigure}[b]{0.32666666666666666\linewidth}
        \includegraphics[width=\linewidth]{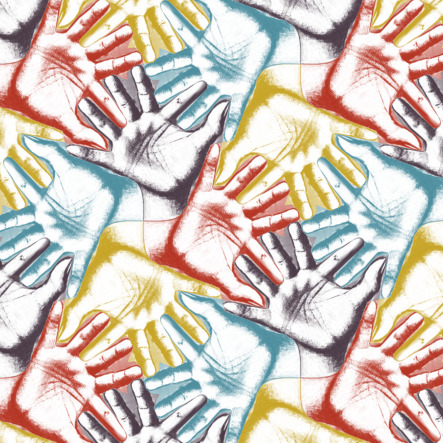}
        \caption{\theprompt{Open hand} \orbI}
        \end{subfigure}
        \begin{subfigure}[b]{0.32666666666666666\linewidth}
        \includegraphics[width=\linewidth]{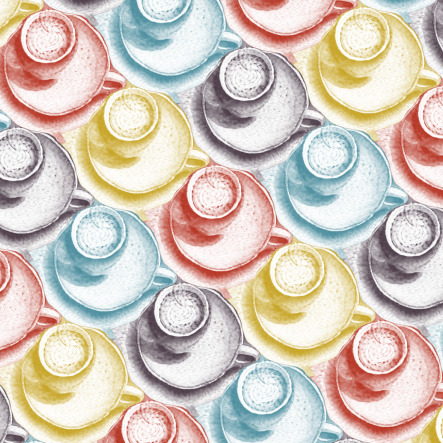}
        \caption{\theprompt{An espresso} \torus}
        \end{subfigure}
        \begin{subfigure}[b]{0.32666666666666666\linewidth}
        \includegraphics[width=\linewidth]{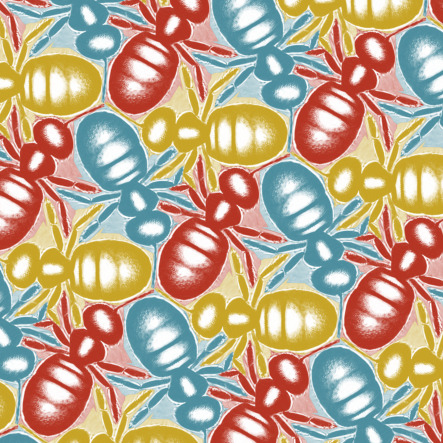}
        \caption{\theprompt{An ant} \orbII}
        \end{subfigure}
\\
\captionsetup{justification=justified}
    \caption{\textbf{Limitations.} We display some failure modes we encountered. Left: StableDiffusion struggles at generating five-fingered results. Middle: tiles with uninteresting shapes lead to underwhelming tilings. Right: thin features lead to unfilled areas and missing elements (left antenna). }
    \label{fig:limitation}
    \vspace{-10pt}
\end{figure}

\subsection{Ablations}
\begin{figure}[t]
    \centering
    \captionsetup{justification=centering}
\rotatebox{90}{\ \ \ \  \ \ \ \ \ \ \ \ \ Ours\textcolor{white}{\textbf{g}}}
          \vspace{0pt}\begin{subfigure}[t]{0.31\linewidth}
        \includegraphics[width=\linewidth]{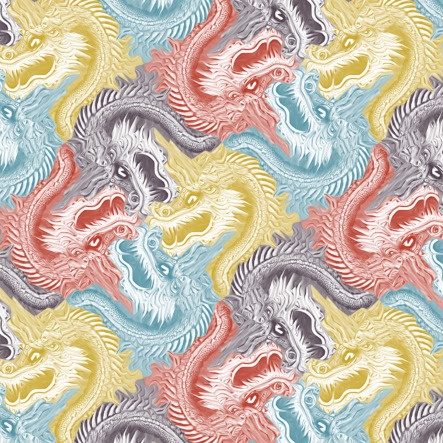}
        \end{subfigure}
          \vspace{0pt}\begin{subfigure}[t]{0.31\linewidth}
        \includegraphics[width=\linewidth]{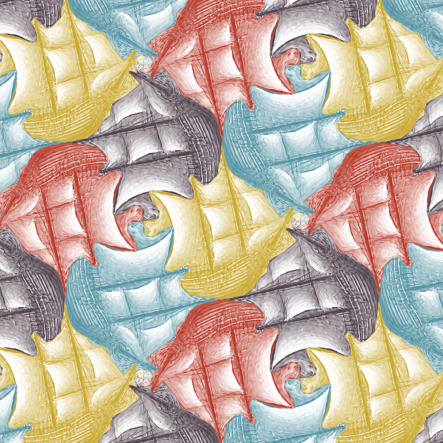}
        \end{subfigure}
          \vspace{0pt}\begin{subfigure}[t]{0.31\linewidth}
        \includegraphics[width=\linewidth]{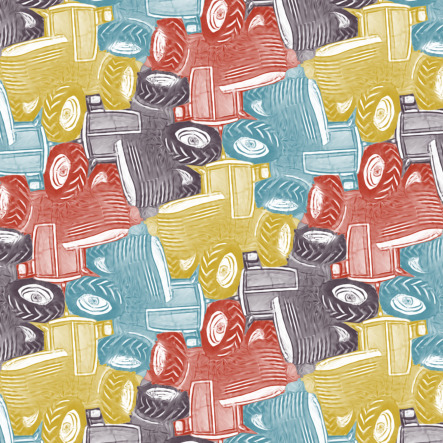}
        \end{subfigure}
\\
\rotatebox{90}{\ \ \ Fixed Background}
          \vspace{0pt}\begin{subfigure}[t]{0.31\linewidth}
        \includegraphics[width=\linewidth]{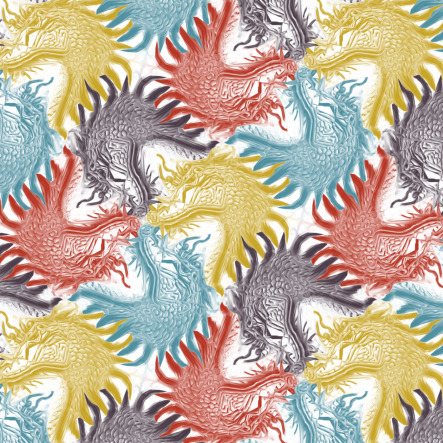}
        \end{subfigure}
          \vspace{0pt}\begin{subfigure}[t]{0.31\linewidth}
        \includegraphics[width=\linewidth]{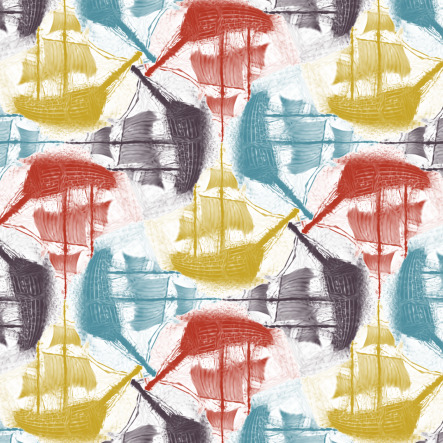}
        \end{subfigure}
          \vspace{0pt}\begin{subfigure}[t]{0.31\linewidth}
        \includegraphics[width=\linewidth]{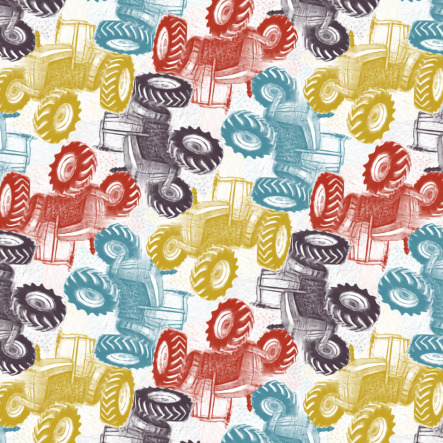}
        \end{subfigure}
\\
\rotatebox{90}{\ \ \ \ \ \ \ \ \ \ \ \ \ \  RGB\textcolor{white}{\textbf{g}}}
          \vspace{0pt}\begin{subfigure}[t]{0.31\linewidth}
        \includegraphics[width=\linewidth]{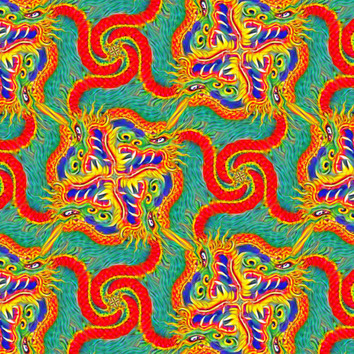}
        \caption{\theprompt{Dragon} \orbI}
        \end{subfigure}
          \vspace{0pt}\begin{subfigure}[t]{0.31\linewidth}
        \includegraphics[width=\linewidth]{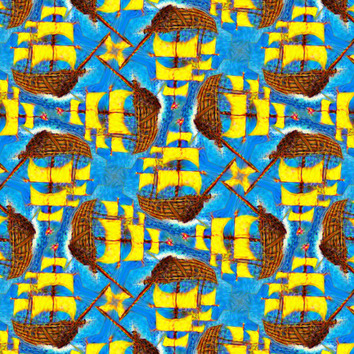}
        \caption{\theprompt{Pirate ship} \orbI}
        \end{subfigure}
          \vspace{0pt}\begin{subfigure}[t]{0.31\linewidth}
        \includegraphics[width=\linewidth]{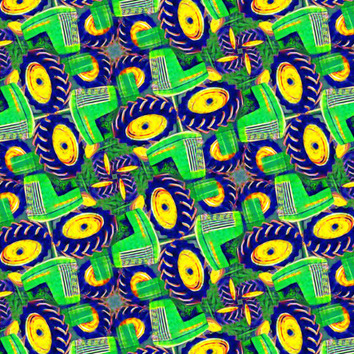}
        \caption{\theprompt{A tractor} \orbI}
        \end{subfigure}
\\
\captionsetup{justification=justified}
\vspace{-10pt}
    \caption{\textbf{Ablation.} Comparison between \textit{our} method (top), maintaining a \textit{fixed background} color during optimization (middle), and using \textit{RGB} textures for the meshes (bottom). Both alternatives result in areas unoccupied by the foreground object. \textit{RGB} additionally leads to loss of plausibility. }
    \label{fig:ablation}
\end{figure}

We evaluated the effect of our design choices. Figure~\ref{fig:ablation} shows representative examples, comparing the results for the same prompt and symmetry group between our full method (top row) and two alternative choices. \textit{Middle row}: not randomizing the background color of the rendering during optimization (see Section~\ref{ssec:tile_optim}), leading to the optimization not modifying the tile's geometry, instead coloring-out the areas it aims to remove using the background color. 
\textit{Bottom row}: results generated with full RGB textures instead of black and white.  RGB often leads to background pixels (pirate ship), possibly due to the increased expressiveness of the texture space. The saturation of the colors seems to also lead to loss of plausibility (tractor). Refer to the supp. for additional examples.

\subsection{Comparisons}
\begin{figure}
    \centering
    \captionsetup{justification=centering}
        \captionsetup{justification=centering}
\rotatebox{90}{\ \ \ \  \ \ \ \ \ \ \ \ \ Ours\textcolor{white}{\textbf{g}}}
        \begin{subfigure}[t]{0.31\linewidth}
        \includegraphics[width=\linewidth]{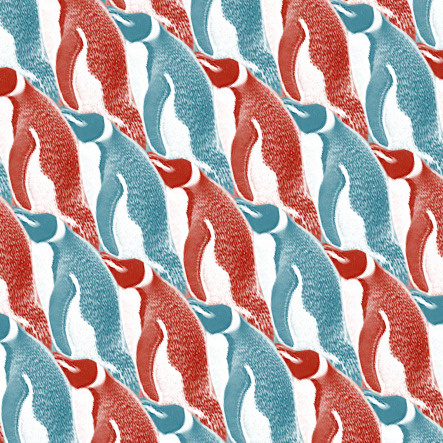}
        \end{subfigure}
        \begin{subfigure}[t]{0.31\linewidth}
        \includegraphics[width=\linewidth]{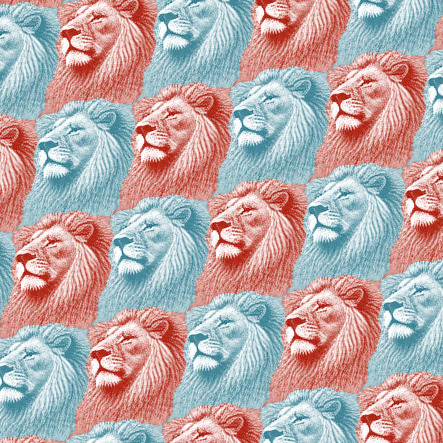}
        \end{subfigure}
        \begin{subfigure}[t]{0.31\linewidth}
        \includegraphics[width=\linewidth]{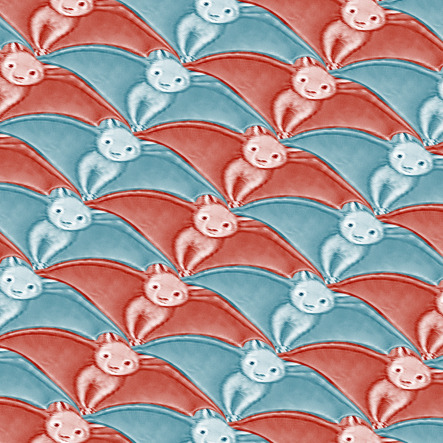}
        \end{subfigure}
\\


\captionsetup{justification=centering}
\rotatebox{90}{\ \ \ \  \  \ OTE + SDS \textcolor{white}{\textbf{g}}}
        \begin{subfigure}[t]{0.31\linewidth}
        \includegraphics[width=\linewidth]{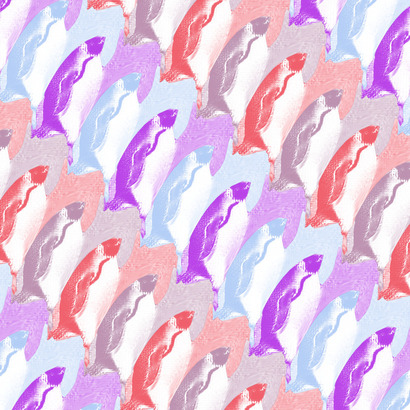}
        \end{subfigure}
        \begin{subfigure}[t]{0.31\linewidth}
        \includegraphics[width=\linewidth]{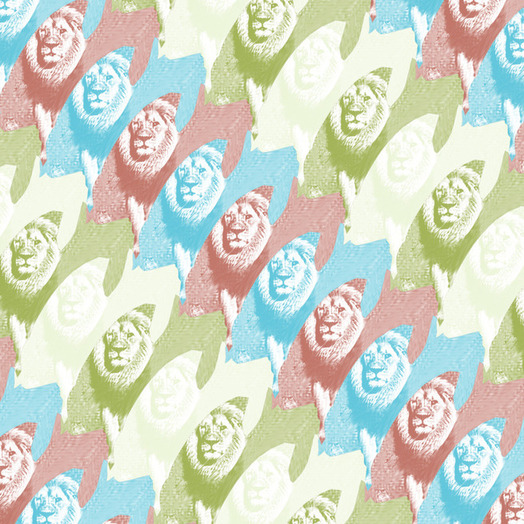}
        \end{subfigure}
        \begin{subfigure}[t]{0.31\linewidth}
        \includegraphics[width=\linewidth]{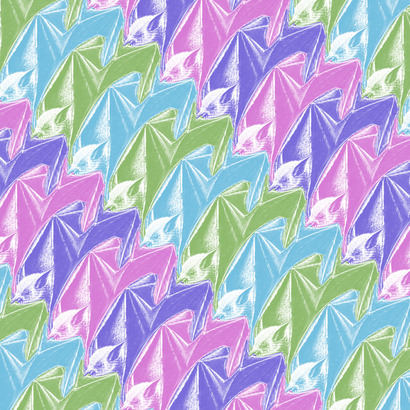}
        \end{subfigure}



        \captionsetup{justification=centering}
\rotatebox{90}{\ \ \ \  \  Tiled Image SDS \textcolor{white}{\textbf{g}}}
        \begin{subfigure}[t]{0.31\linewidth}
        \includegraphics[width=\linewidth]{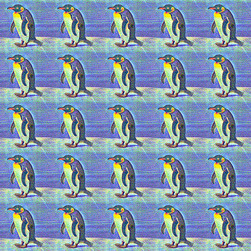}
        \caption{\theprompt{A penguin} \torus}
        \end{subfigure}
        \begin{subfigure}[t]{0.31\linewidth}
        \includegraphics[width=\linewidth]{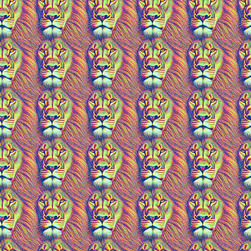}
        \caption{\theprompt{A lion} \torus}
        \end{subfigure}
        \begin{subfigure}[t]{0.31\linewidth}
        \includegraphics[width=\linewidth]{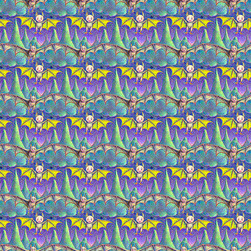}
        \caption{\theprompt{A bat} \torus}
        \end{subfigure}
\\
        \captionsetup{justification=centering}
\rotatebox{90}{\ Prompt + Diffusion\textcolor{white}{\textbf{g}}}
        \subcaptionbox{\tiny{``An Escher-like image of a penguin.''}}[0.31\linewidth]{\includegraphics[width=\linewidth]{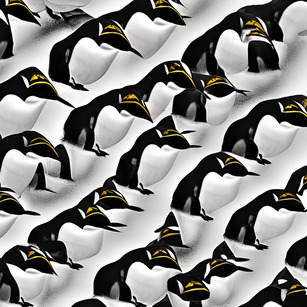}}
        \subcaptionbox{\tiny{``An image composed of the repetition of a lion, best quality, extremely detailed in the style of Escher. There should be no gaps nor overlaps.''}}[0.31\linewidth]{\includegraphics[width=\linewidth]{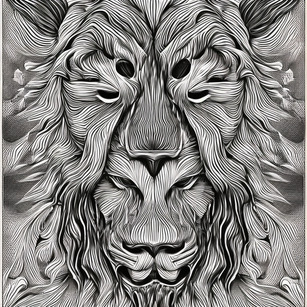}}\hfill
        \subcaptionbox{\tiny{``An Escher-like tiling of a bat.''}}[0.31\linewidth]{\includegraphics[width=\linewidth]{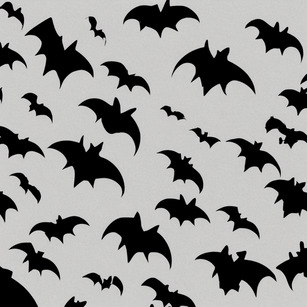}}
\captionsetup{justification=justified}
    \vspace{-10pt}\caption{\textbf{Baseline comparisons.} We compare \textit{our} method to three alternatives. 
    \textit{OTE + SDS}: using Orbifold Tutte Embeddings~\cite{aigerman2015orbifold} to get a (arbitrary) tileable mesh, and then optimizing its texture using SDS~\cite{dreamfusion}.\textit{Tiled Image SDS}: duplicating a pixel image in a 5x5 grid and optimizing the grid w.r.t SDS so as to account for the tiling during optimization; \textit{Prompt + Diffusion}: guiding the image diffusion model~\cite{stablediffusion}  to produce tilings through given prompts (the results are far from perfectly-repeating). All baselines produce results with significant areas unoccupied by the foreground object.}
    \label{fig:comparison}
\end{figure}


Figure~\ref{fig:comparison} shows comparisons between our method (\textit{top row}) and other methods. As there are no existing techniques that directly aim to solve the same task of perfectly-repeating tilings of a foreground object, we compare to the available alternatives. One such alternative is to use Orbifold Tutte Embeddings~\cite{aigerman2015orbifold} to produce a valid tile (without direct control on the shape of the resulting tile), and then texture the static mesh by optimizing its texture using SDS~\cite{dreamfusion}. As discussed before, the lack of ability to to modify the geometry leads to significant parts of the image occupied by background colors. Another alternative (\textit{third row}) is to operate on image tiles without a mesh: optimize the pixels of a square image, duplicate it to create a tiling (we create a $5\times5$ grid), and then feed random crops of the tiling to the same loss we use (SDS~\cite{dreamfusion}) so that the optimization accounts for the tiling.  Ignoring the saturated colors which are an artifact of SDS, this method succeeds in creating images with seamless borders, but produces tiles with a significant amount of background pixels, and cannot account for all wallpaper groups.  \noam{reminder add TUtte comparison}.

 The third alternative ({bottom row}) uses an image diffusion model (StableDiffusion~\cite{stablediffusion}),
 and attempts to manually tweak the text prompt in order to produce a tiling. We chose the results with the prompts that yielded the best visuals out of many attempts, see the supplementary for all prompts that were tried. This approach evidently does not create tileable shapes, and as before, many parts of the image do not contain the foreground object.

These results verify the necessity of  optimization of geometry, as these alternatives only support a fixed, predetermined boundary between the generated objects. This boundary cannot be directly optimized without an intermediate deformation such as ours, especially when considering rotational symmetries with other angles than $90^\circ$. 

\section{Conclusion}
We provide a text-guided method to automatically generate intricate and plausible textured objects which perfectly tile the plane. This implies possible uses in architecture, interior design, and fabric design which we are eager to explore. 

We do believe that our key technical contribution - the differentiable  representation of tileable meshes - may lead to important followups. For one, as shown in~\cite{aigerman2015orbifold}, embedding two sphere-topology meshes into an appropriate tile structure enables computing a seamless bijection between the two. Through our parameterization, this 1-to-1 map can be optimized, or otherwise predicted by a neural network in a data-driven setting. Following this line of thought,~\cite{aigerman2016hyperbolic} show that~\cite{aigerman2015orbifold} can be extended to the hyperbolic plane, where the set of possible tilings is infinite, enabling a larger class of embeddings and maps. Hence, control of the embedding via the weights can be highly beneficial. Lastly, we note that while an extension to 3D volumetric tiles is extremely appealing, it is unfortunately well known that Tutte's embedding~\cite{tutte1963draw} does not yield overlap-free meshes when applied to volumetric meshes~\cite{campen2016bijective} hence this extension is unlikely.

We note our method is limited to a specific class of tilings, prohibiting, e.g., aperiodic tiles~\cite{penrose1974role,smith2023aperiodic}. We further note that the tiling constraints usually prevent successful generation for prompts containing multiple objects, or very specific descriptions. Lastly, our method relies on Score Distillation Sampling~\cite{dreamfusion} and inherits its limitations: slow runtime, over-saturated colors, and lack of granular control over the resulting imagery. Since we treat SDS merely as a blackbox for providing our framework with a loss, we could seamlessly swap it out for any future technique which will exhibit better performance. 

\bibliographystyle{ACM-Reference-Format}
\bibliography{refs}
\clearpage


\begin{figure*}[h]
    \centering
    \captionsetup{justification=centering}
        \begin{subfigure}[t]{0.196\linewidth}
        \includegraphics[width=\linewidth]{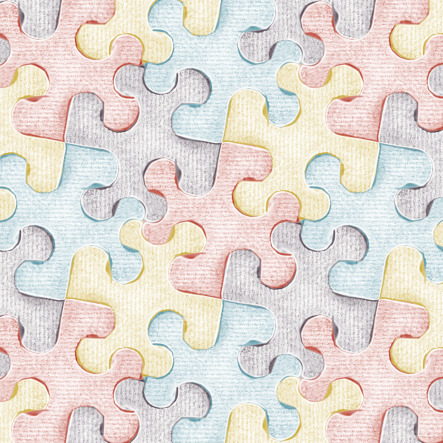}
        \caption{\theprompt{A puzzle piece} \orbI}
        \end{subfigure}
        \begin{subfigure}[t]{0.196\linewidth}
        \includegraphics[width=\linewidth]{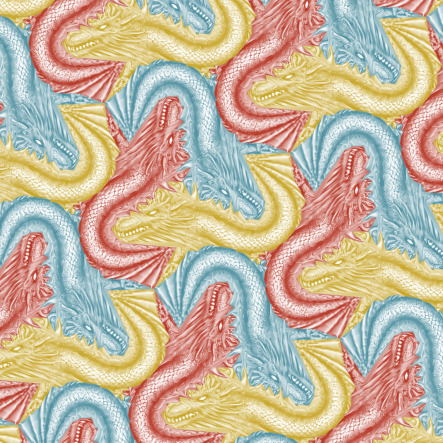}
        \caption{\theprompt{A dragon} \orbIII}
        \end{subfigure}
        \begin{subfigure}[t]{0.196\linewidth}
        \includegraphics[width=\linewidth]{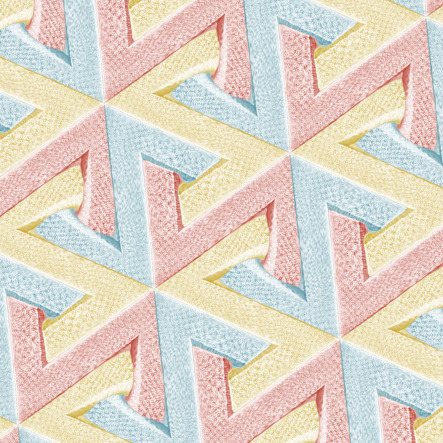}
        \caption{\theprompt{The letter Z} \orbII}
        \end{subfigure}
        \begin{subfigure}[t]{0.196\linewidth}
        \includegraphics[width=\linewidth]{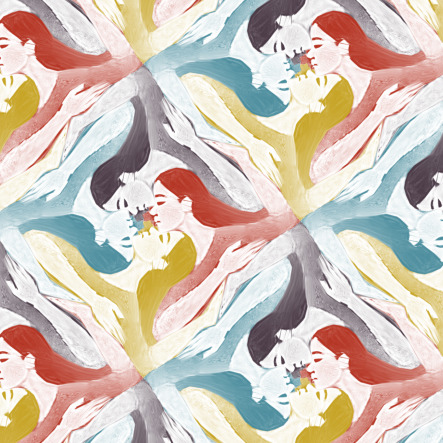}
        \caption{\theprompt{Yoga pose} \orbI}
        \end{subfigure}
        \begin{subfigure}[t]{0.196\linewidth}
        \includegraphics[width=\linewidth]{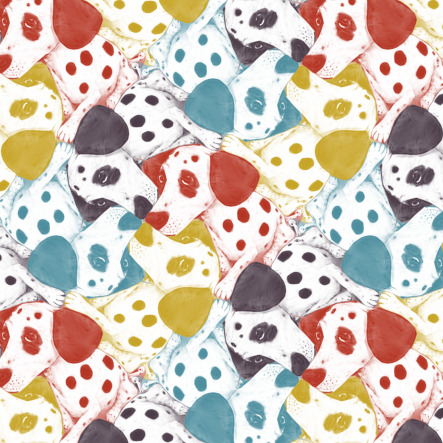}
        \caption{\theprompt{A dalmatian dog} \orbI}
        \end{subfigure}
\\
        \begin{subfigure}[t]{0.196\linewidth}
        \includegraphics[width=\linewidth]{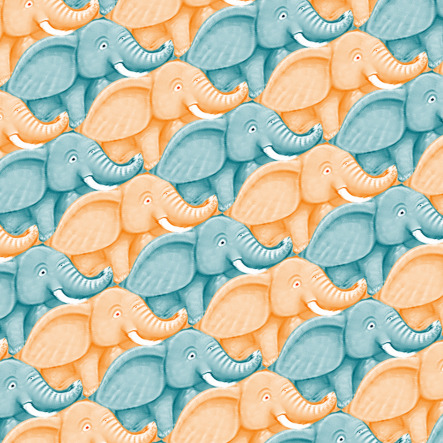}
        \caption{\theprompt{An elephant} \torus}
        \end{subfigure}
        \begin{subfigure}[t]{0.196\linewidth}
        \includegraphics[width=\linewidth]{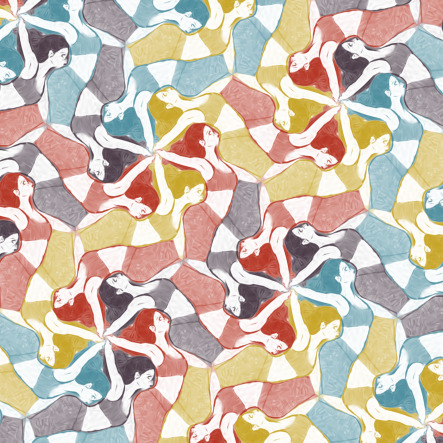}
        \caption{\theprompt{Yoga pose} \orbIII}
        \end{subfigure}
        \begin{subfigure}[t]{0.196\linewidth}
        \includegraphics[width=\linewidth]{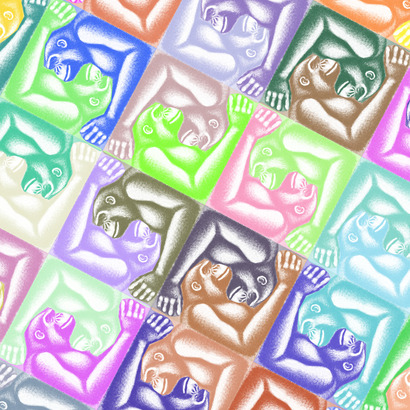}
        \caption{\theprompt{A gorilla} \orbRhyb}
        \end{subfigure}
        \begin{subfigure}[t]{0.196\linewidth}
        \includegraphics[width=\linewidth]{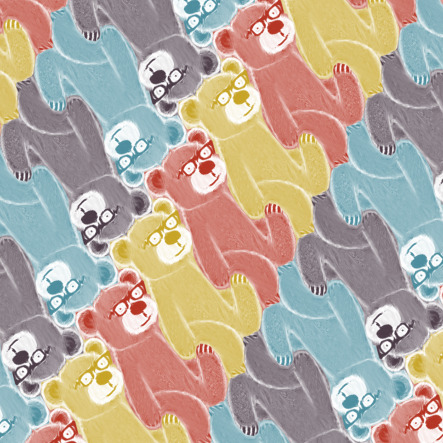}
        \caption{\theprompt{A nerdy bear} \orbIV}
        \end{subfigure}
        \begin{subfigure}[t]{0.196\linewidth}
        \includegraphics[width=\linewidth]{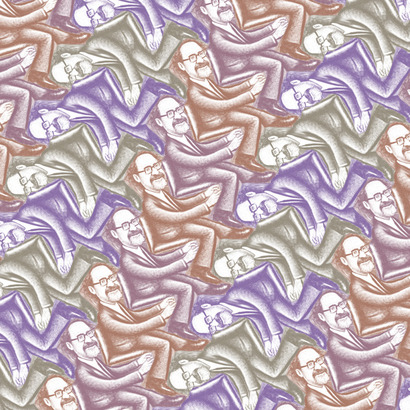}
        \caption{\theprompt{A math professor} \klein}
        \end{subfigure}
\\
        \begin{subfigure}[t]{0.196\linewidth}
        \includegraphics[width=\linewidth]{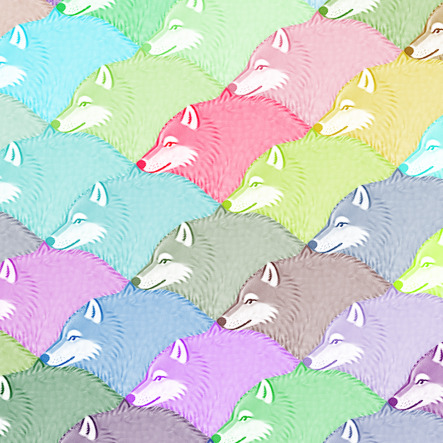}
        \caption{\theprompt{A wolf} \torus}
        \end{subfigure}
        \begin{subfigure}[t]{0.196\linewidth}
        \includegraphics[width=\linewidth]{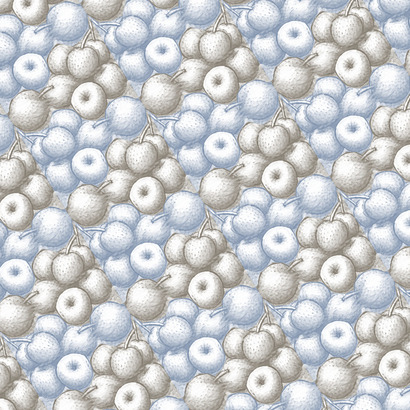}
        \caption{\theprompt{Fruits} \mob}
        \end{subfigure}
        \begin{subfigure}[t]{0.196\linewidth}
        \includegraphics[width=\linewidth]{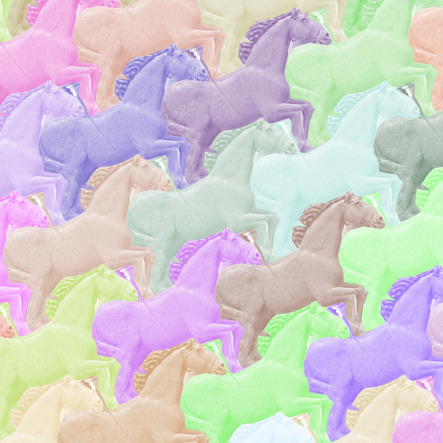}
        \caption{\theprompt{A horse} \torus}
        \end{subfigure}
        \begin{subfigure}[t]{0.196\linewidth}
        \includegraphics[width=\linewidth]{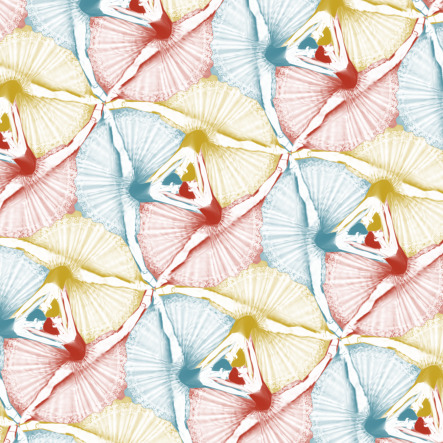}
        \caption{\theprompt{A ballet dancer} \orbII}
        \end{subfigure}
        \begin{subfigure}[t]{0.196\linewidth}
        \includegraphics[width=\linewidth]{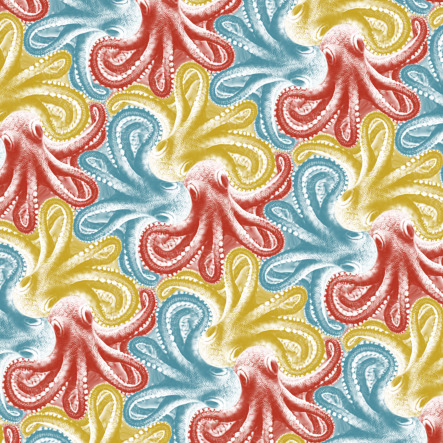}
        \caption{\theprompt{An octopus} \orbII}
        \end{subfigure}
\\
        \begin{subfigure}[t]{0.196\linewidth}
        \includegraphics[width=\linewidth]{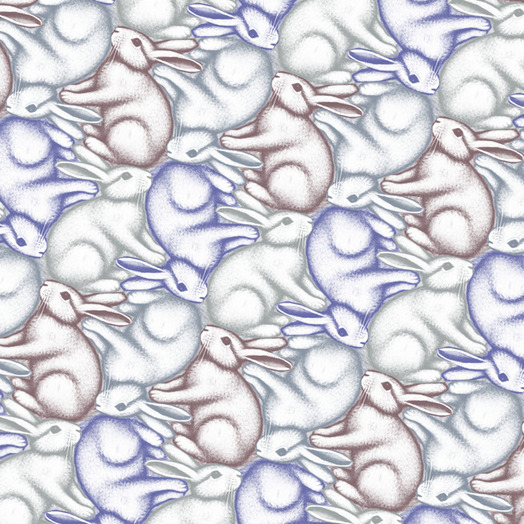}
        \caption{\theprompt{A rabbit} \projective}
        \end{subfigure}
        \begin{subfigure}[t]{0.196\linewidth}
        \includegraphics[width=\linewidth]{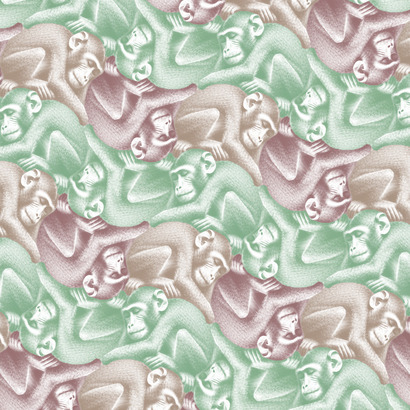}
        \caption{\theprompt{Sleeping monkey} \projective}
        \end{subfigure}
        \begin{subfigure}[t]{0.196\linewidth}
        \includegraphics[width=\linewidth]{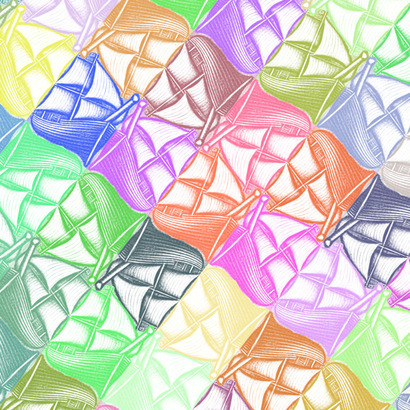}
        \caption{\theprompt{A  ship} \mob}
        \end{subfigure}
        \begin{subfigure}[t]{0.196\linewidth}
        \includegraphics[width=\linewidth]{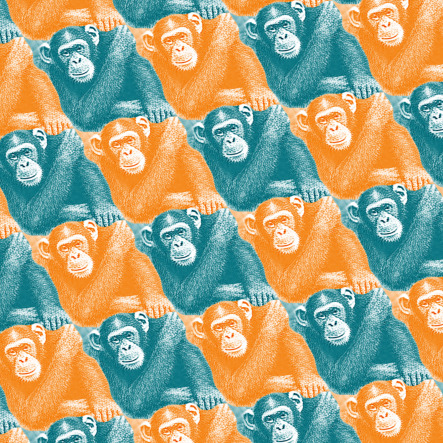}
        \caption{\theprompt{A chimpanzee} \torus}
        \end{subfigure}
        \begin{subfigure}[t]{0.196\linewidth}
        \includegraphics[width=\linewidth]{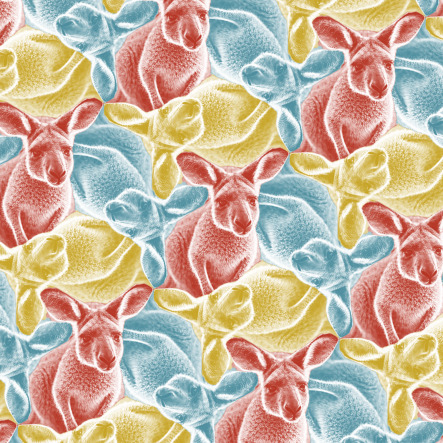}
        \caption{\theprompt{A kangaroo} \orbII}
        \end{subfigure}
\\
        \begin{subfigure}[t]{0.196\linewidth}
        \includegraphics[width=\linewidth]{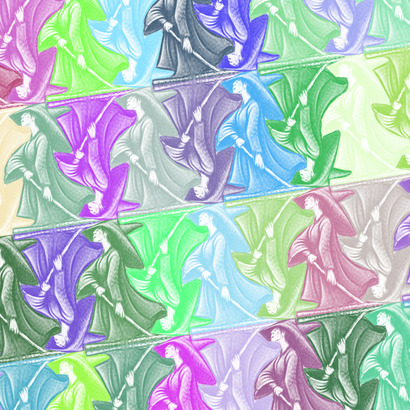}
        \caption{\theprompt{A witch} \mob}
        \end{subfigure}
        \begin{subfigure}[t]{0.196\linewidth}
        \includegraphics[width=\linewidth]{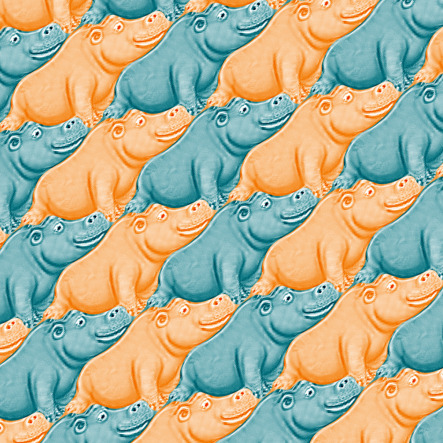}
        \caption{\theprompt{A hippopotamus} \torus}
        \end{subfigure}
        \begin{subfigure}[t]{0.196\linewidth}
        \includegraphics[width=\linewidth]{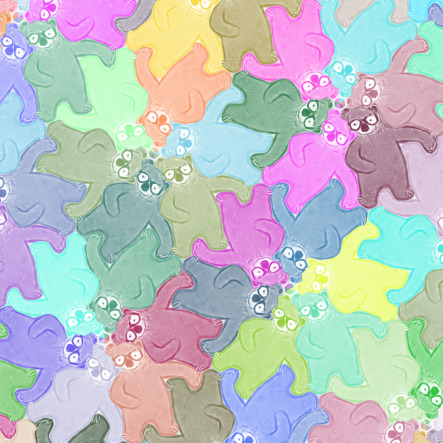}
        \caption{\theprompt{A nerdy bear} \orbIII}
        \end{subfigure}
        \begin{subfigure}[t]{0.196\linewidth}
        \includegraphics[width=\linewidth]{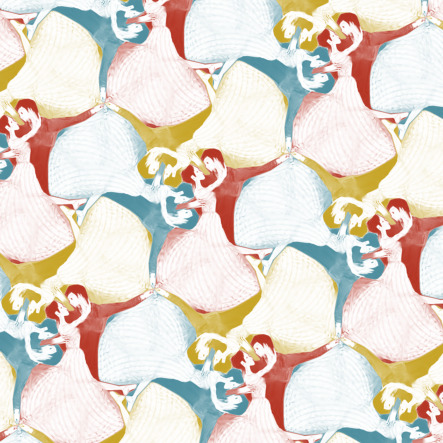}
        \caption{\theprompt{A dancing couple} \orbII}
        \end{subfigure}
        \begin{subfigure}[t]{0.196\linewidth}
        \includegraphics[width=\linewidth]{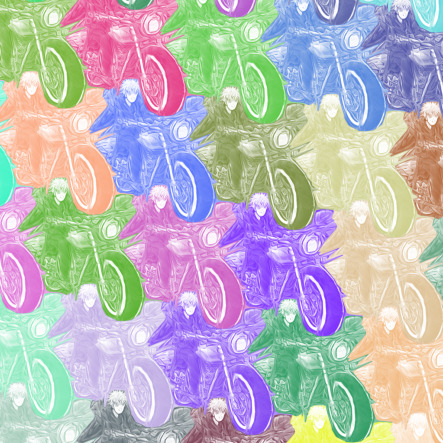}
        \caption{\theprompt{An anime biker} \torus}
        \end{subfigure}
\\
\captionsetup{justification=justified}
\vspace{-10pt}
    \caption{\textbf{Tiling Menagerie.} Examples of tilings produced by our method for different prompts and symmetries. Our method produces appealing, plausible results which contain solely the desired object, and cover the plane without overlaps.}
    \label{fig:gallery}
\end{figure*}

\clearpage
\appendix

\section{Full prompts from the paper}

\section*{Main:Teaser}
\begin{enumerate}
\item A beautiful illustration of a flamenco dancer
\item A beautiful botanical illustration of poppies, a masterpiece
\item A beautiful illustration of a ballet dancer, a masterpiece
\item A beautiful botanical illustration of a pinecone
\item A dolphin, best quality, extremely detailed
\item A heron, a masterpiece
\item A beautiful illustration of an alligator, a masterpiece
\item A children's book watercolor illustration of a fat unicorn, a masterpiece
\end{enumerate}

\section*{Main:Other interesting wallpaper groups}
\begin{enumerate}
\item A Godzilla, a masterpiece
\item A charcoal illustration of a  witch riding a broom, a masterpiece
\item A super long Dachshund, a masterpiece
\item A pirate, a masterpiece
\item A charcoal illustration of a sleeping monkey, a masterpiece
\end{enumerate}

\section*{Main:Multi-tile generation}
\begin{enumerate}
\item A beautiful illustration of a man doing yoga in a tree pose - a masterpiece, a beautiful illustration of a tree, a masterpiece
\item A beautiful illustration of a cat, a masterpiece - a beautiful illustration of an alligator, a masterpiece - a beautiful illustration of a monkey, a masterpiece - a beautiful illustration of an elephant, a masterpiece
\end{enumerate}

\section*{Main:Baseline}
\begin{enumerate}
\item A beautiful illustration of a penguin, a masterpiece.
\item A beautiful illustration of a lion, a masterpiece.
\item A beautiful illustration of a bat, a masterpiece.
\end{enumerate}

\section*{Main:Ablation}
\begin{enumerate}
\item A beautiful illustration of a Chinese dragon head.
\item A beautiful illustration of a pirate ship.
\item A beautiful illustration of a tractor.
\end{enumerate}

\section*{Main:RGB tilings}
\begin{enumerate}
\item A beautiful illustration of a rainbow, a masterpiece.
\item A beautiful illustration of an orange tree with fruits, a masterpiece.
\item A beautiful illustration of a giraffe, a masterpiece.
\end{enumerate}

\section*{Main:Fabrication}
\begin{enumerate}
\item A beautiful illustration of a flamenco dancer, a masterpiece.
\end{enumerate}

\section*{Main:Limitations}
\begin{enumerate}
\item A beautiful illustration of an open hand
\item A beautiful illustration of an espresso cup, a masterpiece.
\item A black ink illustration of an ant, 4k
\end{enumerate}

\section*{Main:Tiling Menagerie}
\begin{enumerate}
\item An illustration of a Puzzle piece
\item A beautiful illustration of a dragon
\item The letter Z
\item A beautiful illustration of a woman in a yoga pose
\item A beautiful illustration of a Dalmatian dog, in the style of a cartoon
\item A children's book illustration of an elephant
\item A beautiful illustration of a woman in a yoga pose
\item A gorilla, a masterpiece
\item A beautiful illustration of a nerdy bear
\item A math professor, a masterpiece
\item A children's book illustration of a wolf
\item A bunch of fruits, a masterpiece
\item A children's book illustration of a horse
\item A beautiful illustration of a ballet dancer
\item A photo of an octopus, 4K resolution
\item A beautiful watercolor painting of a rabbit, a masterpiece
\item A beautiful watercolor painting of a sleeping monkey, a masterpiece
\item A beautiful illustration of a sailing boat
\item Chimpanzee
\item A photograph of a kangaroo, 4K
\item A witch, a masterpice
\item Hippopotamus
\item A children's book illustration of a nerdy bear
\item A beautiful illustration of dancing couple
\item An anime biker
\end{enumerate}

\section*{Supmat: More Baseline}
\begin{enumerate}
\item A beautiful illustration of a nerdy bear, a masterpiece.
\item A beautiful illustration of Saturn with shooting stars, a masterpiece.
\end{enumerate}

\section*{Supmat:RGB tilings}
\begin{enumerate}
\item A beautiful illustration of a cherry flower, a masterpiece.
\item A beautiful illustration of a wrestler, a masterpiece.
\end{enumerate}

\section*{Supmat:Crystals generated}
\begin{enumerate}
\item A beautiful illustration of a snowflake
\end{enumerate}

\section*{Supmat:Multi-tile generation}
\begin{enumerate}
\item A beautiful illustration of a scuba diver, a masterpiece - a beautiful illustration of a tropical fish, a masterpiece
\item A beautiful illustration of a man doing yoga in a tree pose - a masterpiece, a beautiful illustration of a rose, a masterpiece - a masterpiece, a beautiful illustration of a tulip, a masterpiece - a masterpiece, a beautiful illustration of a sunflower, a masterpiece
\end{enumerate}

\section*{Supmat:All wallpapers}
\begin{enumerate}
\item A professional cartoon of a monkey, a masterpiece
\item A professional cartoon of nerdy bear, a masterpiece
\item A professional cartoon of an elephant, a masterpiece
\end{enumerate}

\section*{Supmat:Tiling Menagerie 2}
\begin{enumerate}
\item A children's book illustration of a sardine
\item A children's book illustration of a sardine
\item A children's book illustration of a sardine
\item A beautiful botanical illustration of cherries, a masterpiece
\item A beautiful botanical illustration of a tropical flower, a masterpiece
\item A beautiful watercolor painting of a coyfish, a masterpiece
\item A beautiful illustration of a wave, Hokusai, a masterpiece
\item A children's book illustration of a mother cat nursing her kittens
\item A beautiful illustration of a sailing boat, a masterpiece
\item A beautiful illustration of corncobs, a masterpiece
\item A centaur in the style of ancient Greece
\item A beautiful illustration of a dragon, a masterpiece
\item An anchor in the style of a sailor tattoo
\item A beautiful watercolor illustration of a Scuba Diver with flippers and oxygen tanks, a masterpiece
\item A penguin
\item A beautiful illustration of a woman in a yoga pose, a masterpiece
\item A beautiful illustration of a woman in a yoga pose, a masterpiece
\item An anime biker
\item A beautiful illustration of a guitar, a masterpiece
\item A children's book illustration of a bat
\item A comic book illustration of Saturn with shooting stars
\item A cow, best quality, extremely detailed
\item A children's book illustration of a nerdy bear
\item A children's book illustration of a nerdy bear
\item A children's book illustration of a nerdy bear
\end{enumerate}

\section*{Supmat:Tiling Menagerie 3}
\begin{enumerate}
    \item A photo of an octopus, 4K resolution, a masterpiece
    \item A beautiful illustration of a ballet dancer, a masterpiece
    \item A photograph of a cherry
    \item Crocodile
    \item A beautiful illustration of a tree, a masterpiece
    \item A beautiful illustration of a bicycle, a Penny-farthing, a masterpiece
    \item An illustration of an oyster with a pearl inside, a masterpiece
    \item A beautiful black and white ink hatchings illustration of a staircase, a masterpiece
    \item A centaur in the style of ancient Greece
    \item A cartoon musical note
    \item Peacock feather
    \item A beautiful illustration of a dancing couple, a masterpiece
    \item A beautiful botanical illustration of a tropical flower, a masterpiece
    \item A photograph of a kangaroo, 4K, a masterpiece
    \item A beautiful botanical illustration of a pineapple, 4k, a masterpiece
    \item A beautiful illustration of a sunflower, a masterpiece
    \item A beautiful fluffy cartoon cloud
    \item An illustration of a hermit crab for a
    \item An illustration of a sorcerer, intricate, elegant, highly detailed, lifelike, photorealistic digital painting, ArtStation
    \item Pyramids of Giza 
    \item A lion, best quality, extremely detailed
    \item A children's book illustration of a buffalo
    \item A 3D rendering of a low-poly gecko with legs spread, 4K, a masterpiece
    \item A children's book illustration of a mother cat nursing her kittens
    \item An illustration of a realistic traditional Boomerang, clean lines, a masterpiece
\end{enumerate}

\section*{Supmat:Tiling Menagerie 4}
\begin{enumerate}
        \item A children's book watercolor illustration of a fat unicorn, a masterpiece
        \item A heron, a masterpiece
        \item A professional photo of a submarine, a masterpiece
        \item A charcoal illustration of a snowy cabin, a masterpiece
        \item A professional photo of a funky gorilla, a masterpiece
        \item A children's book watercolor illustration of a bunch of fruits, a masterpiece
        \item A children's book watercolor illustration of a funky gorilla, a masterpiece
        \item A charcoal illustration of a Godzilla, a masterpiece
        \item A Godzilla, a masterpiece
        \item A charcoal illustration of a heron, a masterpiece
        \item A beautiful illustration of a Godzilla, a masterpiece
        \item A children's book watercolor illustration of a crazy math professor, a masterpiece
        \item A children's book watercolor illustration of a sleeping monkey, a masterpiece
        \item A professional photo of a pirate, a masterpiece
        \item A charcoal illustration of a super long Dachshund, a masterpiece
        \item A super long Dachshund, a masterpiece
        \item A charcoal illustration of a heron, a masterpiece
        \item A charcoal illustration of a bunch of fruits, a masterpiece
        \item A children's book watercolor illustration of a snowy cabin, a masterpiece
        \item A charcoal illustration of a hippie rabbit, a masterpiece
        \item A hippie rabbit, a masterpiece
        \item A charcoal illustration of a sleeping monkey, a masterpiece
        \item A fat unicorn, a masterpiece
        \item A beaver, a masterpiece
        \item A professional photo of a piano, a masterpiece
\end{enumerate}

\section*{Supmat:Tiling Menagerie 5}
\begin{enumerate}
        \item A charcoal illustration of a dancing Skeletton, a masterpiece
        \item A charcoal illustration of a dancing Skeletton, a masterpiece
        \item A children's book watercolor illustration of a beaver, a masterpiece
        \item A charcoal illustration of a heron, a masterpiece
        \item A funky gorilla, a masterpiece
        \item A charcoal illustration of a bunch of fruits, a masterpiece
        \item A charcoal illustration of a hippie rabbit, a masterpiece
        \item A charcoal illustration of a crazy math professor, a masterpiece
        \item A children's book watercolor illustration of a skull, a masterpiece
        \item A children's book watercolor illustration of a sleeping monkey, a masterpiece
        \item A beaver, a masterpiece
        \item A charcoal illustration of a  witch riding a broom, a masterpiece
        \item A children's book watercolor illustration of a crazy math professor, a masterpiece
        \item A children's book watercolor illustration of a bunch of fruits, a masterpiece
        \item A children's book watercolor illustration of a sleeping monkey, a masterpiece
        \item A charcoal illustration of a beaver, a masterpiece
        \item A charcoal illustration of a  witch riding a broom, a masterpiece
        \item A children's book watercolor illustration of a  witch riding a broom, a masterpiece
        \item A witch riding a broom, a masterpiece
        \item A charcoal illustration of a bunch of fruits, a masterpiece
        \item A charcoal illustration of a funky gorilla, a masterpiece
        \item A children's book watercolor illustration of a Godzilla, a masterpiece
        \item A charcoal illustration of a sleeping monkey, a masterpiece
        \item A charcoal illustration of a beaver, a masterpiece
        \item A heron, a masterpiece
\end{enumerate}
\section*{Supmat:Tiling Menagerie 6}
\begin{enumerate}
        \item A children's book watercolor illustration of a flamingo, a masterpiece
        \item A professional photo of a pirate, a masterpiece
        \item A witch riding a broom, a masterpiece
        \item A charcoal illustration of a  witch riding a broom, a masterpiece
        \item A children's book watercolor illustration of a nastronaut, a masterpiece
        \item A charcoal illustration of a hippie rabbit, a masterpiece
        \item A children's book watercolor illustration of a hippie rabbit, a masterpiece
        \item A hippie rabbit, a masterpiece
        \item A professional photo of a hippie rabbit, a masterpiece
        \item A charcoal illustration of a super long Dachshund, a masterpiece
        \item A children's book watercolor illustration of a snowy cabin, a masterpiece
        \item A children's book watercolor illustration of a sleeping monkey, a masterpiece
        \item A beaver, a masterpiece
        \item A children's book watercolor illustration of a beaver, a masterpiece
        \item A children's book watercolor illustration of a  witch riding a broom, a masterpiece
        \item A charcoal illustration of a pirate ship, a masterpiece
        \item A witch riding a broom, a masterpiece
        \item A fat unicorn, a masterpiece
        \item A children's book watercolor illustration of a beaver, a masterpiece
        \item A beaver, a masterpiece
        \item A charcoal illustration of a heron, a masterpiece
        \item A professional photo of a heron, a masterpiece
        \item A witch riding a broom, a masterpiece
        \item A charcoal illustration of a  witch riding a broom, a masterpiece
        \item A funky gorilla, a masterpiece
\end{enumerate}
\section*{Supmat:Tiling Menagerie 7}
\begin{enumerate}
        \item A charcoal illustration of a crazy math professor, a masterpiece
        \item A hippie rabbit, a masterpiece
        \item A children's book watercolor illustration of a snowy cabin, a masterpiece
        \item A children's book watercolor illustration of a super long Dachshund, a masterpiece
        \item A professional photo of a snowy cabin, a masterpiece
        \item A skull, a masterpiece
        \item A sleeping monkey, a masterpiece
        \item A professional photo of a sleeping monkey, a masterpiece
        \item A professional photo of a beaver, a masterpiece
        \item A beautiful illustration of a funky gorilla, a masterpiece
        \item A witch riding a broom, a masterpiece
        \item A charcoal illustration of a piano, a masterpiece
        \item A charcoal illustration of a pirate, a masterpiece
        \item A children's book watercolor illustration of a bunch of fruits, a masterpiece
        \item A charcoal illustration of a beaver, a masterpiece
        \item A pirate, a masterpiece
        \item A charcoal illustration of a pirate ship, a masterpiece
        \item A witch riding a broom, a masterpiece
        \item A charcoal illustration of a  witch riding a broom, a masterpiece
        \item A charcoal illustration of a funky gorilla, a masterpiece
        \item A charcoal illustration of a nastronaut, a masterpiece
        \item A charcoal illustration of a snowy cabin, a masterpiece
        \item A professional photo of a snowy cabin, a masterpiece
        \item A professional photo of a dancing Skeletton, a masterpiece        
        \item A sleeping monkey, a masterpiece
\end{enumerate}

\section*{Comparisons: ``prompt engineering'' attempts}
(Replace $X$ with the desired object)
\begin{enumerate}
\item a $X$
\item a children's book illustration of a $X$
\item An Escher drawing of a children's book illustration of a $X$
\item An Escher tiling of a children's book illustration of a $X$
\item An Escher pattern of a children's book illustration of a $X$
\item An Escherization of a children's book illustration of a $X$
\item An Escher-like drawing of a children's book illustration of a $X$
\item An Escher-like tiling of a children's book illustration of a $X$
\item An Escher-like pattern of a children's book illustration of a $X$
\item An Escher-like image of a children's book illustration of a $X$
\item An Escher-like picture of a children's book illustration of a $X$
\item a children's book illustration of a $X$ in the style of Escher
\item An image composed of the repetition of a children's book illustration of a $X$
\item An image composed of the repetition of a children's book illustration of a $X$just like Escher drawings in the metamorphosis series.
\item An image composed of the repetition of a children's book illustration of a  $X$just like Escher drawings.
\item An image composed of the repetition of a children's book illustration of a  $X$ in the style of Escher
\item An image composed of the repetition of a children's book illustration of a  $X$ in the style of Escher. There should be no gaps nor overlaps.
\item An image composed of the repetition of a children's book illustration of a $X$. There should be no gaps nor overlaps.
\item An image composed of the repetition of a children's book illustration of a $X$. There should be no gaps nor overlaps. The object is always the same in all repetitions
\item An image made by repeating a children's book illustration of a $X$ with translations in such a way that there is no gaps nor overlaps. It is always the same object.
\item An image made by repeating a children's book illustration of a $X$ with translations in such a way that there is no gaps nor overlaps. It is always the same object. The style is Escher-like.
\item An Escher drawing of a $X$
\item An Escher tiling of a $X$
\item An Escher pattern of a $X$
\item An Escherization of a $X$
\item An Escher-like drawing of a $X$
\item An Escher-like tiling of a $X$
\item An Escher-like pattern of a $X$
\item An Escher-like image of a $X$
\item An Escher-like picture of a $X$
\item a $X$ in the style of Escher
\item An image composed of the repetition of a $X$
\item An image composed of the repetition of a $X$ just like Escher drawings in the metamorphosis series.
\item An image composed of the repetition of a $X$ just like Escher drawings.
\item An image composed of the repetition of a $X$ in the style of Escher
\item An image composed of the repetition of a $X$ in the style of Escher. There should be no gaps nor overlaps.
\item An image composed of the repetition of a $X$. There should be no gaps nor overlaps.
\item An image composed of the repetition of a $X$. There should be no gaps nor overlaps. The object is always the same in all repetitions
\item An image made by repeating a $X$ with translations in such a way that there is no gaps nor overlaps. It is always the same object.
\item An image made by repeating a $X$ with translations in such a way that there is no gaps nor overlaps. It is always the same object. The style is Escher-like. 
\end{enumerate}

\begin{figure}[h]
    \centering
    \captionsetup{justification=centering}
        \begin{subfigure}[t]{0.32\linewidth}
        \includegraphics[width=\linewidth]{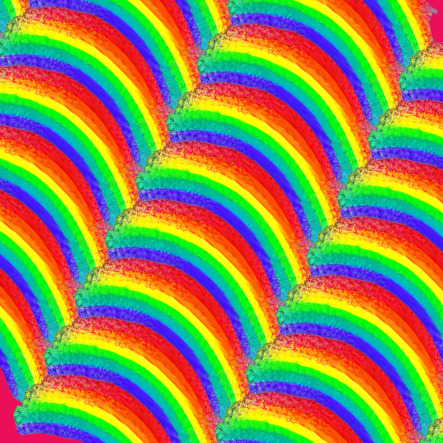}
        \caption{\theprompt{A rainbow} \torus}
        \end{subfigure}
        \begin{subfigure}[t]{0.32\linewidth}
        \includegraphics[width=\linewidth]{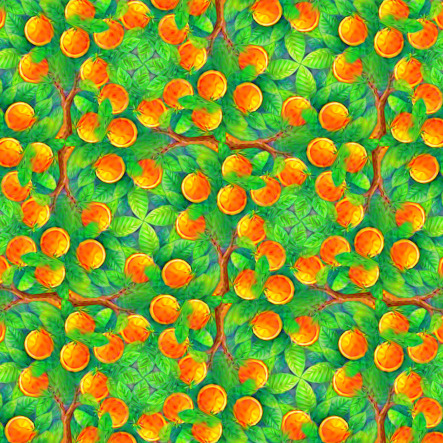}
        \caption{\theprompt{Orange tree} \orbI}
        \end{subfigure}
        \begin{subfigure}[t]{0.32\linewidth}
        \includegraphics[width=\linewidth]{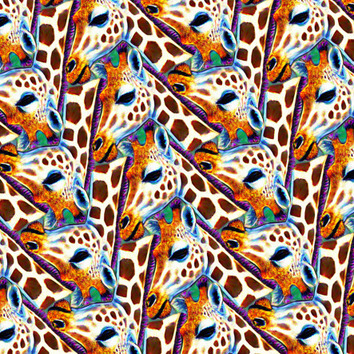}
        \caption{\theprompt{A giraffe} \mob}
        \end{subfigure}
\\
\captionsetup{justification=justified}
\vspace{-10pt}
    \caption{\textbf{RGB tilings.} Our method supports RGB-color textures out of the box, however it suffers from Score Distillation's saturated colors, which in some cases leads to uncompelling results as well as degradation in plausibility.}
    \label{fig:color_gallery}
\end{figure}
\begin{figure*}[h]
        \includegraphics[width=\textwidth]{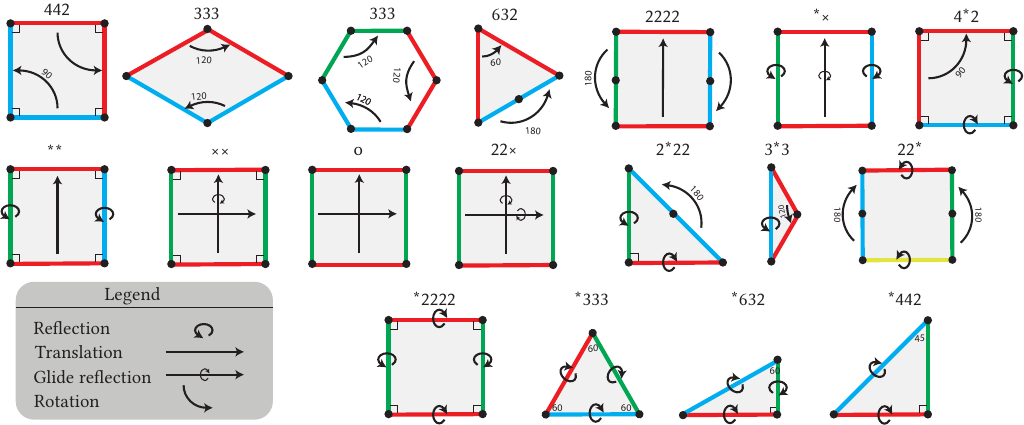}
    \caption{The 17 wallpaper groups, each represented through its ``basic tile''. The diagrams illustrate the relations between different parts of the boundary of the tile - a rotation, translation, rotation, or glide reflection (translation composed with a reflection) - which describe how to shift copies of the tile and align them in order to tile the plane. In order to infer the boundary conditions used in Equation (1) in the paper, one needs to place a point on one side of the boundary, and see where the transformation associated with that point on the boundary maps that point - this yields the appropriate linear equation to assign to that vertex and its corresponding ``twin'' vertex on the other side of the boundary. Each wallpaper group is marked with its corresponding \emph{orbifold notation} (see~\cite{conway2008symmetries}), written above the tile. }
\end{figure*}

\begin{figure*}
    \centering
    \captionsetup{justification=centering}
        \begin{subfigure}[b]{0.23\linewidth}
        \includegraphics[width=\linewidth]{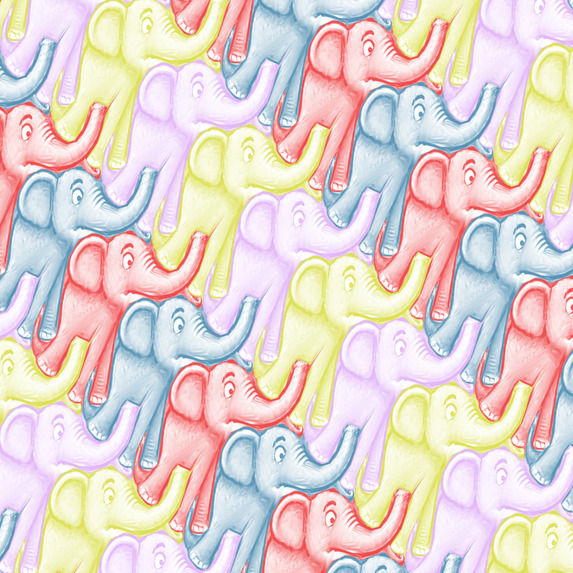}
        \end{subfigure}
        \begin{subfigure}[b]{0.23\linewidth}
        \includegraphics[width=\linewidth]{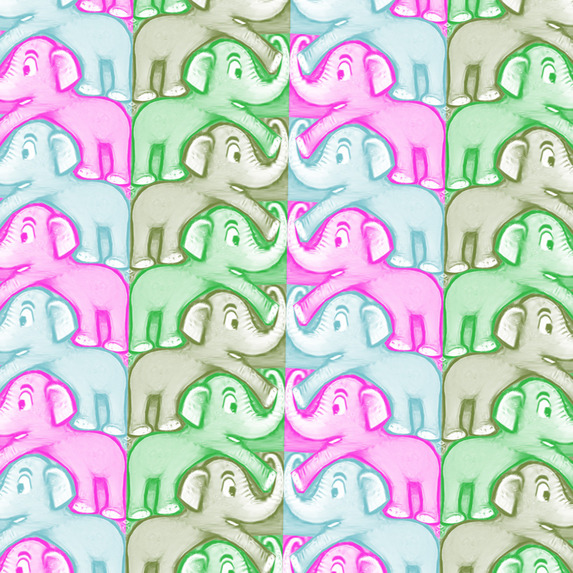}
        \end{subfigure}
        \begin{subfigure}[b]{0.23\linewidth}
        \includegraphics[width=\linewidth]{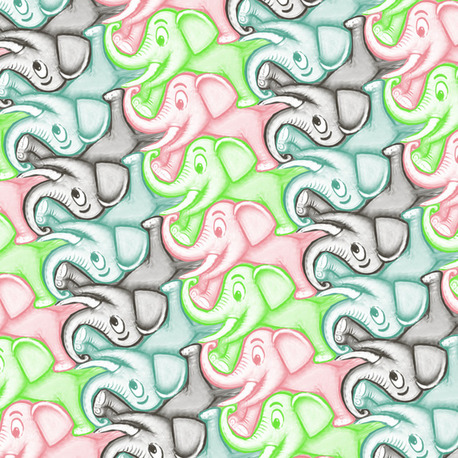}
        \end{subfigure}
        \begin{subfigure}[b]{0.23\linewidth}
        \includegraphics[width=\linewidth]{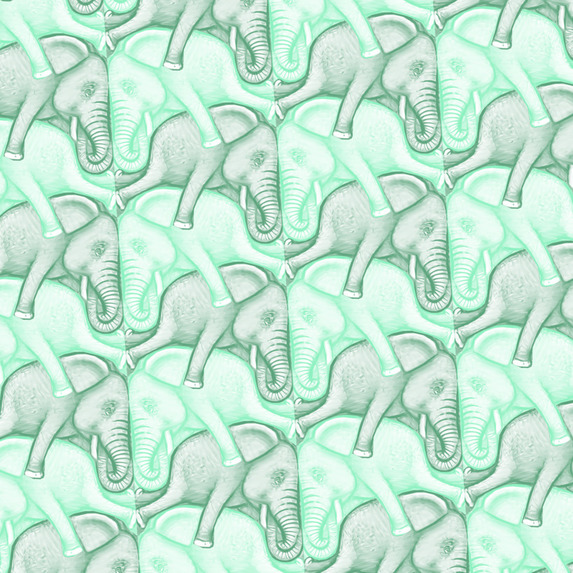}
        \end{subfigure}
\\
        \begin{subfigure}[b]{0.23\linewidth}
        \includegraphics[width=\linewidth]{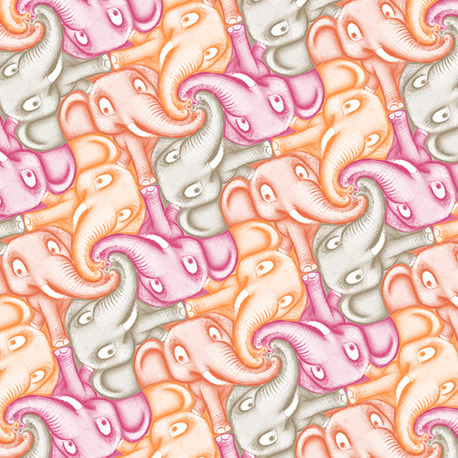}
        \end{subfigure}
        \begin{subfigure}[b]{0.23\linewidth}
        \includegraphics[width=\linewidth]{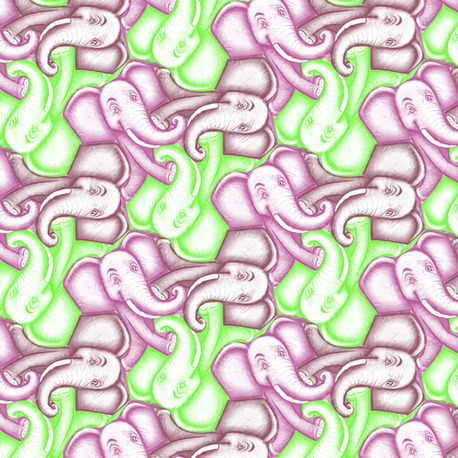}
        \end{subfigure}
        \begin{subfigure}[b]{0.23\linewidth}
        \includegraphics[width=\linewidth]{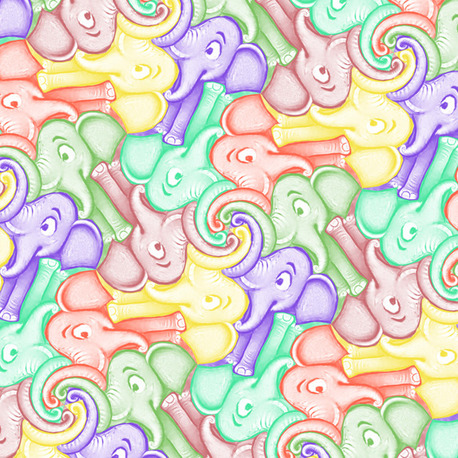}
        \end{subfigure}
\\
        \begin{subfigure}[b]{0.23\linewidth}
        \includegraphics[width=\linewidth]{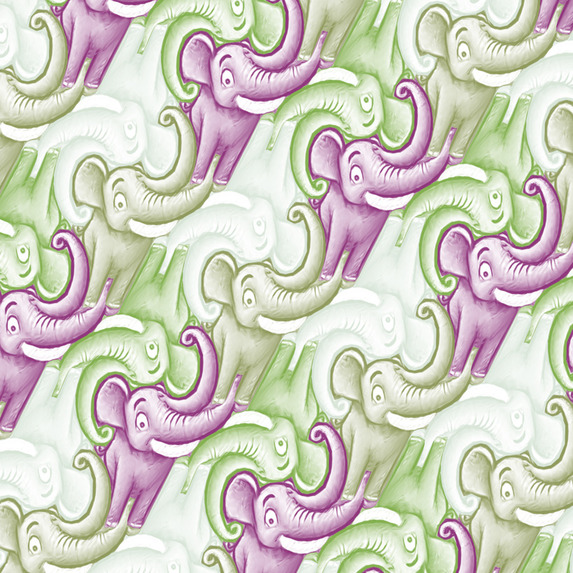}
        \end{subfigure}
        \begin{subfigure}[b]{0.23\linewidth}
        \includegraphics[width=\linewidth]{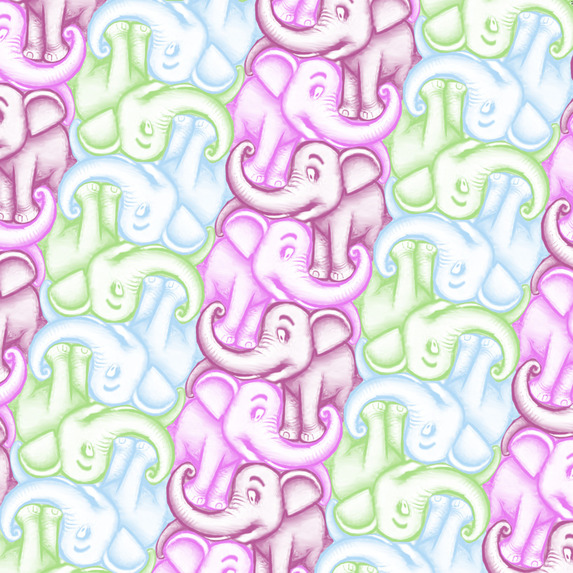}
        \end{subfigure}
        \begin{subfigure}[b]{0.23\linewidth}
        \includegraphics[width=\linewidth]{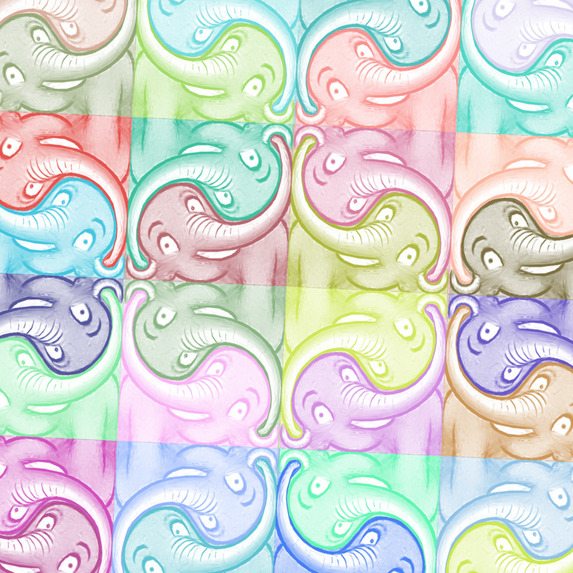}
        \end{subfigure}
\\
        \begin{subfigure}[b]{0.23\linewidth}
        \includegraphics[width=\linewidth]{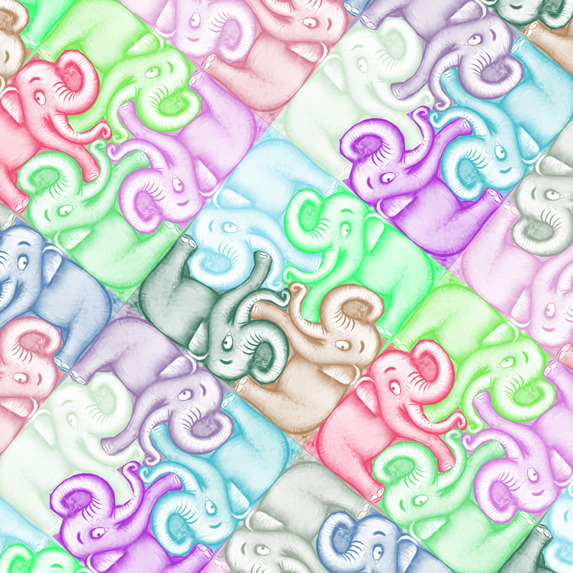}
        \end{subfigure}
        \begin{subfigure}[b]{0.23\linewidth}
        \includegraphics[width=\linewidth]{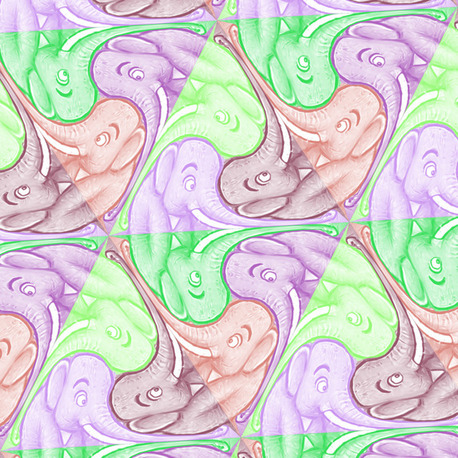}
        \end{subfigure}
        \begin{subfigure}[b]{0.23\linewidth}
        \includegraphics[width=\linewidth]{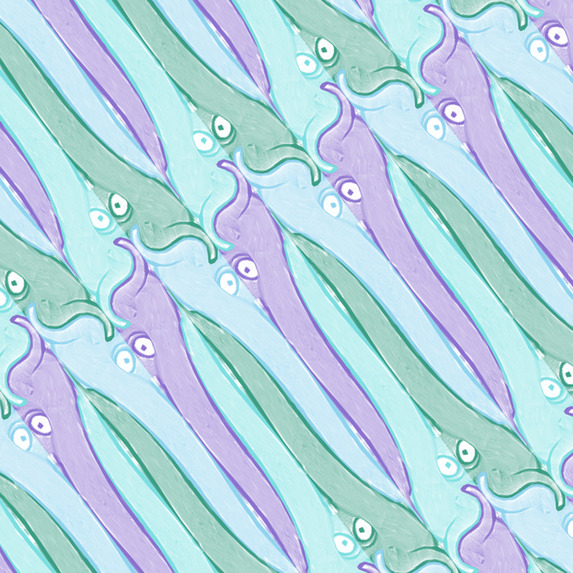}
        \end{subfigure}
\\
        \begin{subfigure}[b]{0.23\linewidth}
        \includegraphics[width=\linewidth]{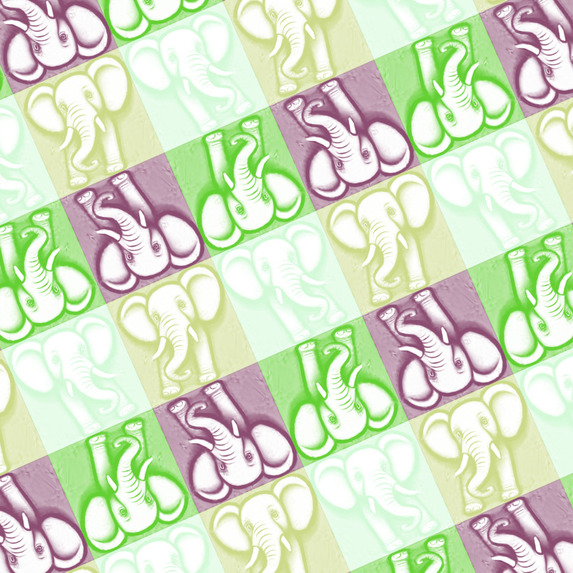}
        \end{subfigure}
        \begin{subfigure}[b]{0.23\linewidth}
        \includegraphics[width=\linewidth]{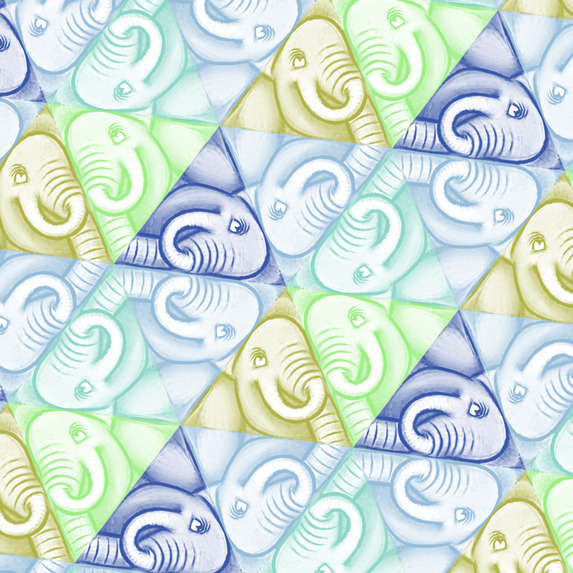}
        \end{subfigure}
        \begin{subfigure}[b]{0.23\linewidth}
        \includegraphics[width=\linewidth]{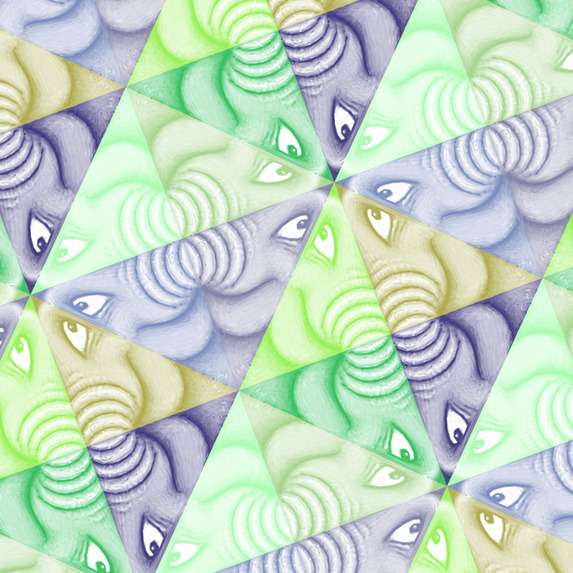}
        \end{subfigure}
        \begin{subfigure}[b]{0.23\linewidth}
        \includegraphics[width=\linewidth]{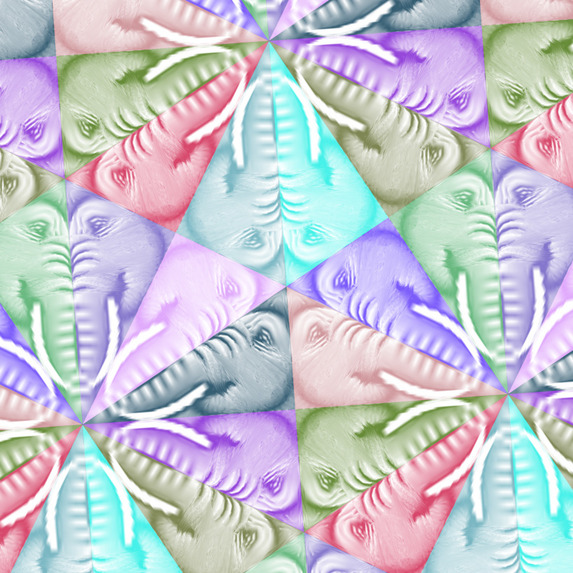}
        \end{subfigure}
\\
\captionsetup{justification=justified}
    \vspace*{-2mm}
    \caption{\textbf{All wallpaper groups for same prompt.} We showcase the 17 wallpaper groups for the prompt ``An elephant''. Note that the last four groups (last row) produce uninteresting results in terms of shape because their symmetry group is made out of reflection, forcing their boundary to stay a static convex polygon (either square or triangle), preventing the generation of different tile-shapes. Order: \torus, \cylinder, \klein, \mob, \orbI, \orbII, \orbIII, \orbIV, \projective, \orbRhyb, \orbIhyb, \orbIIhyb, \orbIVhyb,   \reflectII, \reflectI, \reflectIII, \reflectIV .}
    \label{fig:wallpaper}
\end{figure*}
\begin{figure*}
    \centering
    \captionsetup{justification=centering}
        \begin{subfigure}[b]{0.23\linewidth}
        \includegraphics[width=\linewidth]{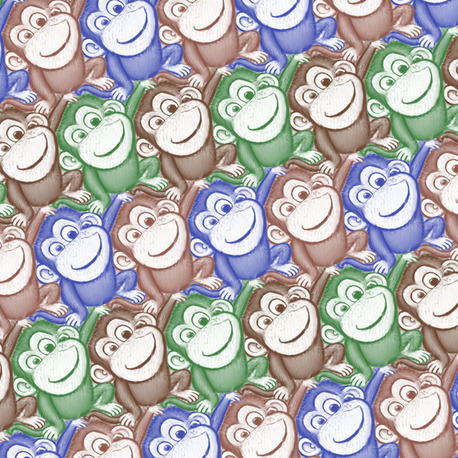}
        \end{subfigure}
        \begin{subfigure}[b]{0.23\linewidth}
        \includegraphics[width=\linewidth]{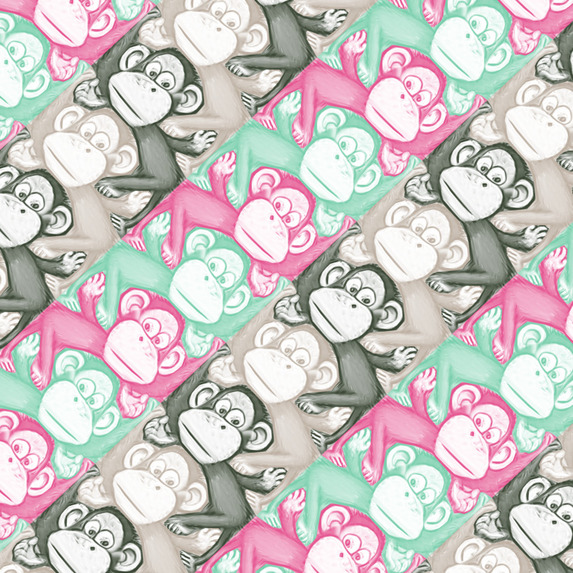}
        \end{subfigure}
        \begin{subfigure}[b]{0.23\linewidth}
        \includegraphics[width=\linewidth]{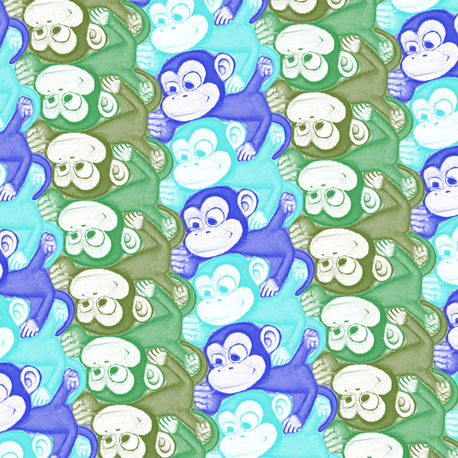}
        \end{subfigure}
        \begin{subfigure}[b]{0.23\linewidth}
        \includegraphics[width=\linewidth]{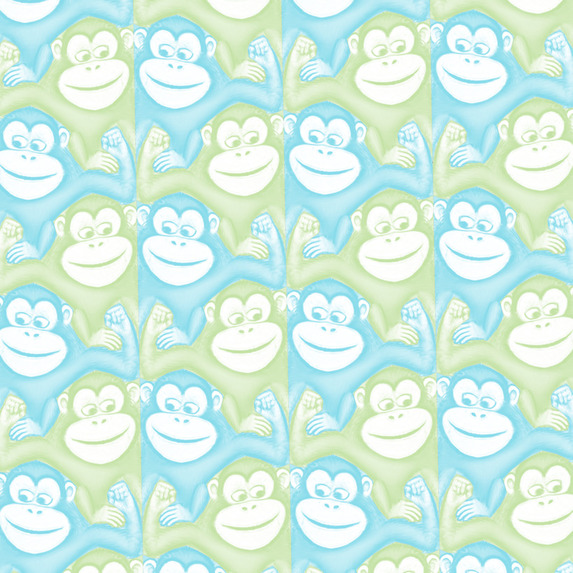}
        \end{subfigure}
\\
        \begin{subfigure}[b]{0.23\linewidth}
        \includegraphics[width=\linewidth]{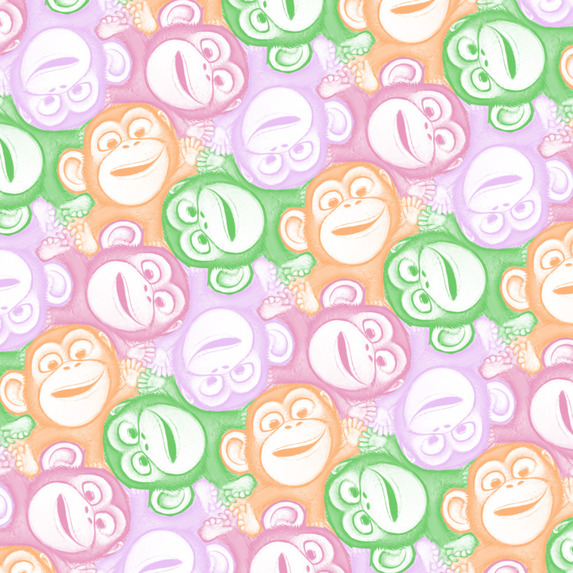}
        \end{subfigure}
        \begin{subfigure}[b]{0.23\linewidth}
        \includegraphics[width=\linewidth]{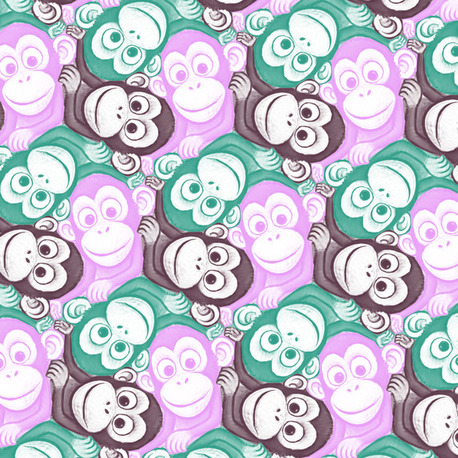}
        \end{subfigure}
        \begin{subfigure}[b]{0.23\linewidth}
        \includegraphics[width=\linewidth]{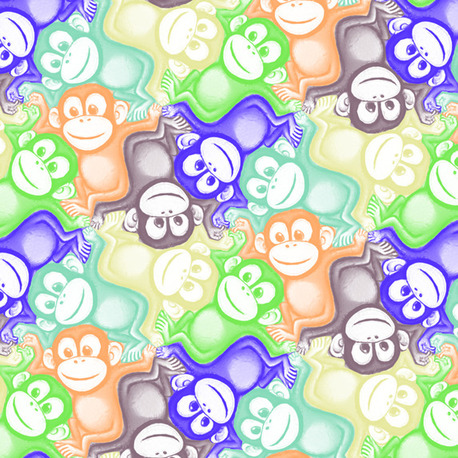}
        \end{subfigure}
\\
        \begin{subfigure}[b]{0.23\linewidth}
        \includegraphics[width=\linewidth]{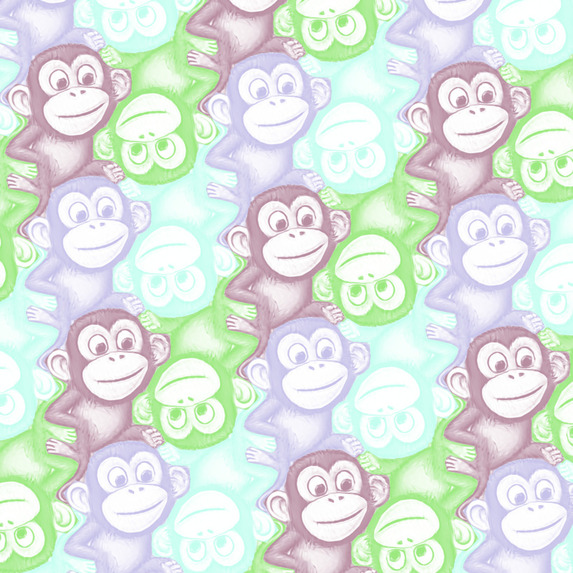}
        \end{subfigure}
        \begin{subfigure}[b]{0.23\linewidth}
        \includegraphics[width=\linewidth]{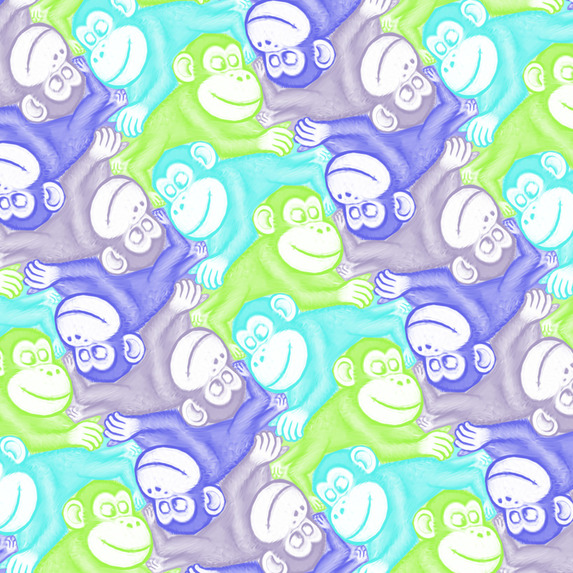}
        \end{subfigure}
        \begin{subfigure}[b]{0.23\linewidth}
        \includegraphics[width=\linewidth]{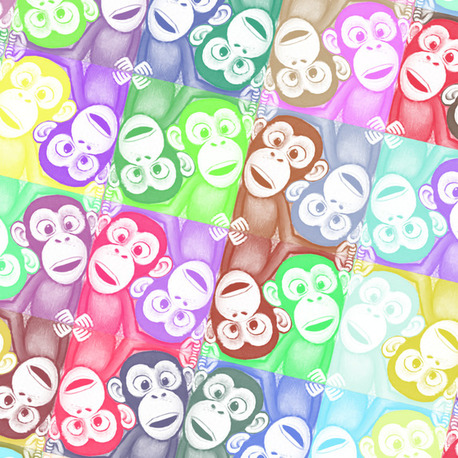}
        \end{subfigure}
\\
        \begin{subfigure}[b]{0.23\linewidth}
        \includegraphics[width=\linewidth]{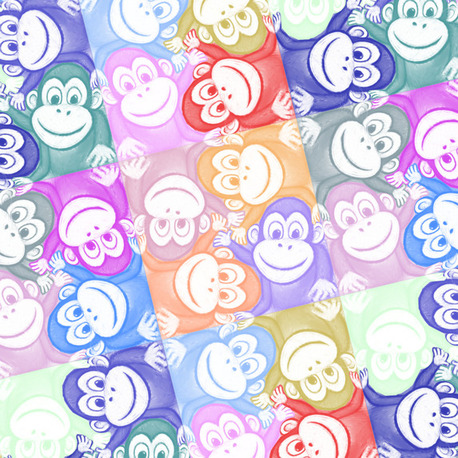}
        \end{subfigure}
        \begin{subfigure}[b]{0.23\linewidth}
        \includegraphics[width=\linewidth]{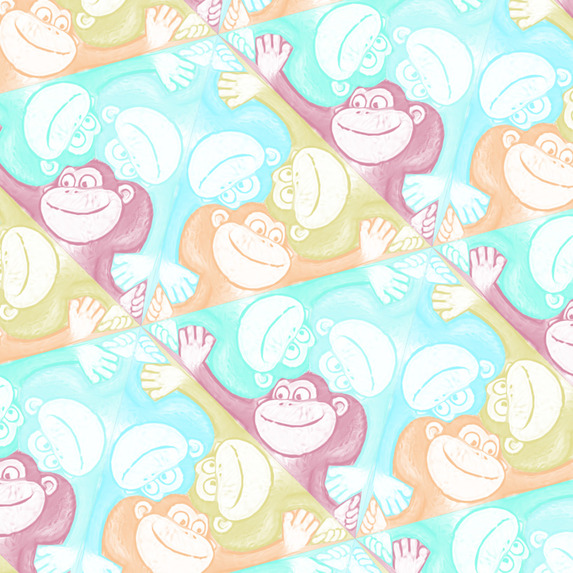}
        \end{subfigure}
        \begin{subfigure}[b]{0.23\linewidth}
        \includegraphics[width=\linewidth]{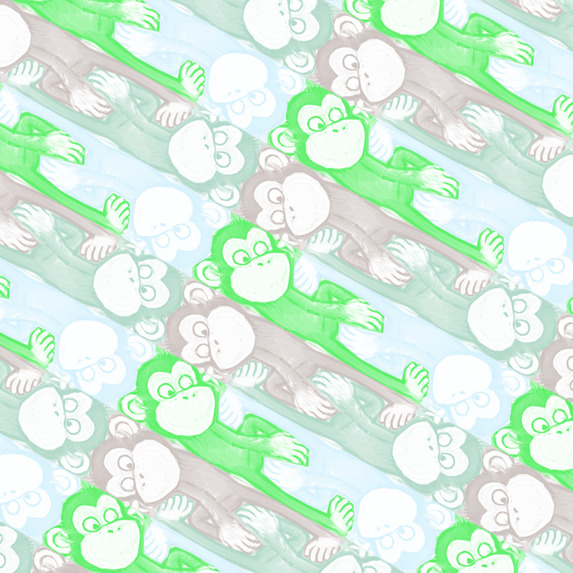}
        \end{subfigure}
\\
        \begin{subfigure}[b]{0.23\linewidth}
        \includegraphics[width=\linewidth]{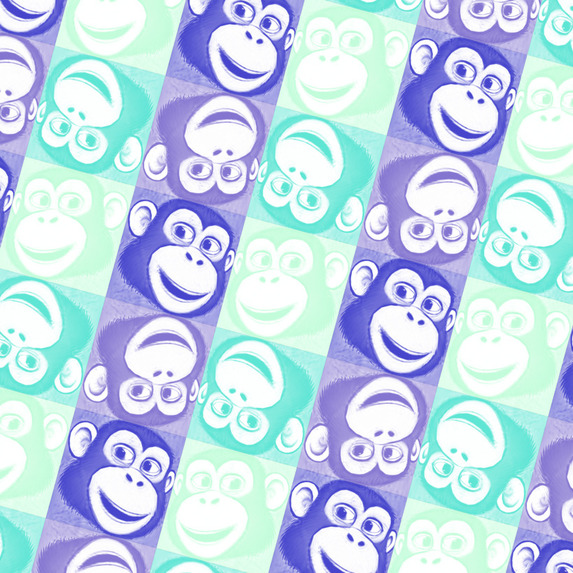}
        \end{subfigure}
        \begin{subfigure}[b]{0.23\linewidth}
        \includegraphics[width=\linewidth]{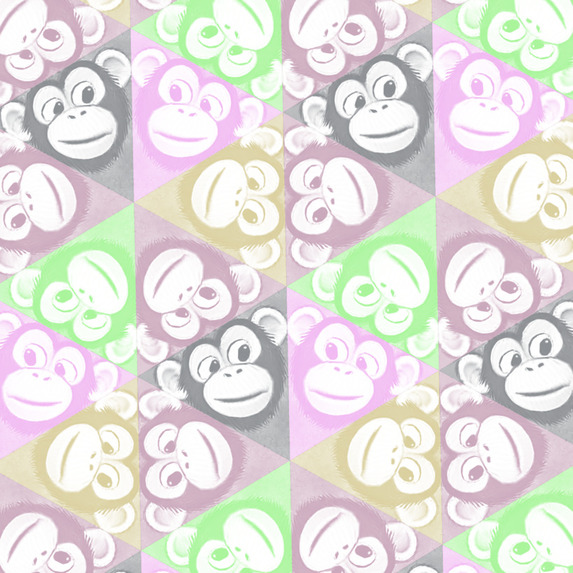}
        \end{subfigure}
        \begin{subfigure}[b]{0.23\linewidth}
        \includegraphics[width=\linewidth]{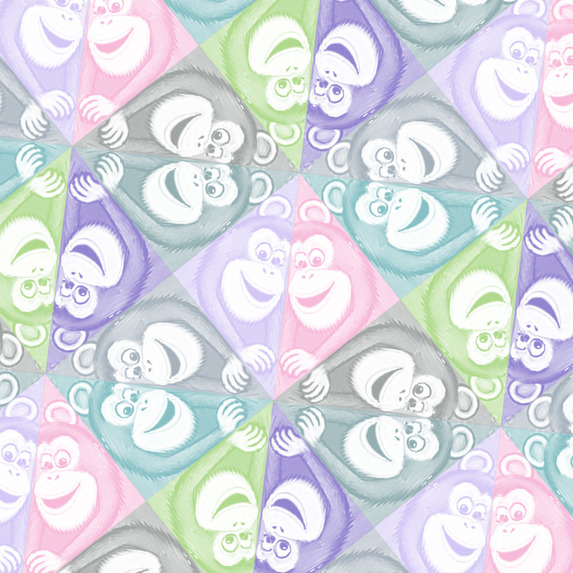}
        \end{subfigure}
        \begin{subfigure}[b]{0.23\linewidth}
        \includegraphics[width=\linewidth]{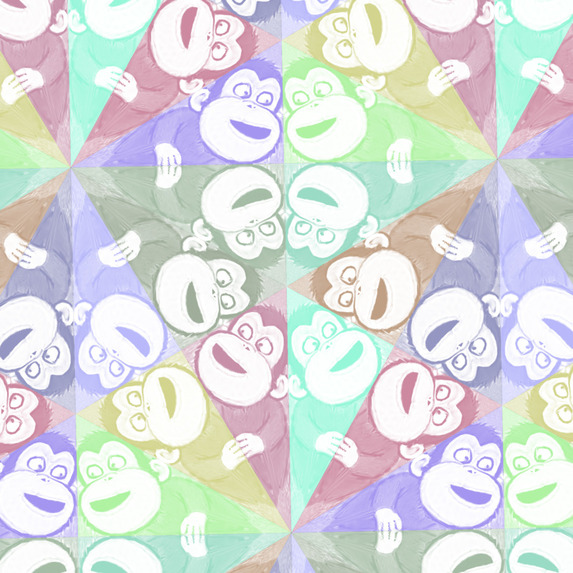}
        \end{subfigure}
\\
\captionsetup{justification=justified}
    \vspace*{-2mm}
    \caption{\textbf{All wallpaper groups for same prompt.} We showcase the 17 wallpaper groups for the prompt ``A monkey''. Note that the last four groups (last row) produce uninteresting results in terms of shape because their symmetry group is made out of reflection, forcing their boundary to stay a static convex polygon (either square or triangle), preventing the generation of different tile-shapes. Order: \torus, \cylinder, \klein, \mob, \orbI, \orbII, \orbIII, \orbIV, \projective, \orbRhyb, \orbIhyb, \orbIIhyb, \orbIVhyb,   \reflectII, \reflectI, \reflectIII, \reflectIV .}
    \label{fig:wallpaper}
\end{figure*}
\begin{figure*}
    \centering
    \captionsetup{justification=centering}
        \begin{subfigure}[b]{0.23\linewidth}
        \includegraphics[width=\linewidth]{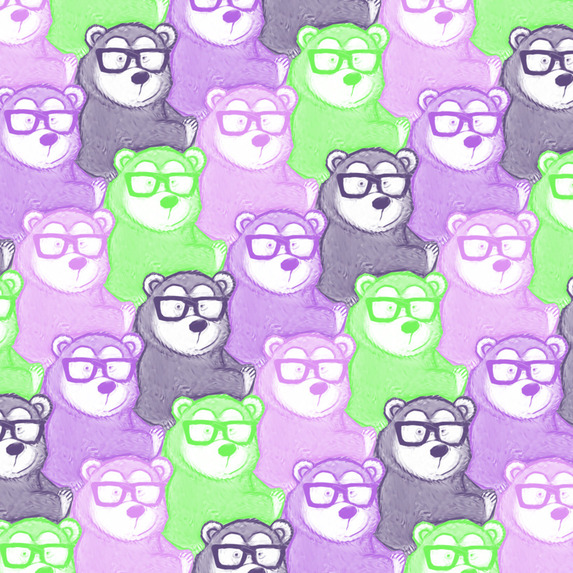}
        \end{subfigure}
        \begin{subfigure}[b]{0.23\linewidth}
        \includegraphics[width=\linewidth]{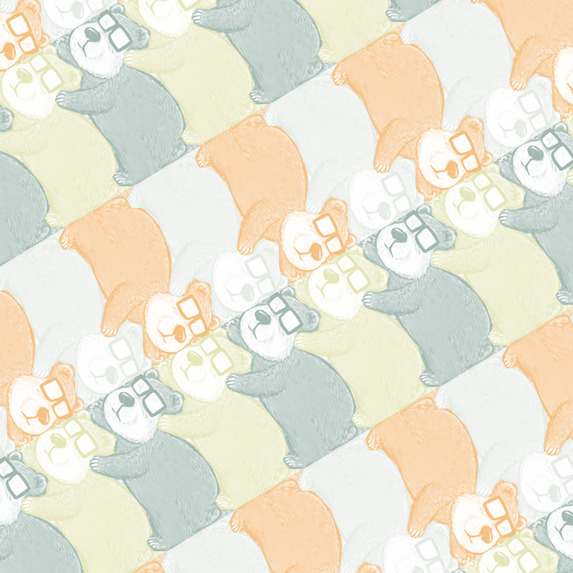}
        \end{subfigure}
        \begin{subfigure}[b]{0.23\linewidth}
        \includegraphics[width=\linewidth]{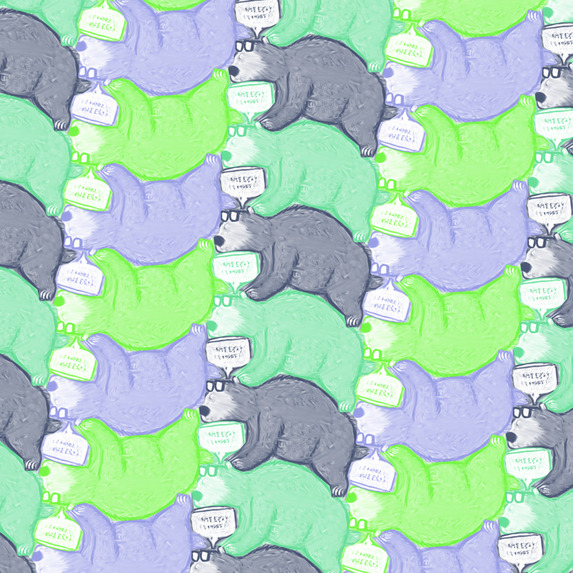}
        \end{subfigure}
        \begin{subfigure}[b]{0.23\linewidth}
        \includegraphics[width=\linewidth]{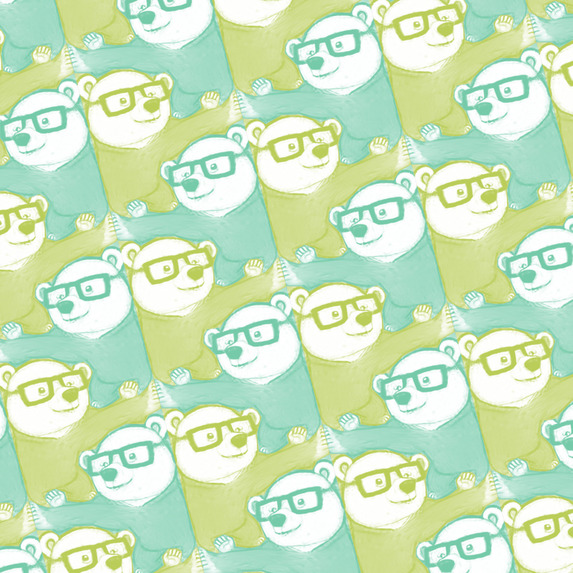}
        \end{subfigure}
\\
        \begin{subfigure}[b]{0.23\linewidth}
        \includegraphics[width=\linewidth]{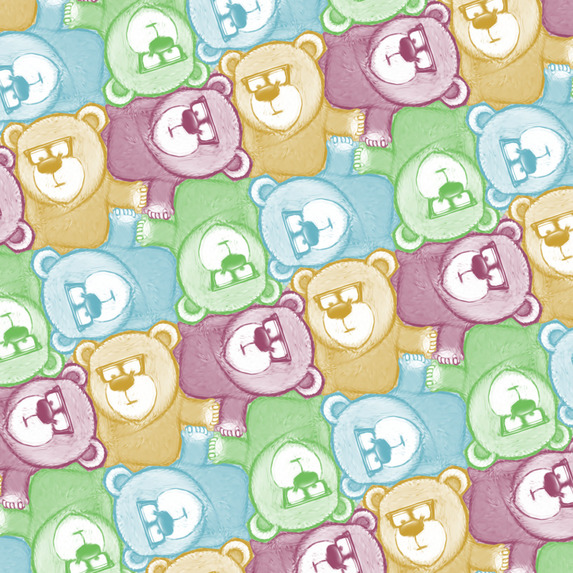}
        \end{subfigure}
        \begin{subfigure}[b]{0.23\linewidth}
        \includegraphics[width=\linewidth]{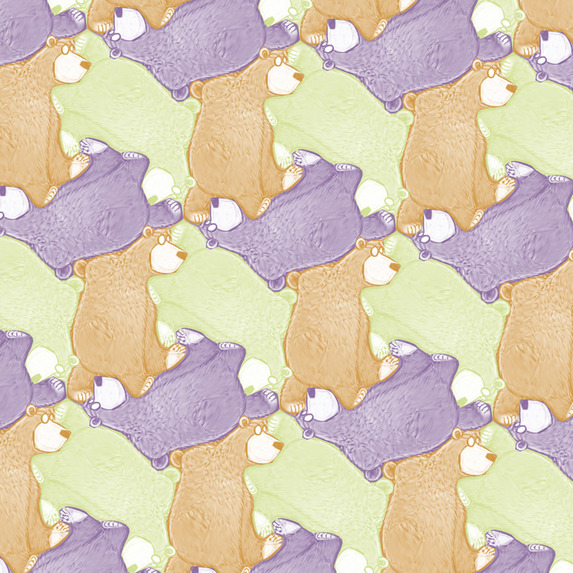}
        \end{subfigure}
        \begin{subfigure}[b]{0.23\linewidth}
        \includegraphics[width=\linewidth]{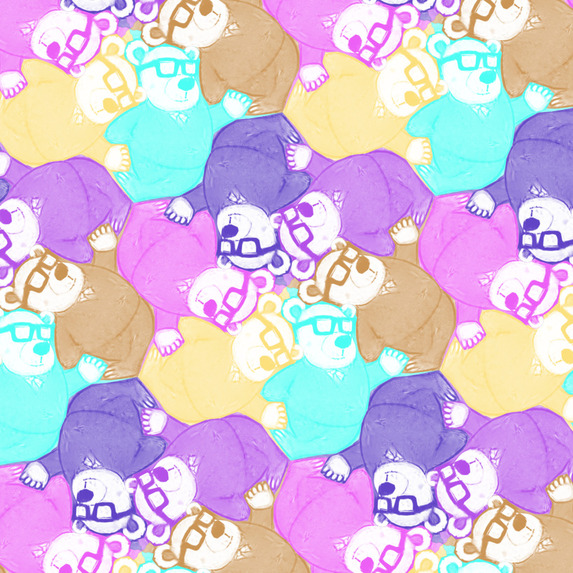}
        \end{subfigure}
\\
        \begin{subfigure}[b]{0.23\linewidth}
        \includegraphics[width=\linewidth]{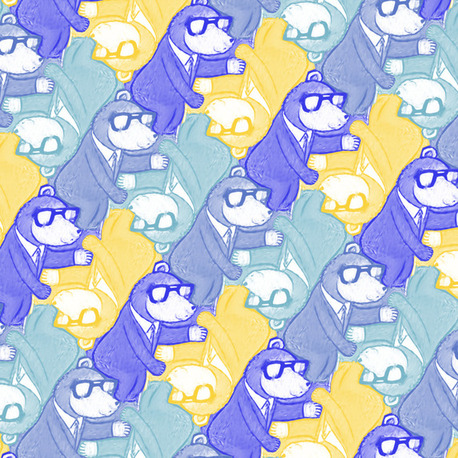}
        \end{subfigure}
        \begin{subfigure}[b]{0.23\linewidth}
        \includegraphics[width=\linewidth]{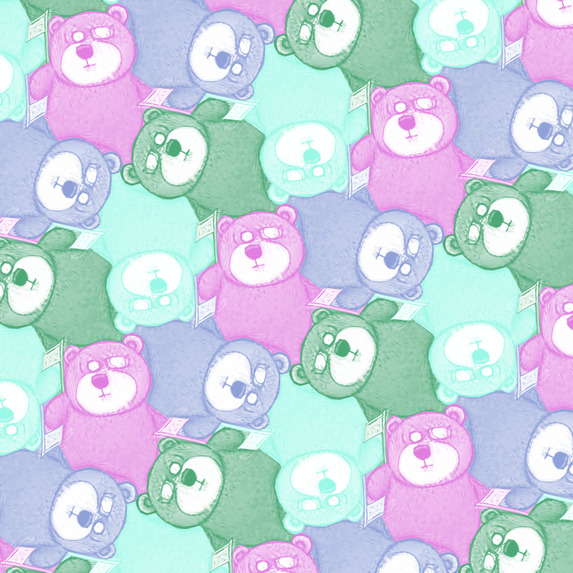}
        \end{subfigure}
        \begin{subfigure}[b]{0.23\linewidth}
        \includegraphics[width=\linewidth]{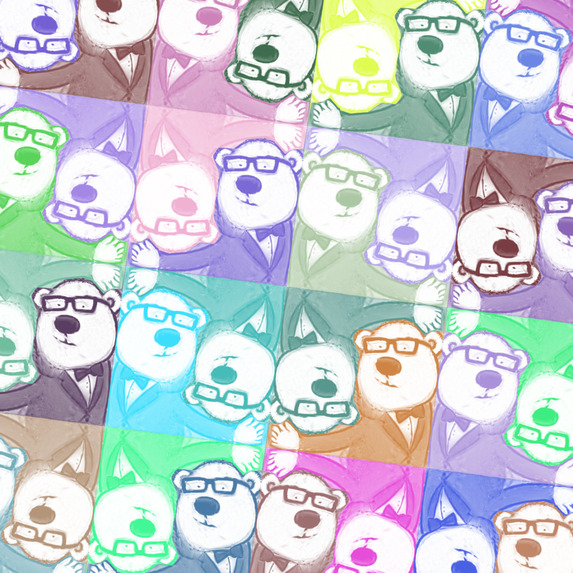}
        \end{subfigure}
\\
        \begin{subfigure}[b]{0.23\linewidth}
        \includegraphics[width=\linewidth]{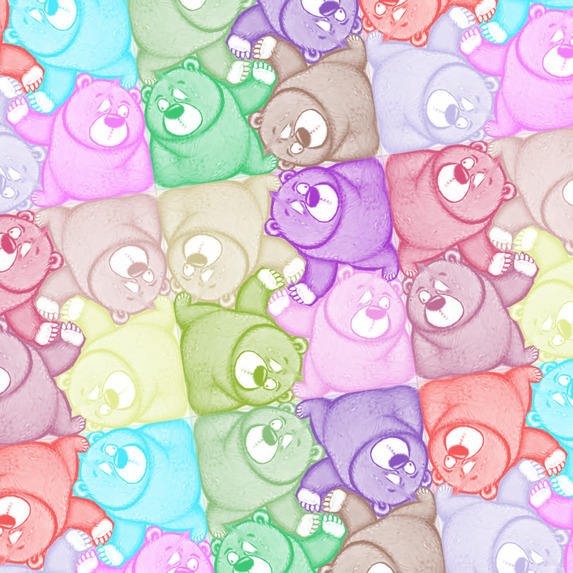}
        \end{subfigure}
        \begin{subfigure}[b]{0.23\linewidth}
        \includegraphics[width=\linewidth]{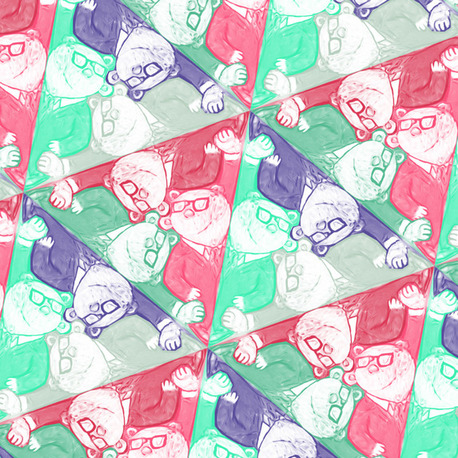}
        \end{subfigure}
        \begin{subfigure}[b]{0.23\linewidth}
        \includegraphics[width=\linewidth]{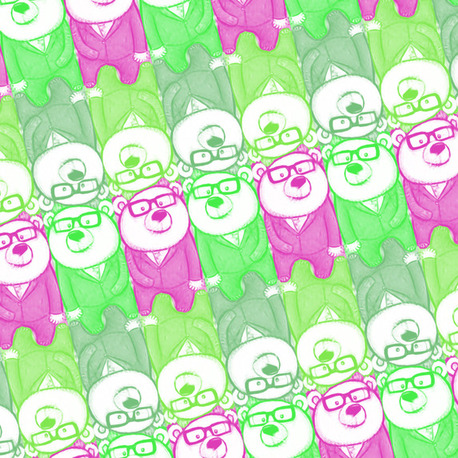}
        \end{subfigure}
\\
        \begin{subfigure}[b]{0.23\linewidth}
        \includegraphics[width=\linewidth]{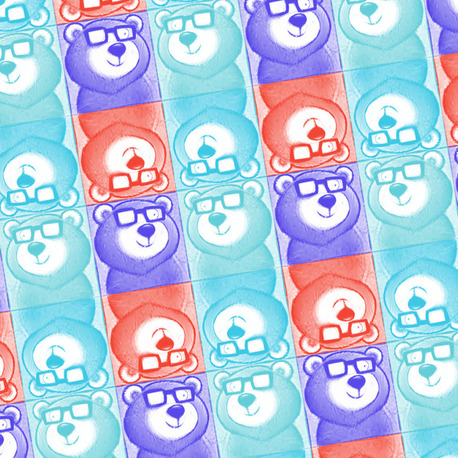}
        \end{subfigure}
        \begin{subfigure}[b]{0.23\linewidth}
        \includegraphics[width=\linewidth]{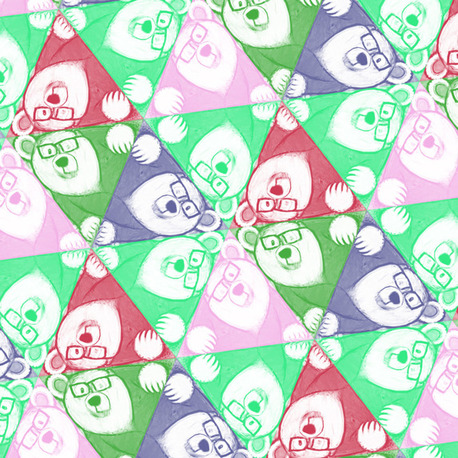}
        \end{subfigure}
        \begin{subfigure}[b]{0.23\linewidth}
        \includegraphics[width=\linewidth]{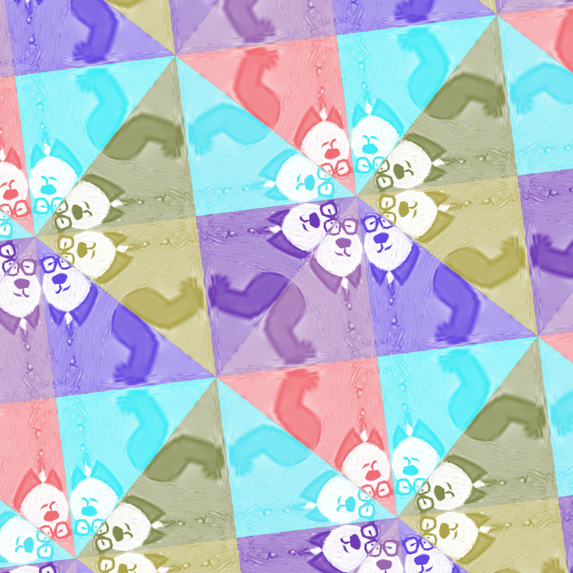}
        \end{subfigure}
        \begin{subfigure}[b]{0.23\linewidth}
        \includegraphics[width=\linewidth]{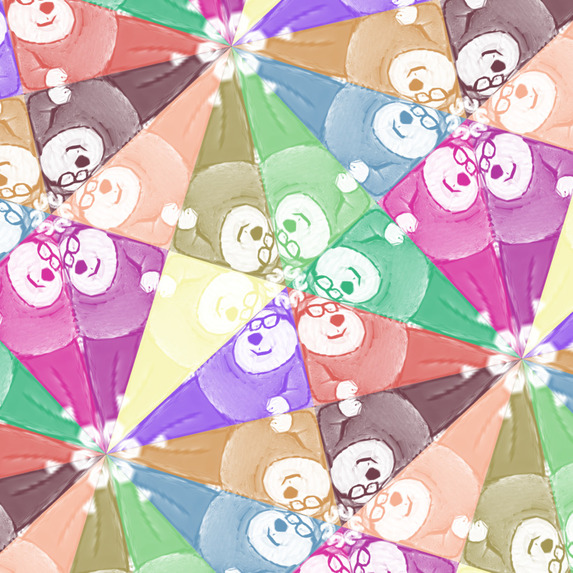}
        \end{subfigure}
\\
\captionsetup{justification=justified}
    \vspace*{-2mm}
    \caption{\textbf{All wallpaper groups for same prompt.} We showcase the 17 wallpaper groups for the prompt ``A nerdy bear''. Note that the last four groups (last row) produce uninteresting results in terms of shape because their symmetry group is made out of reflection, forcing their boundary to stay a static convex polygon (either square or triangle), preventing the generation of different tile-shapes. Order: \torus, \cylinder, \klein, \mob, \orbI, \orbII, \orbIII, \orbIV, \projective, \orbRhyb, \orbIhyb, \orbIIhyb, \orbIVhyb,   \reflectII, \reflectI, \reflectIII, \reflectIV .}
    \label{fig:wallpaper}
\end{figure*}

\newpage
\begin{figure*}
    \centering
    \captionsetup{justification=centering}
        \begin{subfigure}[t]{0.196\linewidth}
        \includegraphics[width=\linewidth]{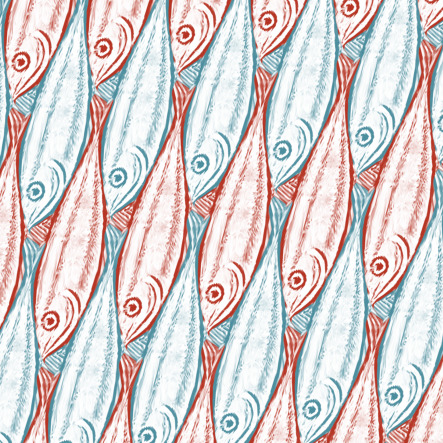}
        \caption{\theprompt{A sardine} \torus}
        \end{subfigure}
        \begin{subfigure}[t]{0.196\linewidth}
        \includegraphics[width=\linewidth]{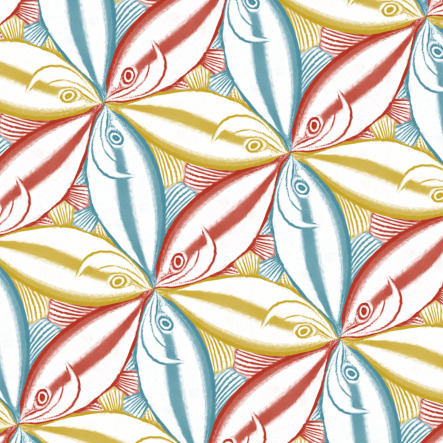}
        \caption{\theprompt{A sardine} \orbII}
        \end{subfigure}
        \begin{subfigure}[t]{0.196\linewidth}
        \includegraphics[width=\linewidth]{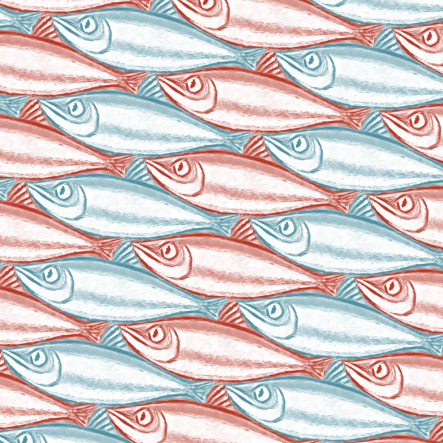}
        \caption{\theprompt{A sardine} \torus}
        \end{subfigure}
        \begin{subfigure}[t]{0.196\linewidth}
        \includegraphics[width=\linewidth]{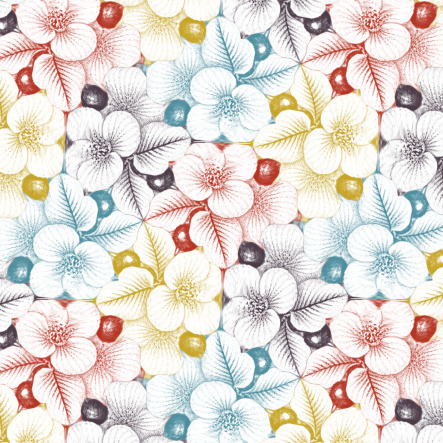}
        \caption{\theprompt{Cherries} \orbI}
        \end{subfigure}
        \begin{subfigure}[t]{0.196\linewidth}
        \includegraphics[width=\linewidth]{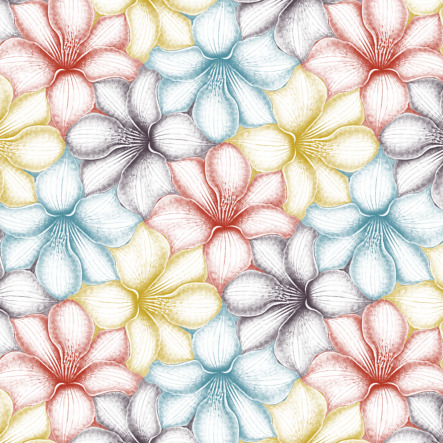}
        \caption{\theprompt{A tropical flower} \orbI}
        \end{subfigure}
\\
        \begin{subfigure}[t]{0.196\linewidth}
        \includegraphics[width=\linewidth]{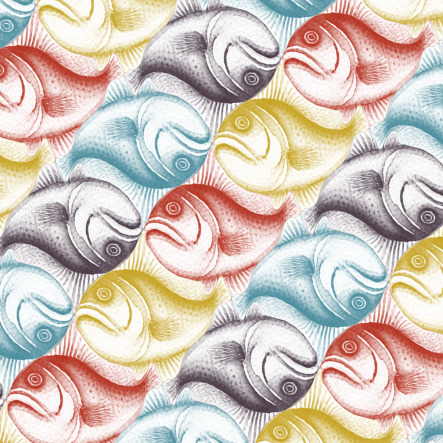}
        \caption{\theprompt{A coy fish} \orbIV}
        \end{subfigure}
        \begin{subfigure}[t]{0.196\linewidth}
        \includegraphics[width=\linewidth]{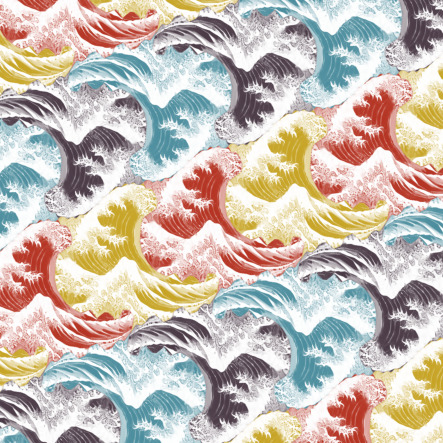}
        \caption{\theprompt{A wave} \orbIV}
        \end{subfigure}
        \begin{subfigure}[t]{0.196\linewidth}
        \includegraphics[width=\linewidth]{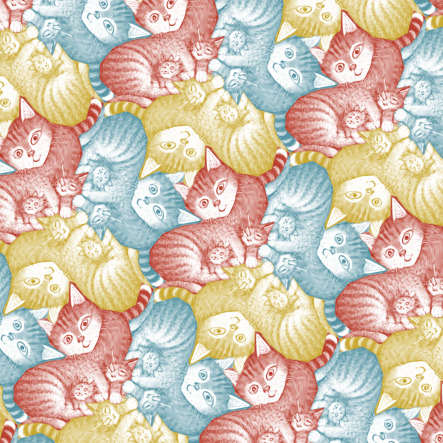}
        \caption{\theprompt{A mother cat} \orbII}
        \end{subfigure}
        \begin{subfigure}[t]{0.196\linewidth}
        \includegraphics[width=\linewidth]{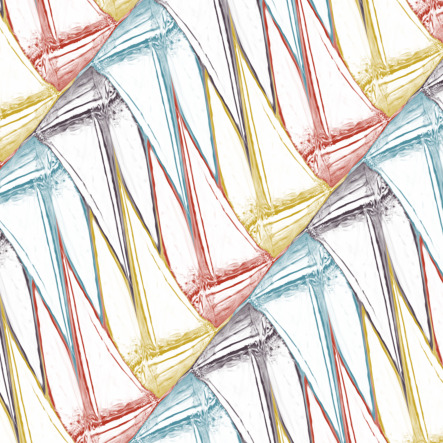}
        \caption{\theprompt{A sailing boat} \orbIV}
        \end{subfigure}
        \begin{subfigure}[t]{0.196\linewidth}
        \includegraphics[width=\linewidth]{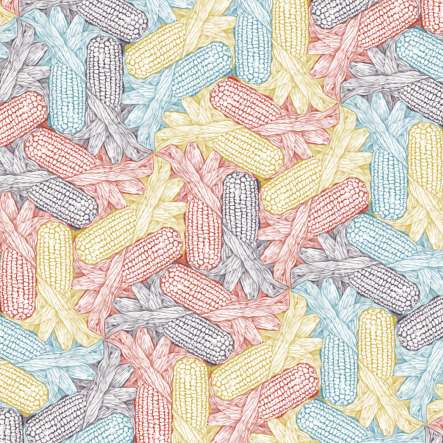}
        \caption{\theprompt{A corn cob} \orbIII}
        \end{subfigure}
\\
        \begin{subfigure}[t]{0.196\linewidth}
        \includegraphics[width=\linewidth]{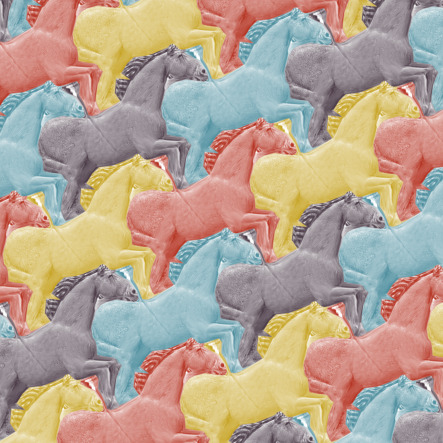}
        \caption{\theprompt{A centaur} \torus}
        \end{subfigure}
        \begin{subfigure}[t]{0.196\linewidth}
        \includegraphics[width=\linewidth]{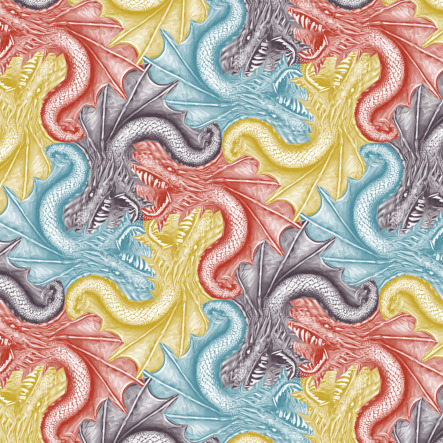}
        \caption{\theprompt{A dragon} \orbI}
        \end{subfigure}
        \begin{subfigure}[t]{0.196\linewidth}
        \includegraphics[width=\linewidth]{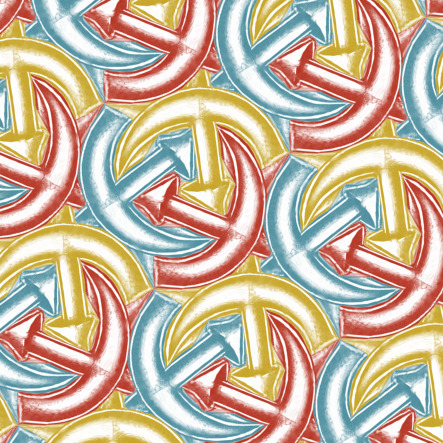}
        \caption{\theprompt{An anchor} \orbII}
        \end{subfigure}
        \begin{subfigure}[t]{0.196\linewidth}
        \includegraphics[width=\linewidth]{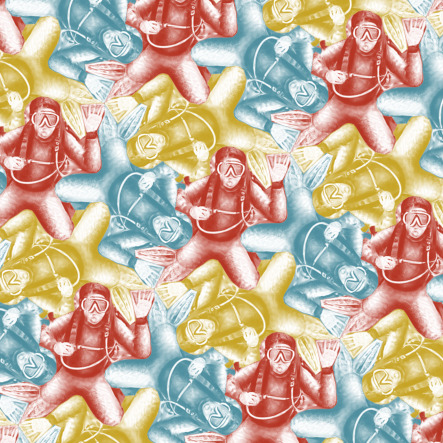}
        \caption{\theprompt{A scooba diver} \orbII}
        \end{subfigure}
        \begin{subfigure}[t]{0.196\linewidth}
        \includegraphics[width=\linewidth]{figures/gallery_2/14.jpg}
        \caption{\theprompt{A penguin} \torus}
        \end{subfigure}
\\
        \begin{subfigure}[t]{0.196\linewidth}
        \includegraphics[width=\linewidth]{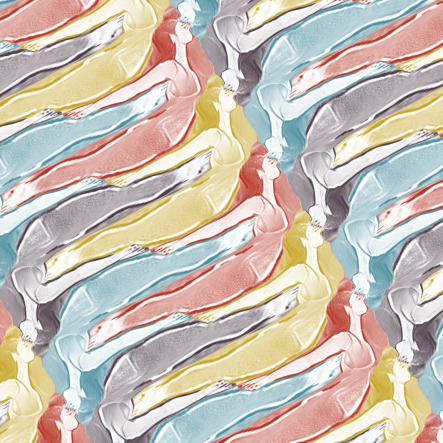}
        \caption{\theprompt{Woman yoga pose} \orbIV}
        \end{subfigure}
        \begin{subfigure}[t]{0.196\linewidth}
        \includegraphics[width=\linewidth]{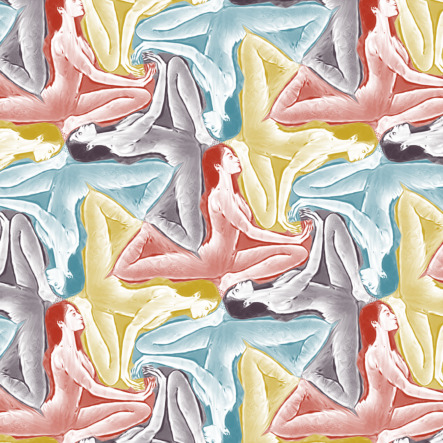}
        \caption{\theprompt{Woman yoga pose} \orbI}
        \end{subfigure}
        \begin{subfigure}[t]{0.196\linewidth}
        \includegraphics[width=\linewidth]{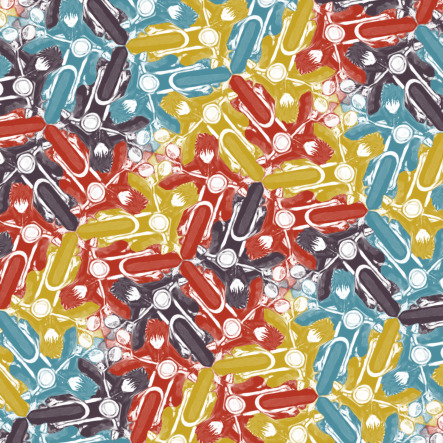}
        \caption{\theprompt{An anime biker} \orbIII}
        \end{subfigure}
        \begin{subfigure}[t]{0.196\linewidth}
        \includegraphics[width=\linewidth]{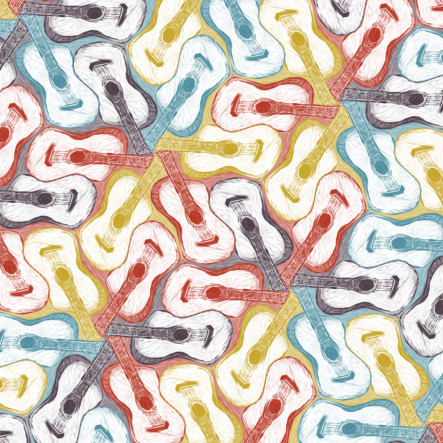}
        \caption{\theprompt{A guitar} \orbIII}
        \end{subfigure}
        \begin{subfigure}[t]{0.196\linewidth}
        \includegraphics[width=\linewidth]{figures/gallery_2/19.jpg}
        \caption{\theprompt{A bat} \torus}
        \end{subfigure}
\\
        \begin{subfigure}[t]{0.196\linewidth}
        \includegraphics[width=\linewidth]{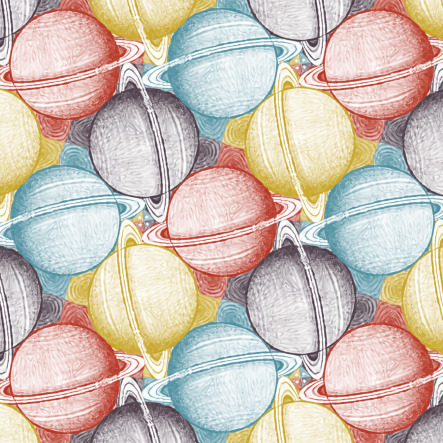}
        \caption{\theprompt{Saturn} \orbI}
        \end{subfigure}
        \begin{subfigure}[t]{0.196\linewidth}
        \includegraphics[width=\linewidth]{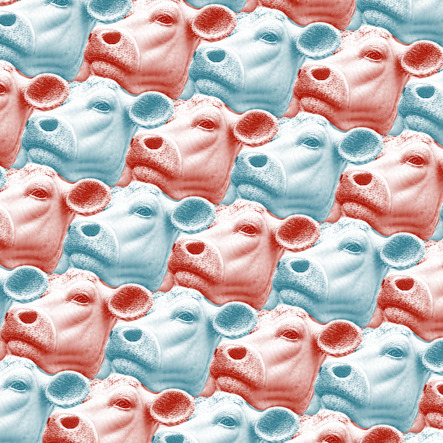}
        \caption{\theprompt{A cow} \torus}
        \end{subfigure}
        \begin{subfigure}[t]{0.196\linewidth}
        \includegraphics[width=\linewidth]{figures/gallery_2/22.jpg}
        \caption{\theprompt{A nerdy bear} \orbIV}
        \end{subfigure}
        \begin{subfigure}[t]{0.196\linewidth}
        \includegraphics[width=\linewidth]{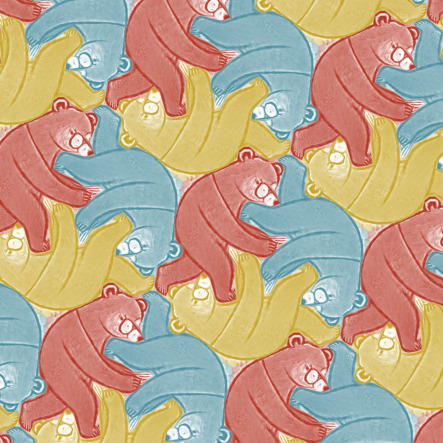}
        \caption{\theprompt{A nerdy bear} \orbII}
        \end{subfigure}
        \begin{subfigure}[t]{0.196\linewidth}
        \includegraphics[width=\linewidth]{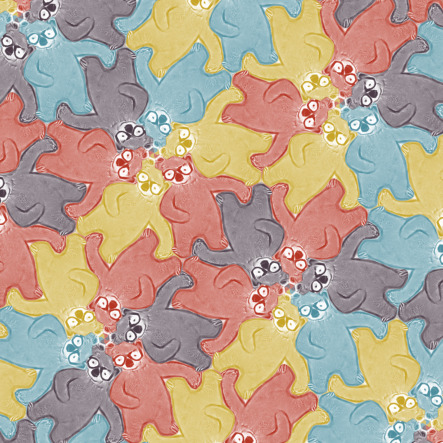}
        \caption{\theprompt{A nerdy bear} \orbIII}
        \end{subfigure}
\\
\captionsetup{justification=justified}
    \caption{\textbf{Tiling menagerie part II.} Examples of tilings produced by our method for different prompts and symmetries. Our method produces appealing, plausible results which contain solely the desired object, and cover the plane without overlaps.}
    \label{fig:gallery_2}
\end{figure*}
\begin{figure*}
    \centering
    \captionsetup{justification=centering}
        \begin{subfigure}[b]{0.196\linewidth}
        \includegraphics[width=\linewidth]{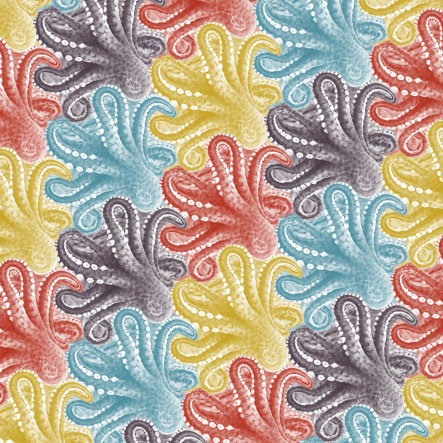}
        \caption{\theprompt{An octopus} \torus}
        \end{subfigure}
        \begin{subfigure}[b]{0.196\linewidth}
        \includegraphics[width=\linewidth]{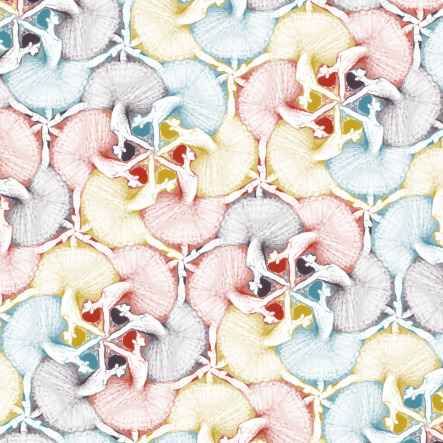}
        \caption{\theprompt{A ballet dancer} \orbIII}
        \end{subfigure}
        \begin{subfigure}[b]{0.196\linewidth}
        \includegraphics[width=\linewidth]{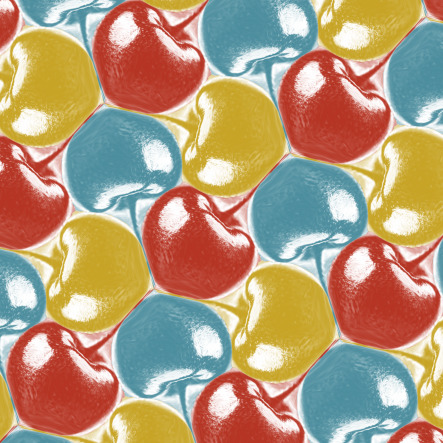}
        \caption{\theprompt{A cherry} \orbII}
        \end{subfigure}
        \begin{subfigure}[b]{0.196\linewidth}
        \includegraphics[width=\linewidth]{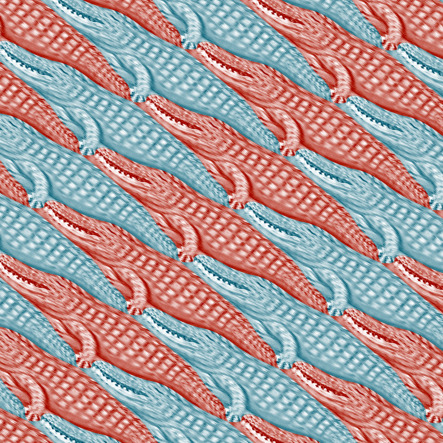}
        \caption{\theprompt{A crocodile} \torus}
        \end{subfigure}
        \begin{subfigure}[b]{0.196\linewidth}
        \includegraphics[width=\linewidth]{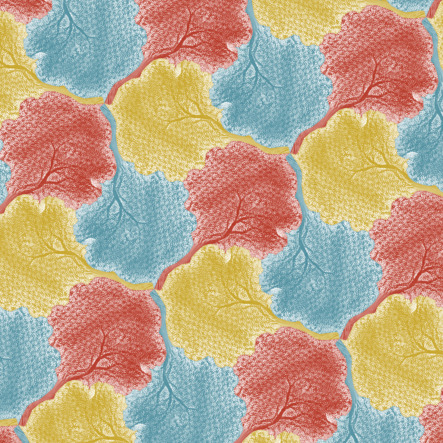}
        \caption{\theprompt{A tree} \orbII}
        \end{subfigure}
\\
        \begin{subfigure}[b]{0.196\linewidth}
        \includegraphics[width=\linewidth]{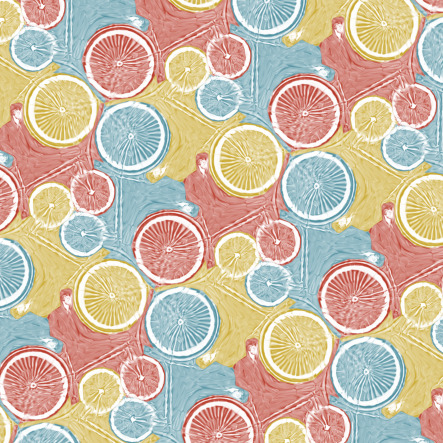}
        \caption{\theprompt{A bicycle} \orbII}
        \end{subfigure}
        \begin{subfigure}[b]{0.196\linewidth}
        \includegraphics[width=\linewidth]{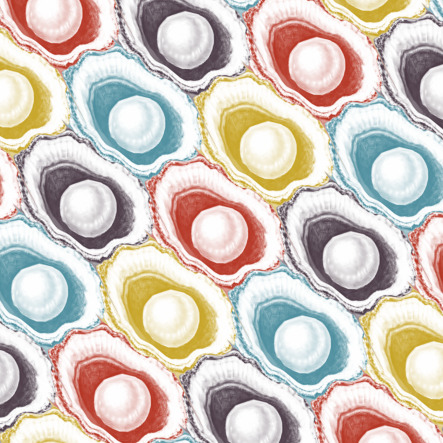}
        \caption{\theprompt{An oyster} \torus}
        \end{subfigure}
        \begin{subfigure}[b]{0.196\linewidth}
        \includegraphics[width=\linewidth]{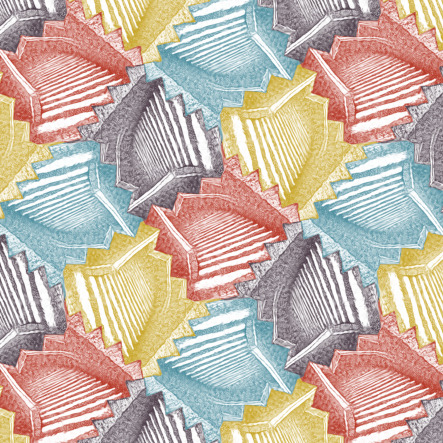}
        \caption{\theprompt{A staircase} \orbI}
        \end{subfigure}
        \begin{subfigure}[b]{0.196\linewidth}
        \includegraphics[width=\linewidth]{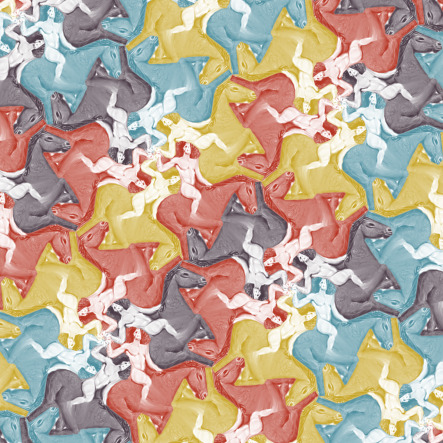}
        \caption{\theprompt{A centaur} \orbIII}
        \end{subfigure}
        \begin{subfigure}[b]{0.196\linewidth}
        \includegraphics[width=\linewidth]{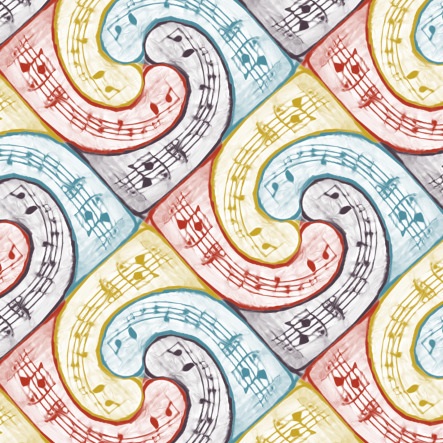}
        \caption{\theprompt{A musical note} \orbI}
        \end{subfigure}
\\
        \begin{subfigure}[b]{0.196\linewidth}
        \includegraphics[width=\linewidth]{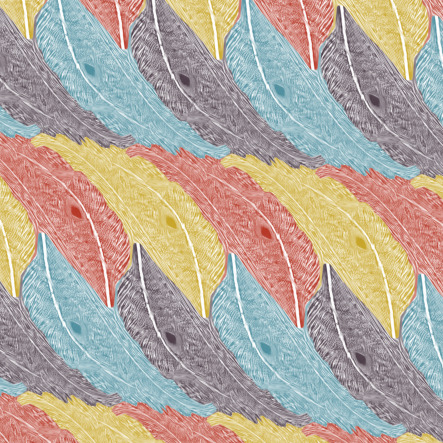}
        \caption{\theprompt{A peacock feather} \orbIV}
        \end{subfigure}
        \begin{subfigure}[b]{0.196\linewidth}
        \includegraphics[width=\linewidth]{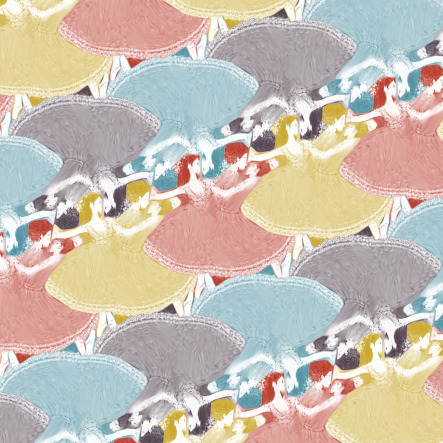}
        \caption{\theprompt{A dancing couple} \orbIV}
        \end{subfigure}
        \begin{subfigure}[b]{0.196\linewidth}
        \includegraphics[width=\linewidth]{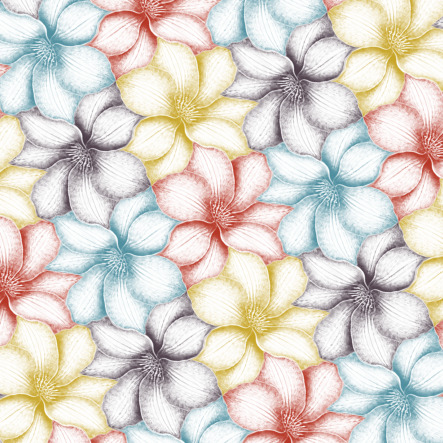}
        \caption{\theprompt{A tropical flower} \orbIV}
        \end{subfigure}
        \begin{subfigure}[b]{0.196\linewidth}
        \includegraphics[width=\linewidth]{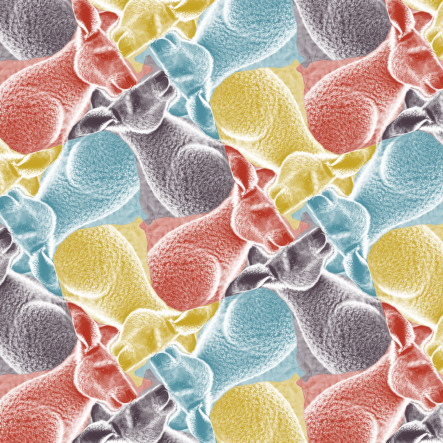}
        \caption{\theprompt{A kangaroo} \orbI}
        \end{subfigure}
        \begin{subfigure}[b]{0.196\linewidth}
        \includegraphics[width=\linewidth]{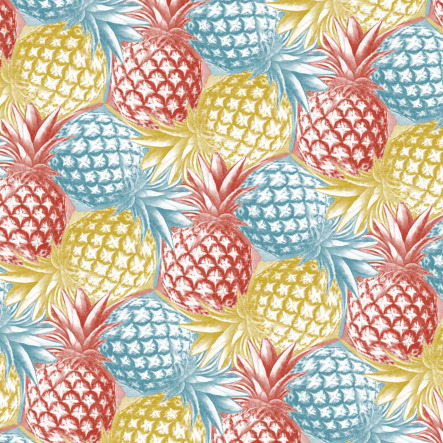}
        \caption{\theprompt{A pineapple} \orbII}
        \end{subfigure}
\\
        \begin{subfigure}[b]{0.196\linewidth}
        \includegraphics[width=\linewidth]{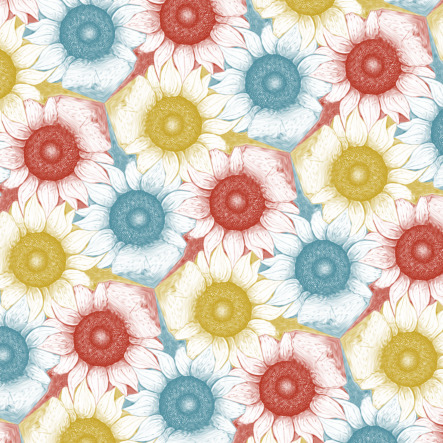}
        \caption{\theprompt{A sunflower} \orbII}
        \end{subfigure}
        \begin{subfigure}[b]{0.196\linewidth}
        \includegraphics[width=\linewidth]{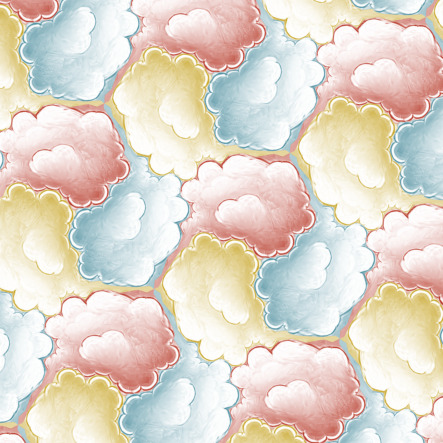}
        \caption{\theprompt{A cloud} \orbII}
        \end{subfigure}
        \begin{subfigure}[b]{0.196\linewidth}
        \includegraphics[width=\linewidth]{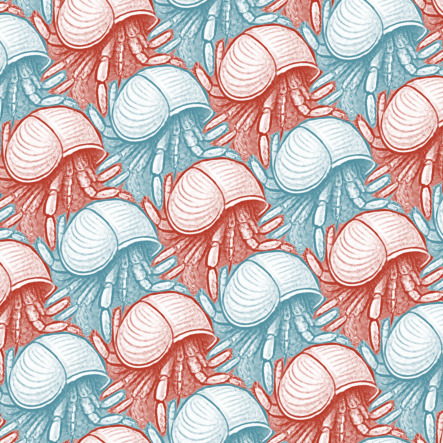}
        \caption{\theprompt{A hermit crab} \torus}
        \end{subfigure}
        \begin{subfigure}[b]{0.196\linewidth}
        \includegraphics[width=\linewidth]{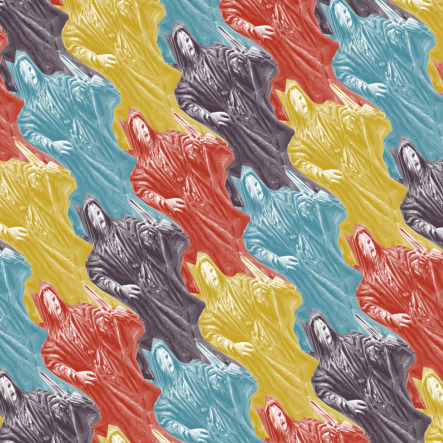}
        \caption{\theprompt{A sorcerer} \torus}
        \end{subfigure}
        \begin{subfigure}[b]{0.196\linewidth}
        \includegraphics[width=\linewidth]{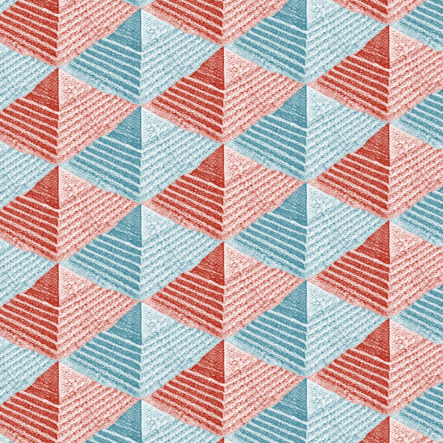}
        \caption{\theprompt{Pyramids of Giza} \torus}
        \end{subfigure}
\\
        \begin{subfigure}[b]{0.196\linewidth}
        \includegraphics[width=\linewidth]{figures/gallery_1/20.jpg}
        \caption{\theprompt{A lion} \torus}
        \end{subfigure}
        \begin{subfigure}[b]{0.196\linewidth}
        \includegraphics[width=\linewidth]{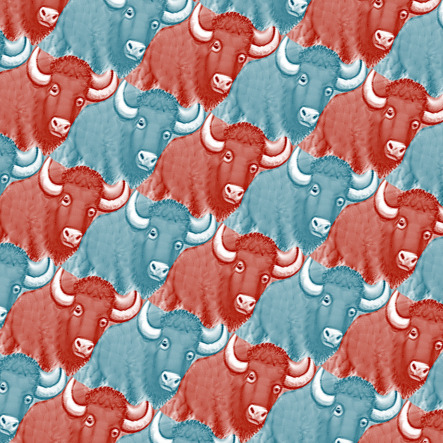}
        \caption{\theprompt{A buffalo} \torus}
        \end{subfigure}
        \begin{subfigure}[b]{0.196\linewidth}
        \includegraphics[width=\linewidth]{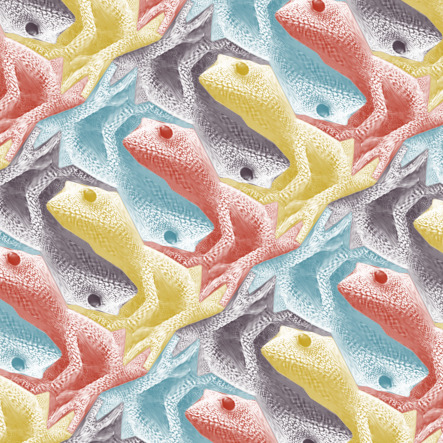}
        \caption{\theprompt{A gecko} \orbIV}
        \end{subfigure}
        \begin{subfigure}[b]{0.196\linewidth}
        \includegraphics[width=\linewidth]{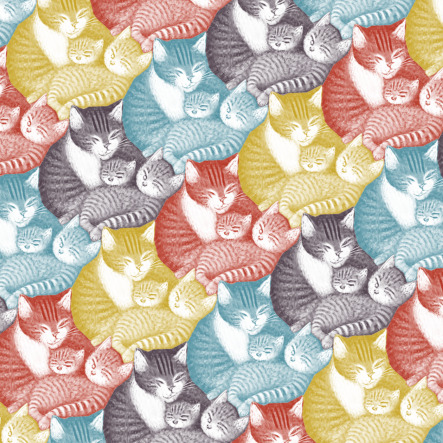}
        \caption{\theprompt{A cat} \torus}
        \end{subfigure}
        \begin{subfigure}[b]{0.196\linewidth}
        \includegraphics[width=\linewidth]{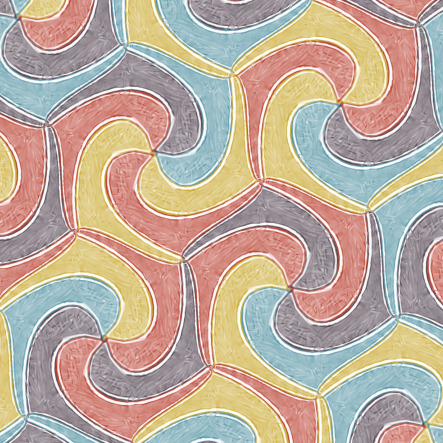}
        \caption{\theprompt{A boomerang} \orbIII}
        \end{subfigure}
\\
\captionsetup{justification=justified}
    \caption{\textbf{Tiling menagerie part III.} Examples of tilings produced by our method for different prompts and symmetries. Our method produces appealing, plausible results which contain solely the desired object, and cover the plane without overlaps.}
    \label{fig:gallery_1}
\end{figure*}
\begin{figure*}
    \centering
    \captionsetup{justification=centering}
        \begin{subfigure}[b]{0.196\linewidth}
        \includegraphics[width=\linewidth]{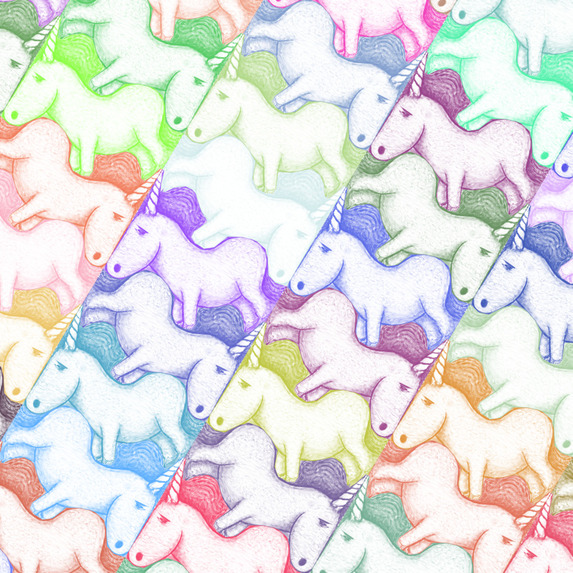}
        \caption{\theprompt{a fat unicorn} \mob}
        \end{subfigure}
        \begin{subfigure}[b]{0.196\linewidth}
        \includegraphics[width=\linewidth]{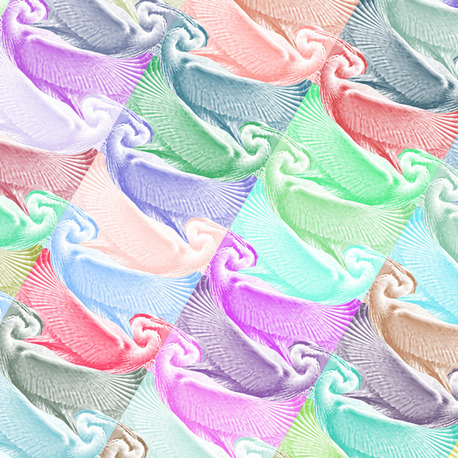}
        \caption{\theprompt{a heron} \mob}
        \end{subfigure}
        \begin{subfigure}[b]{0.196\linewidth}
        \includegraphics[width=\linewidth]{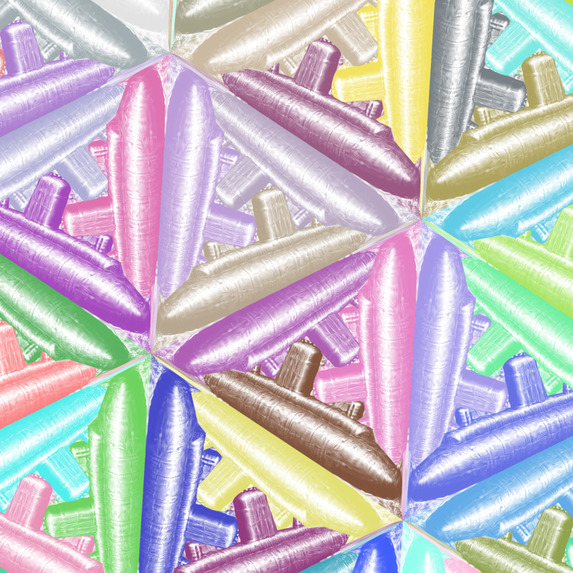}
        \caption{\theprompt{a submarine} \orbIIhyb}
        \end{subfigure}
        \begin{subfigure}[b]{0.196\linewidth}
        \includegraphics[width=\linewidth]{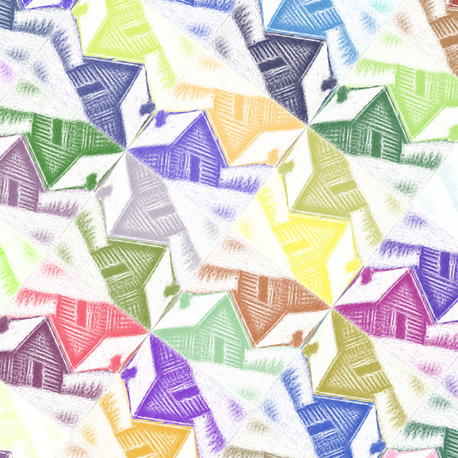}
        \caption{\theprompt{a snowy cabin} \orbRhyb}
        \end{subfigure}
        \begin{subfigure}[b]{0.196\linewidth}
        \includegraphics[width=\linewidth]{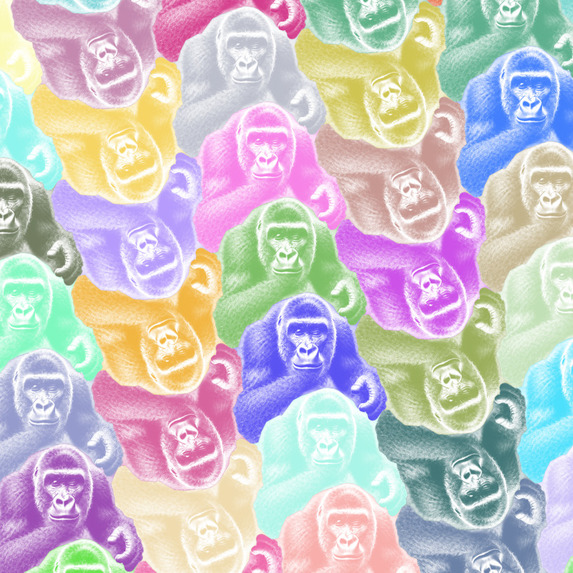}
        \caption{\theprompt{a funky gorilla} \klein}
        \end{subfigure}
\\
        \begin{subfigure}[b]{0.196\linewidth}
        \includegraphics[width=\linewidth]{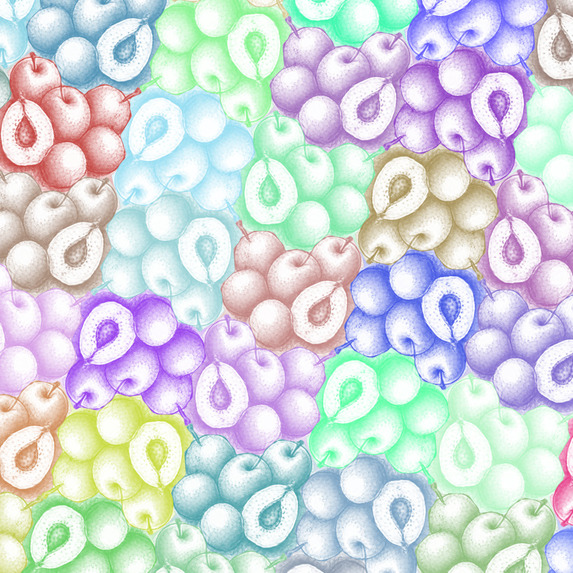}
        \caption{\theprompt{Fruits} \projective}
        \end{subfigure}
        \begin{subfigure}[b]{0.196\linewidth}
        \includegraphics[width=\linewidth]{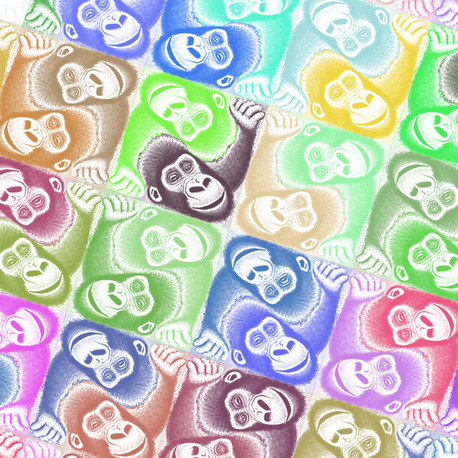}
        \caption{\theprompt{a funky gorilla} \orbRhyb}
        \end{subfigure}
        \begin{subfigure}[b]{0.196\linewidth}
        \includegraphics[width=\linewidth]{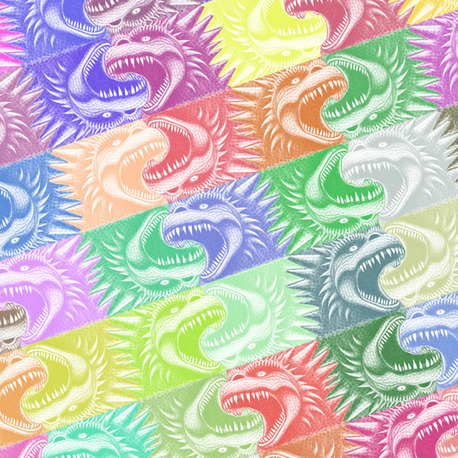}
        \caption{\theprompt{a Godzilla} \orbIVhyb}
        \end{subfigure}
        \begin{subfigure}[b]{0.196\linewidth}
        \includegraphics[width=\linewidth]{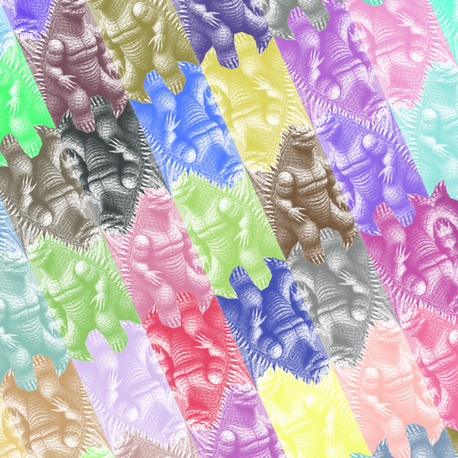}
        \caption{\theprompt{a Godzilla} \orbIVhyb}
        \end{subfigure}
        \begin{subfigure}[b]{0.196\linewidth}
        \includegraphics[width=\linewidth]{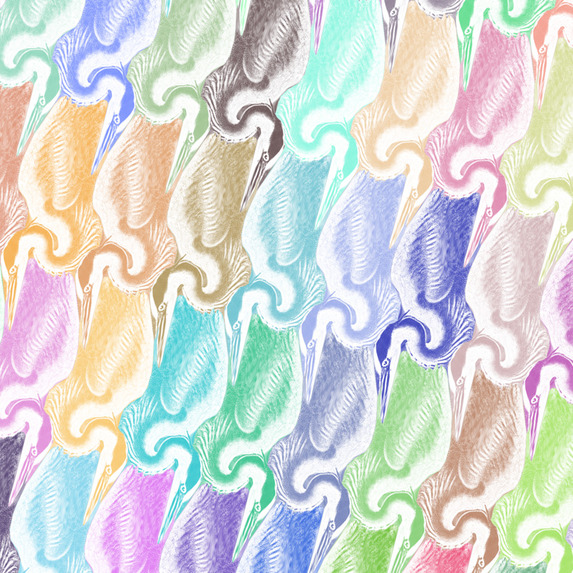}
        \caption{\theprompt{a heron} \klein}
        \end{subfigure}
\\
        \begin{subfigure}[b]{0.196\linewidth}
        \includegraphics[width=\linewidth]{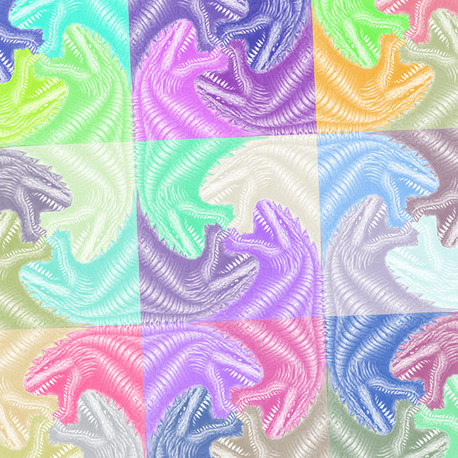}
        \caption{\theprompt{a Godzilla} \orbIhyb}
        \end{subfigure}
        \begin{subfigure}[b]{0.196\linewidth}
        \includegraphics[width=\linewidth]{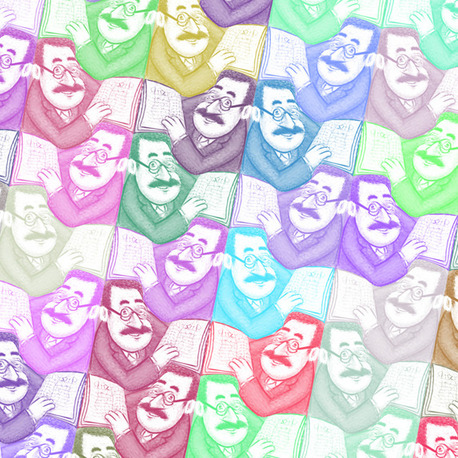}
        \caption{\theprompt{a math professor} \mob}
        \end{subfigure}
        \begin{subfigure}[b]{0.196\linewidth}
        \includegraphics[width=\linewidth]{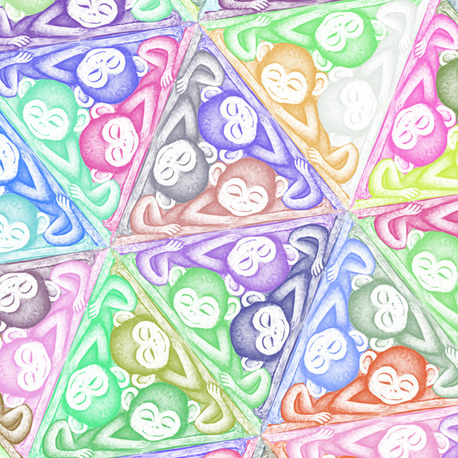}
        \caption{\theprompt{a monkey} \orbIIhyb}
        \end{subfigure}
        \begin{subfigure}[b]{0.196\linewidth}
        \includegraphics[width=\linewidth]{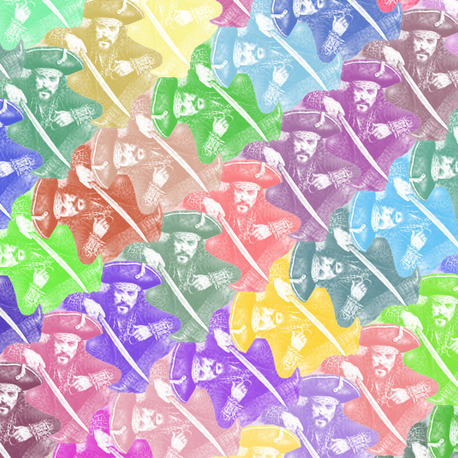}
        \caption{\theprompt{a pirate} \klein}
        \end{subfigure}
        \begin{subfigure}[b]{0.196\linewidth}
        \includegraphics[width=\linewidth]{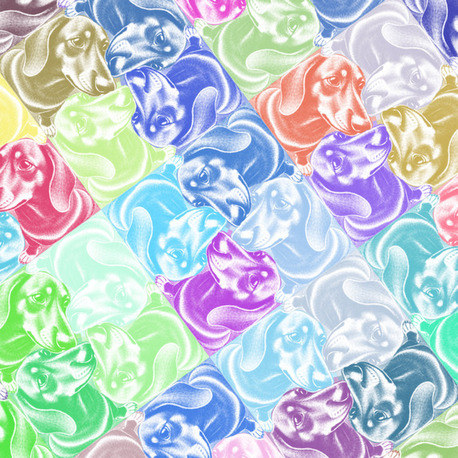}
        \caption{\theprompt{a Dachshund} \orbIhyb}
        \end{subfigure}
\\
        \begin{subfigure}[b]{0.196\linewidth}
        \includegraphics[width=\linewidth]{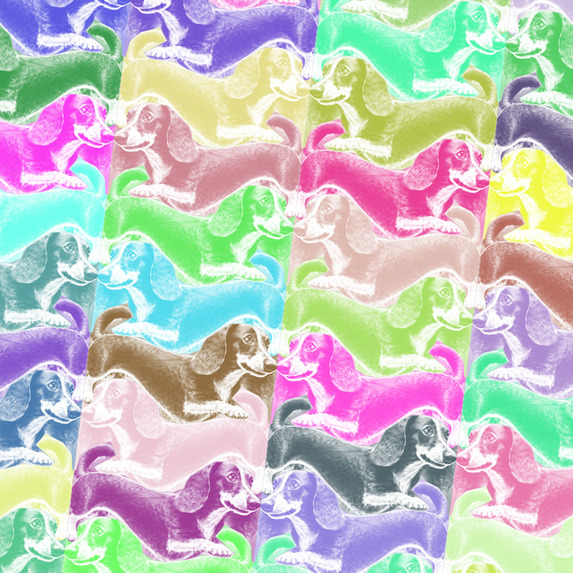}
        \caption{\theprompt{a Dachshund} \mob}
        \end{subfigure}
        \begin{subfigure}[b]{0.196\linewidth}
        \includegraphics[width=\linewidth]{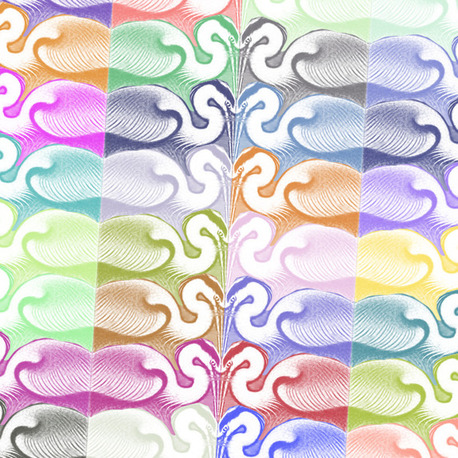}
        \caption{\theprompt{a heron} \cylinder}
        \end{subfigure}
        \begin{subfigure}[b]{0.196\linewidth}
        \includegraphics[width=\linewidth]{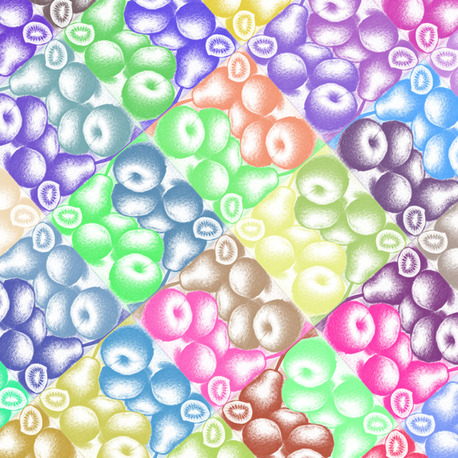}
        \caption{\theprompt{Fruits} \orbRhyb}
        \end{subfigure}
        \begin{subfigure}[b]{0.196\linewidth}
        \includegraphics[width=\linewidth]{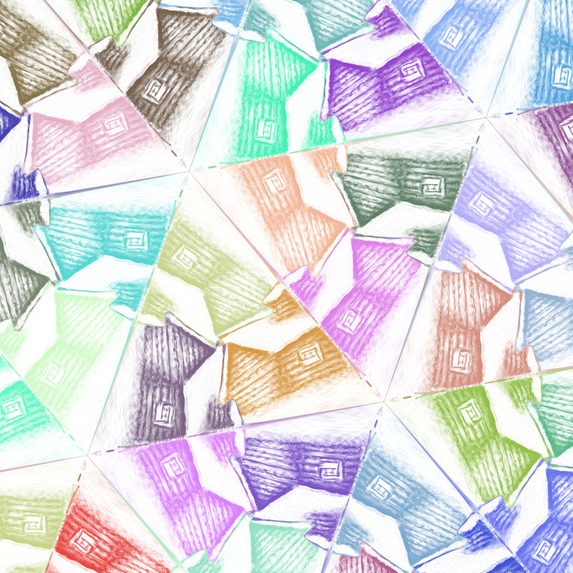}
        \caption{\theprompt{a snowy cabin} \orbIIhyb}
        \end{subfigure}
        \begin{subfigure}[b]{0.196\linewidth}
        \includegraphics[width=\linewidth]{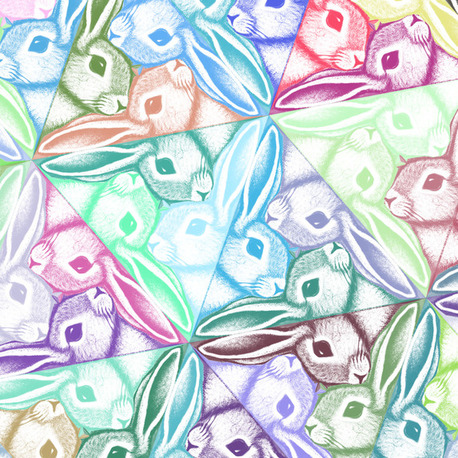}
        \caption{\theprompt{a hippie rabbit} \orbIIhyb}
        \end{subfigure}
\\
        \begin{subfigure}[b]{0.196\linewidth}
        \includegraphics[width=\linewidth]{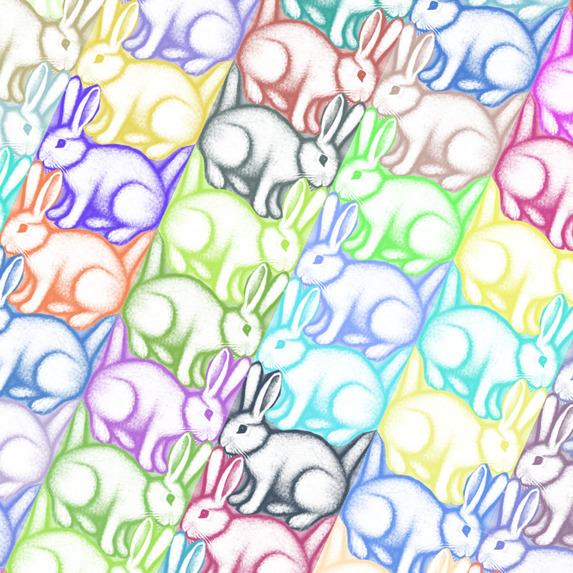}
        \caption{\theprompt{a hippie rabbit} \cylinder}
        \end{subfigure}
        \begin{subfigure}[b]{0.196\linewidth}
        \includegraphics[width=\linewidth]{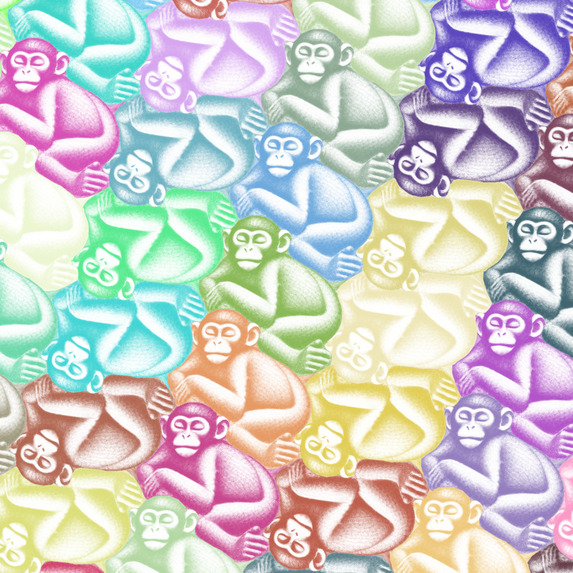}
        \caption{\theprompt{a monkey} \klein}
        \end{subfigure}
        \begin{subfigure}[b]{0.196\linewidth}
        \includegraphics[width=\linewidth]{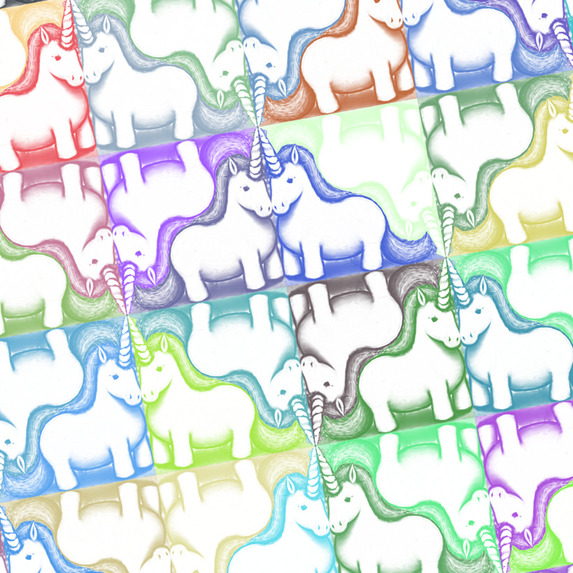}
        \caption{\theprompt{a fat unicorn} \orbRhyb}
        \end{subfigure}
        \begin{subfigure}[b]{0.196\linewidth}
        \includegraphics[width=\linewidth]{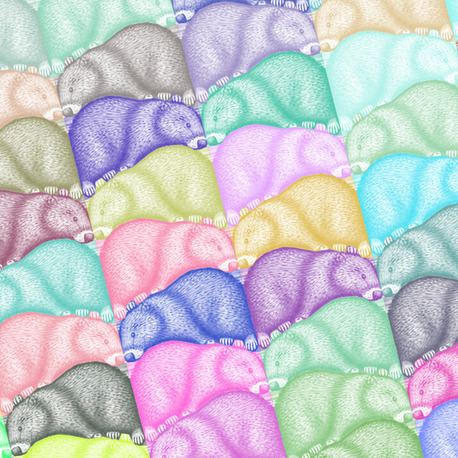}
        \caption{\theprompt{a beaver} \mob}
        \end{subfigure}
        \begin{subfigure}[b]{0.196\linewidth}
        \includegraphics[width=\linewidth]{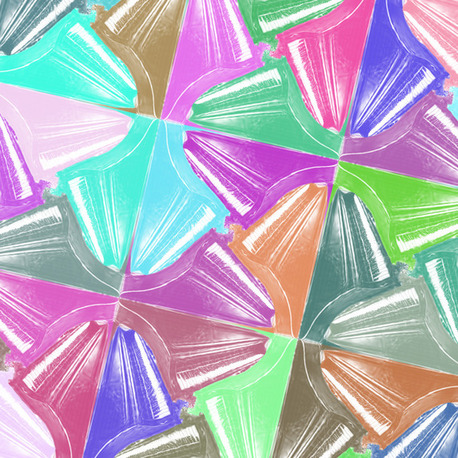}
        \caption{\theprompt{a piano} \orbIhyb}
        \end{subfigure}
\\
\captionsetup{justification=justified}
    \caption{\textbf{Tiling menagerie part IV.} Examples of tilings produced by our method for different prompts and symmetries. Our method produces appealing, plausible results which contain solely the desired object, and cover the plane without overlaps.}
    \label{fig:gallery_0}
\end{figure*}

\begin{figure*}
    \centering
    \captionsetup{justification=centering}
        \begin{subfigure}[b]{0.196\linewidth}
        \includegraphics[width=\linewidth]{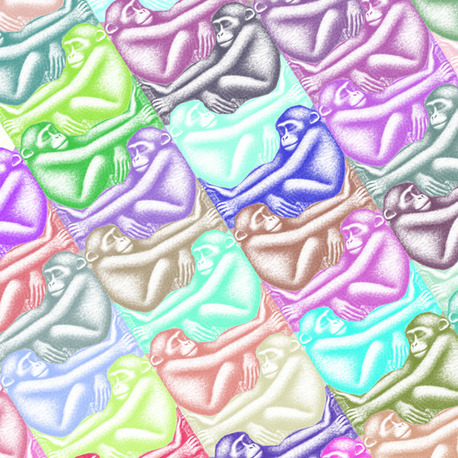}
        \caption{\theprompt{a Skeletton} \mob}
        \end{subfigure}
        \begin{subfigure}[b]{0.196\linewidth}
        \includegraphics[width=\linewidth]{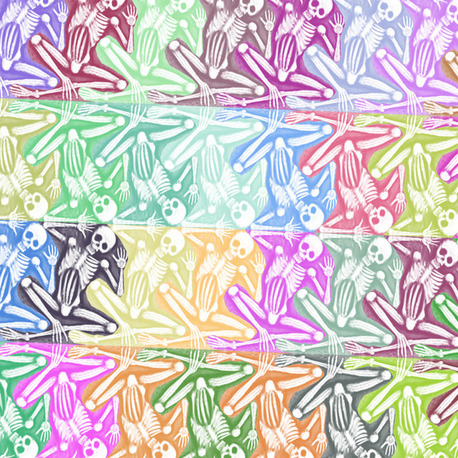}
        \caption{\theprompt{a Skeletton} \cylinder}
        \end{subfigure}
        \begin{subfigure}[b]{0.196\linewidth}
        \includegraphics[width=\linewidth]{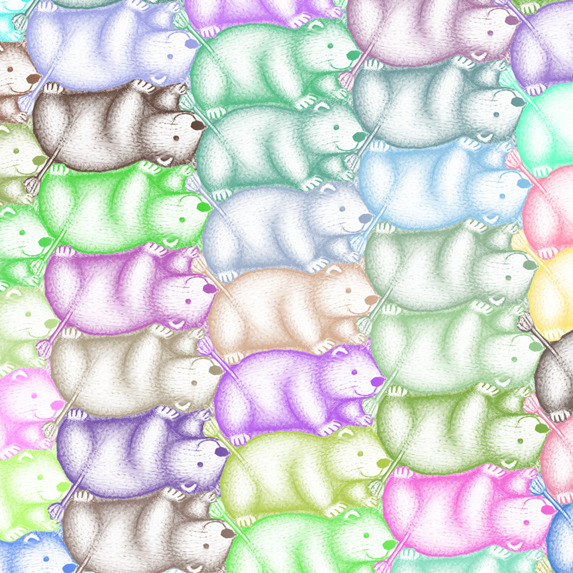}
        \caption{\theprompt{a beaver} \klein}
        \end{subfigure}
        \begin{subfigure}[b]{0.196\linewidth}
        \includegraphics[width=\linewidth]{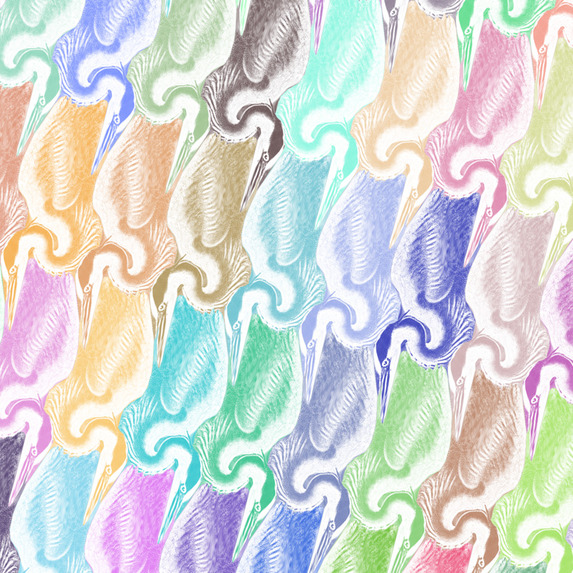}
        \caption{\theprompt{a heron} \klein}
        \end{subfigure}
        \begin{subfigure}[b]{0.196\linewidth}
        \includegraphics[width=\linewidth]{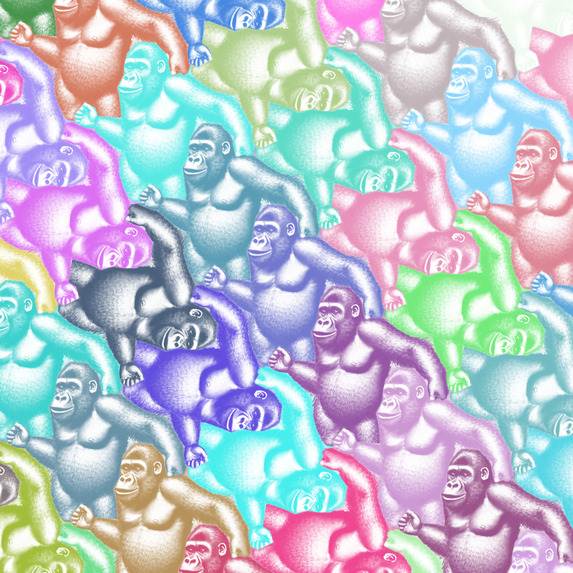}
        \caption{\theprompt{a funky gorilla} \klein}
        \end{subfigure}
\\
        \begin{subfigure}[b]{0.196\linewidth}
        \includegraphics[width=\linewidth]{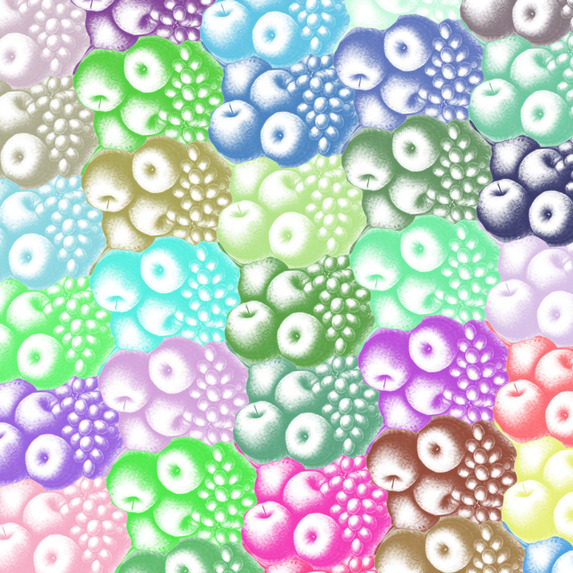}
        \caption{\theprompt{Fruits} \klein}
        \end{subfigure}
        \begin{subfigure}[b]{0.196\linewidth}
        \includegraphics[width=\linewidth]{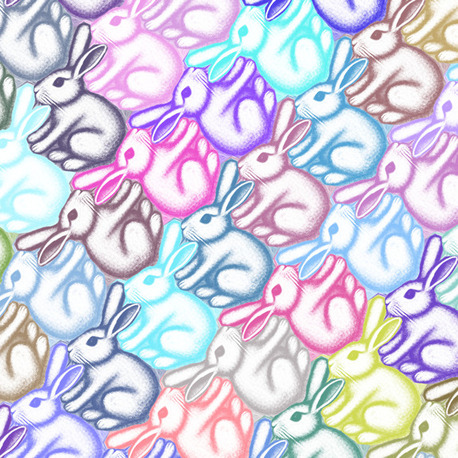}
        \caption{\theprompt{a hippie rabbit} \klein}
        \end{subfigure}
        \begin{subfigure}[b]{0.196\linewidth}
        \includegraphics[width=\linewidth]{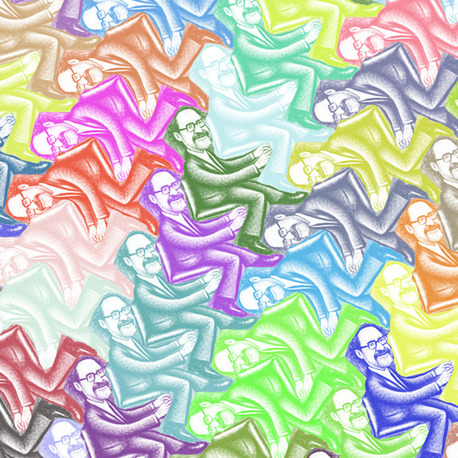}
        \caption{\theprompt{a math professor} \klein}
        \end{subfigure}
        \begin{subfigure}[b]{0.196\linewidth}
        \includegraphics[width=\linewidth]{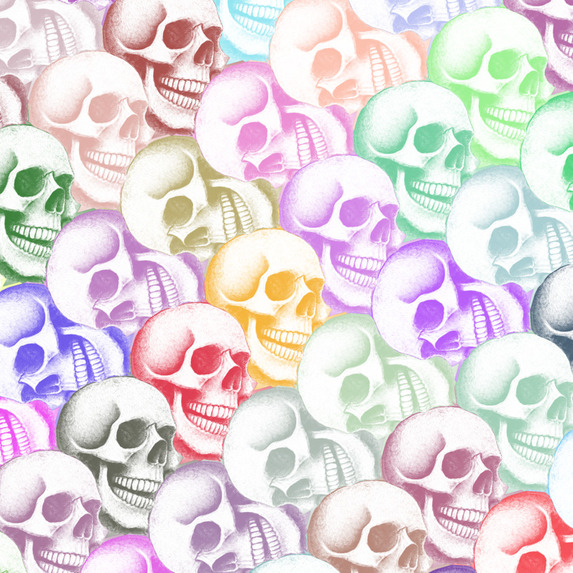}
        \caption{\theprompt{a skull} \klein}
        \end{subfigure}
        \begin{subfigure}[b]{0.196\linewidth}
        \includegraphics[width=\linewidth]{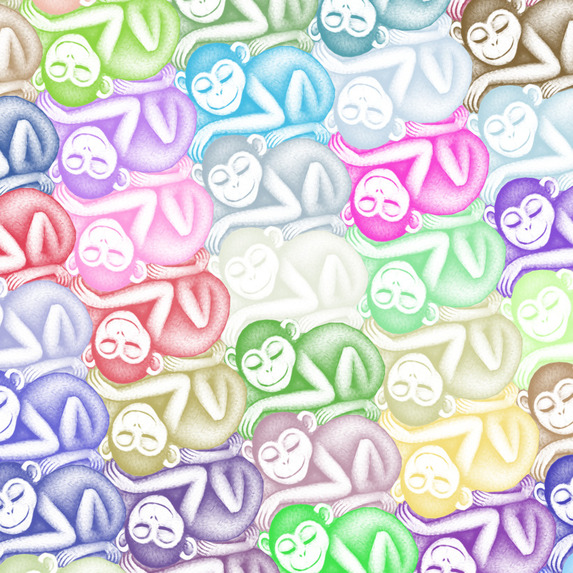}
        \caption{\theprompt{a monkey} \klein}
        \end{subfigure}
\\
        \begin{subfigure}[b]{0.196\linewidth}
        \includegraphics[width=\linewidth]{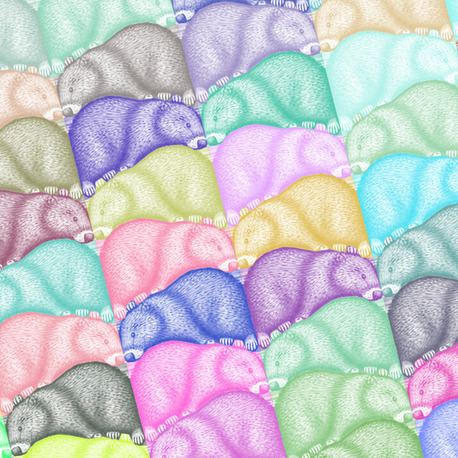}
        \caption{\theprompt{a beaver} \mob}
        \end{subfigure}
        \begin{subfigure}[b]{0.196\linewidth}
        \includegraphics[width=\linewidth]{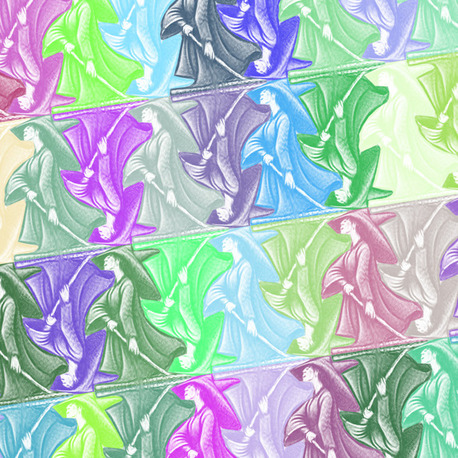}
        \caption{\theprompt{a witch} \mob}
        \end{subfigure}
        \begin{subfigure}[b]{0.196\linewidth}
        \includegraphics[width=\linewidth]{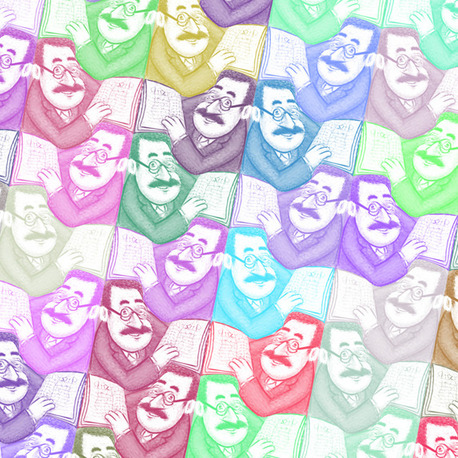}
        \caption{\theprompt{a math professor} \mob}
        \end{subfigure}
        \begin{subfigure}[b]{0.196\linewidth}
        \includegraphics[width=\linewidth]{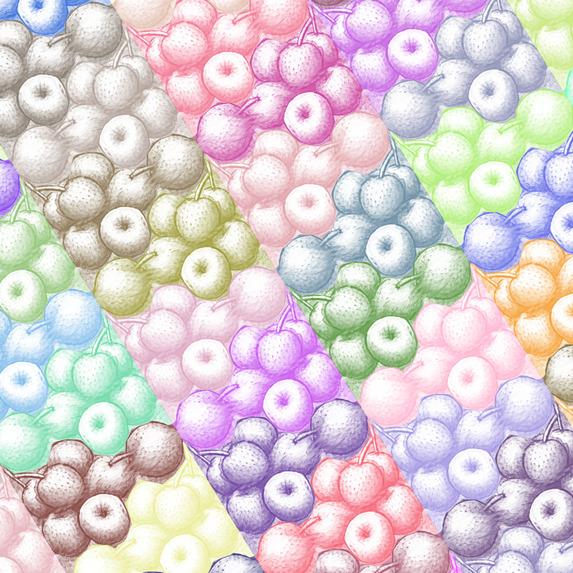}
        \caption{\theprompt{Fruits} \mob}
        \end{subfigure}
        \begin{subfigure}[b]{0.196\linewidth}
        \includegraphics[width=\linewidth]{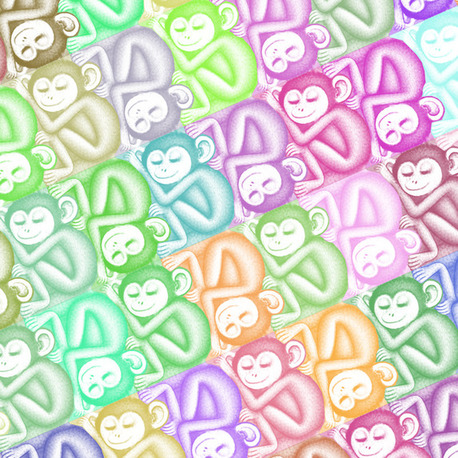}
        \caption{\theprompt{a monkey} \mob}
        \end{subfigure}
\\
        \begin{subfigure}[b]{0.196\linewidth}
        \includegraphics[width=\linewidth]{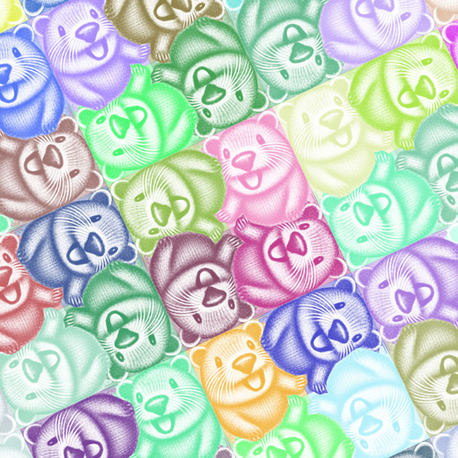}
        \caption{\theprompt{a beaver} \orbIhyb}
        \end{subfigure}
        \begin{subfigure}[b]{0.196\linewidth}
        \includegraphics[width=\linewidth]{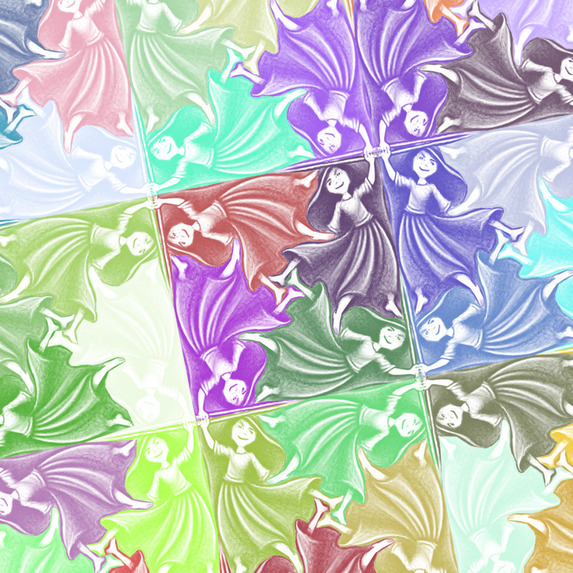}
        \caption{\theprompt{a witch} \orbIhyb}
        \end{subfigure}
        \begin{subfigure}[b]{0.196\linewidth}
        \includegraphics[width=\linewidth]{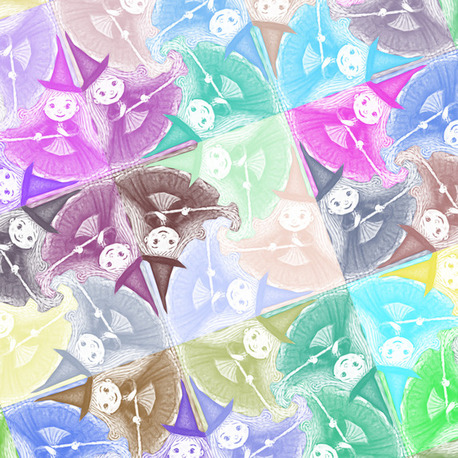}
        \caption{\theprompt{a witch} \orbIhyb}
        \end{subfigure}
        \begin{subfigure}[b]{0.196\linewidth}
        \includegraphics[width=\linewidth]{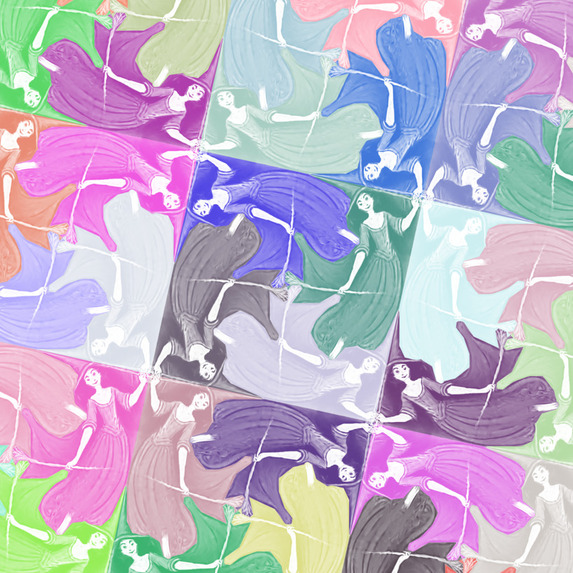}
        \caption{\theprompt{a witch} \orbIhyb}
        \end{subfigure}
        \begin{subfigure}[b]{0.196\linewidth}
        \includegraphics[width=\linewidth]{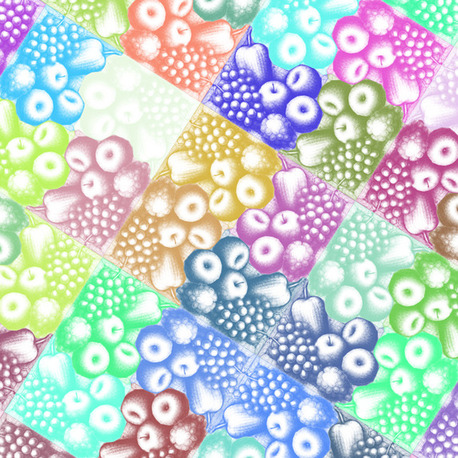}
        \caption{\theprompt{Fruits} \orbIhyb}
        \end{subfigure}
\\
        \begin{subfigure}[b]{0.196\linewidth}
        \includegraphics[width=\linewidth]{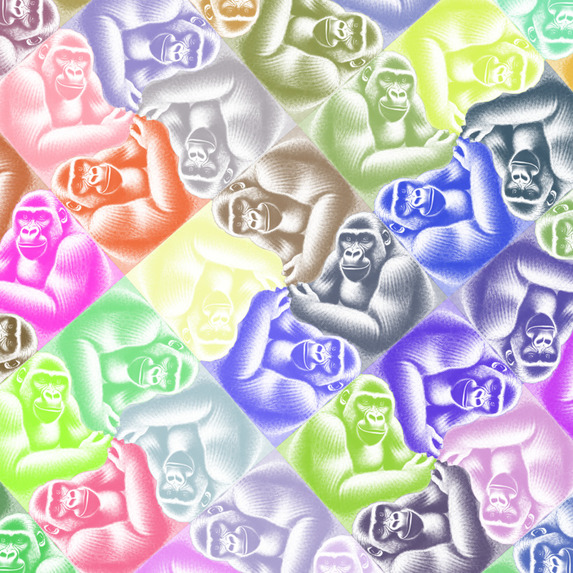}
        \caption{\theprompt{a funky gorilla} \orbIhyb}
        \end{subfigure}
        \begin{subfigure}[b]{0.196\linewidth}
        \includegraphics[width=\linewidth]{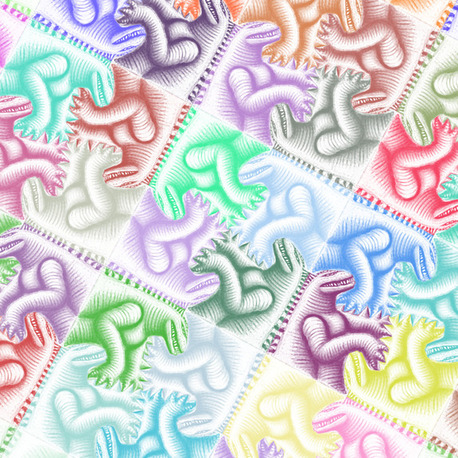}
        \caption{\theprompt{a Godzilla} \orbIhyb}
        \end{subfigure}
        \begin{subfigure}[b]{0.196\linewidth}
        \includegraphics[width=\linewidth]{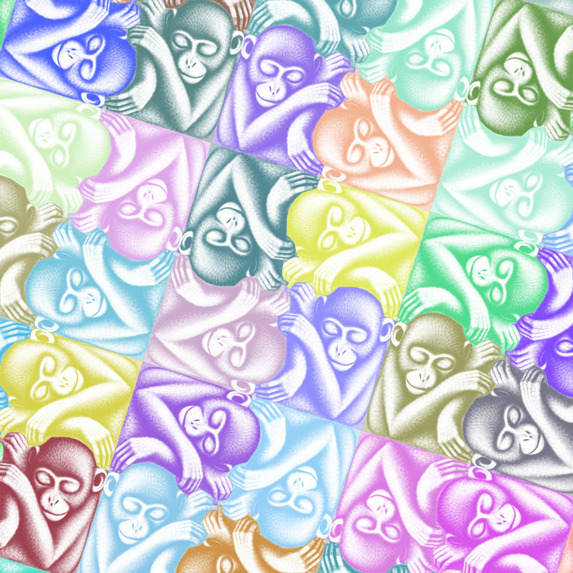}
        \caption{\theprompt{a monkey} \orbIhyb}
        \end{subfigure}
        \begin{subfigure}[b]{0.196\linewidth}
        \includegraphics[width=\linewidth]{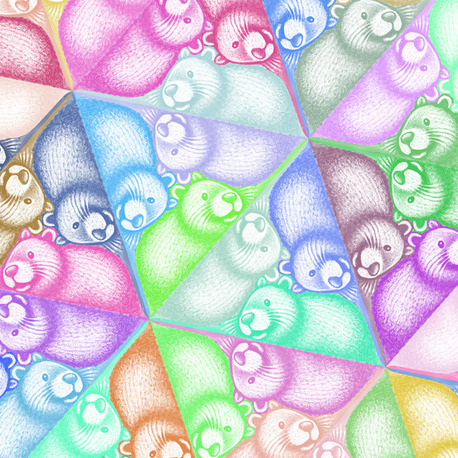}
        \caption{\theprompt{a beaver} \orbIIhyb}
        \end{subfigure}
        \begin{subfigure}[b]{0.196\linewidth}
        \includegraphics[width=\linewidth]{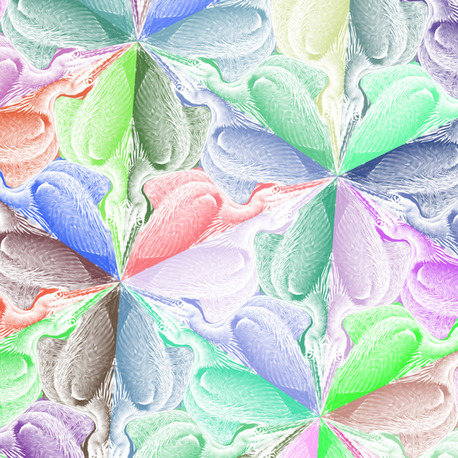}
        \caption{\theprompt{a heron} \orbIIhyb}
        \end{subfigure}
\\
\captionsetup{justification=justified}
    \caption{\textbf{Tiling menagerie part V.} Examples of tilings produced by our method for different prompts and symmetries. Our method produces appealing, plausible results which contain solely the desired object, and cover the plane without overlaps.}
    \label{fig:gallery_1}
\end{figure*}

\begin{figure*}
    \centering
    \captionsetup{justification=centering}
        \begin{subfigure}[b]{0.196\linewidth}
        \includegraphics[width=\linewidth]{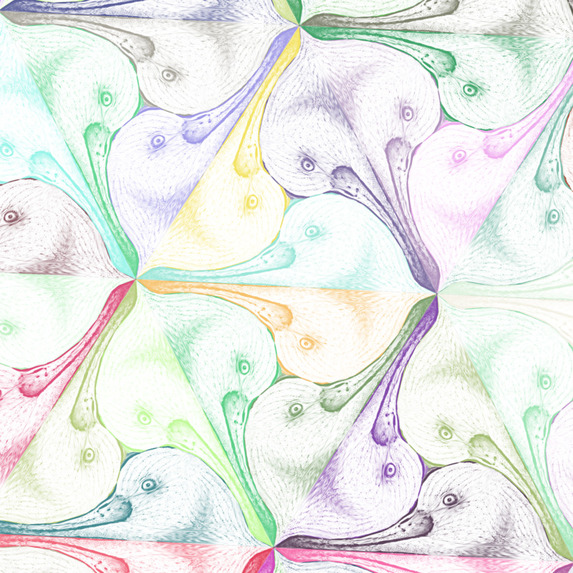}
        \caption{\theprompt{a flamingo} \orbIIhyb}
        \end{subfigure}
        \begin{subfigure}[b]{0.196\linewidth}
        \includegraphics[width=\linewidth]{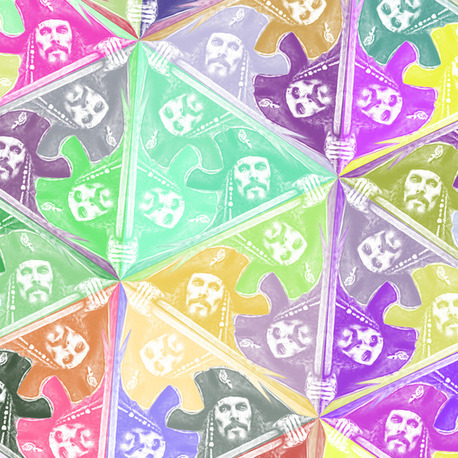}
        \caption{\theprompt{a pirate} \orbIIhyb}
        \end{subfigure}
        \begin{subfigure}[b]{0.196\linewidth}
        \includegraphics[width=\linewidth]{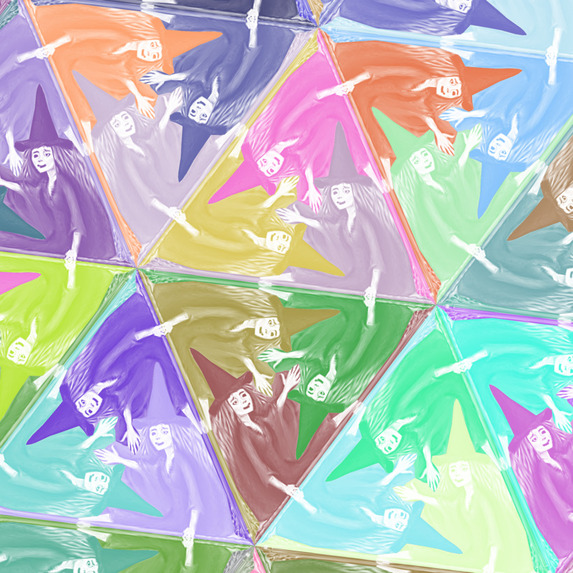}
        \caption{\theprompt{a witch} \orbIIhyb}
        \end{subfigure}
        \begin{subfigure}[b]{0.196\linewidth}
        \includegraphics[width=\linewidth]{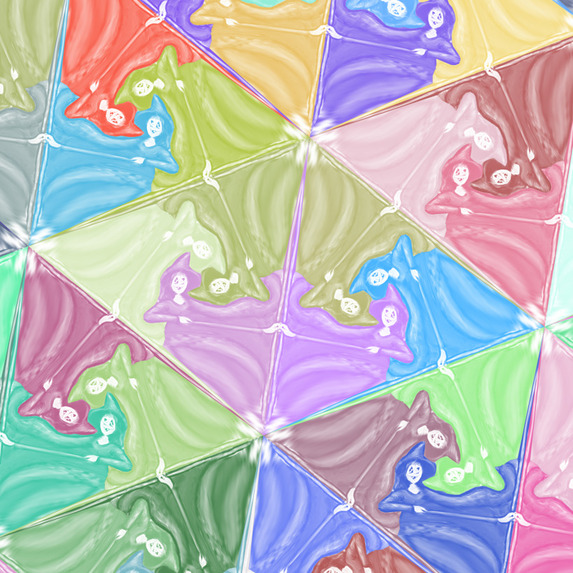}
        \caption{\theprompt{a witch} \orbIIhyb}
        \end{subfigure}
        \begin{subfigure}[b]{0.196\linewidth}
        \includegraphics[width=\linewidth]{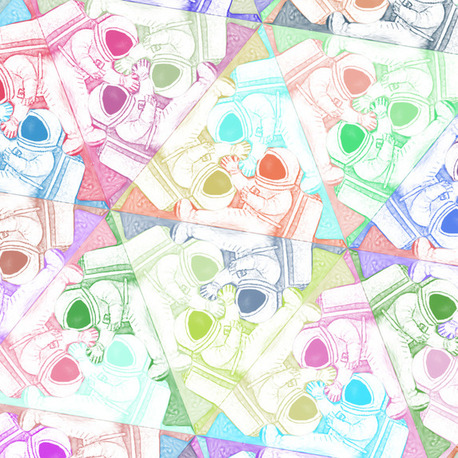}
        \caption{\theprompt{an astronaut} \orbIIhyb}
        \end{subfigure}
\\
        \begin{subfigure}[b]{0.196\linewidth}
        \includegraphics[width=\linewidth]{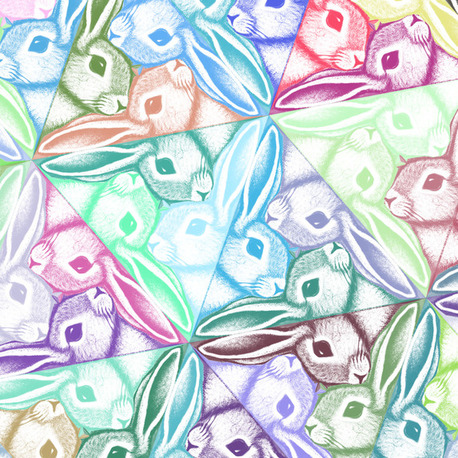}
        \caption{\theprompt{a hippie rabbit} \orbIIhyb}
        \end{subfigure}
        \begin{subfigure}[b]{0.196\linewidth}
        \includegraphics[width=\linewidth]{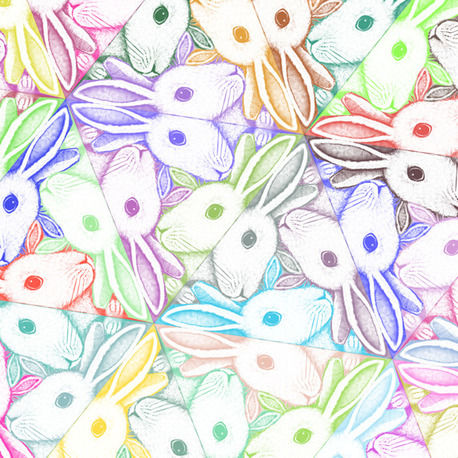}
        \caption{\theprompt{a hippie rabbit} \orbIIhyb}
        \end{subfigure}
        \begin{subfigure}[b]{0.196\linewidth}
        \includegraphics[width=\linewidth]{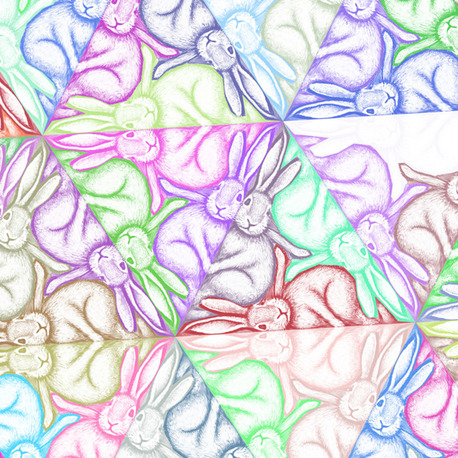}
        \caption{\theprompt{a hippie rabbit} \orbIIhyb}
        \end{subfigure}
        \begin{subfigure}[b]{0.196\linewidth}
        \includegraphics[width=\linewidth]{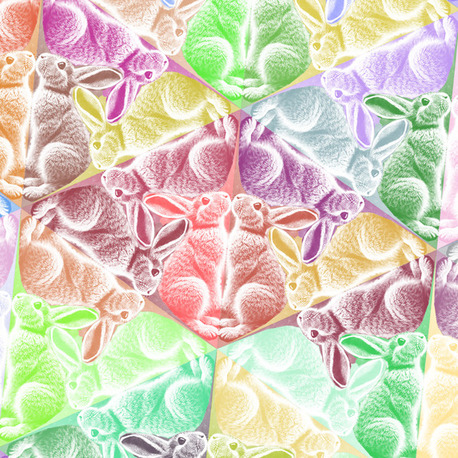}
        \caption{\theprompt{a hippie rabbit} \orbIIhyb}
        \end{subfigure}
        \begin{subfigure}[b]{0.196\linewidth}
        \includegraphics[width=\linewidth]{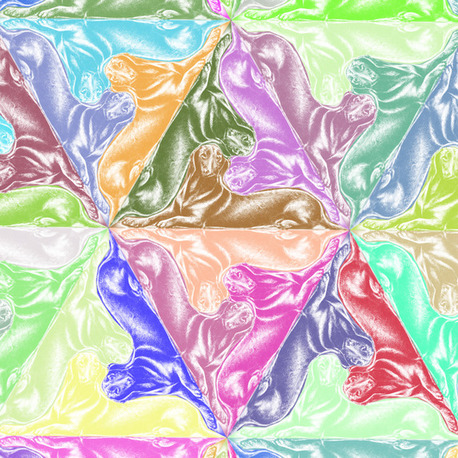}
        \caption{\theprompt{a Dachshund} \orbIIhyb}
        \end{subfigure}
\\
        \begin{subfigure}[b]{0.196\linewidth}
        \includegraphics[width=\linewidth]{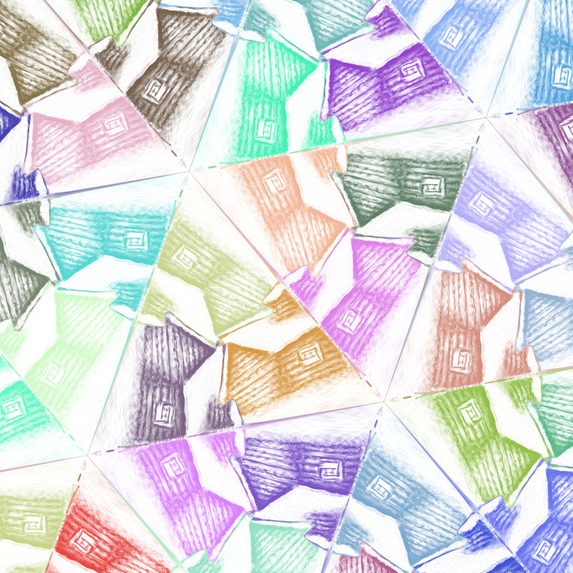}
        \caption{\theprompt{a snowy cabin} \orbIIhyb}
        \end{subfigure}
        \begin{subfigure}[b]{0.196\linewidth}
        \includegraphics[width=\linewidth]{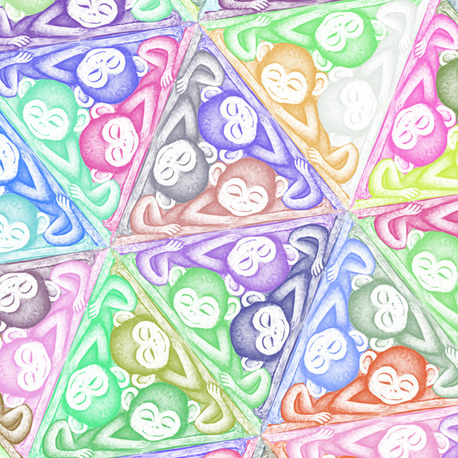}
        \caption{\theprompt{a monkey} \orbIIhyb}
        \end{subfigure}
        \begin{subfigure}[b]{0.196\linewidth}
        \includegraphics[width=\linewidth]{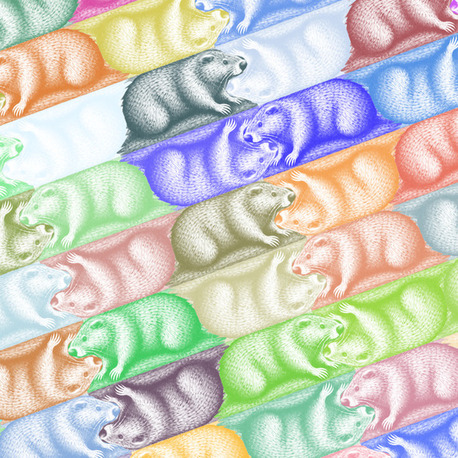}
        \caption{\theprompt{a beaver} \orbIVhyb}
        \end{subfigure}
        \begin{subfigure}[b]{0.196\linewidth}
        \includegraphics[width=\linewidth]{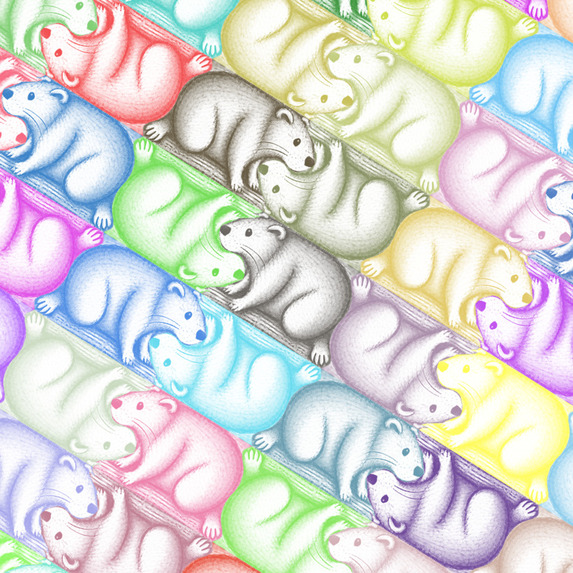}
        \caption{\theprompt{a beaver} \orbIVhyb}
        \end{subfigure}
        \begin{subfigure}[b]{0.196\linewidth}
        \includegraphics[width=\linewidth]{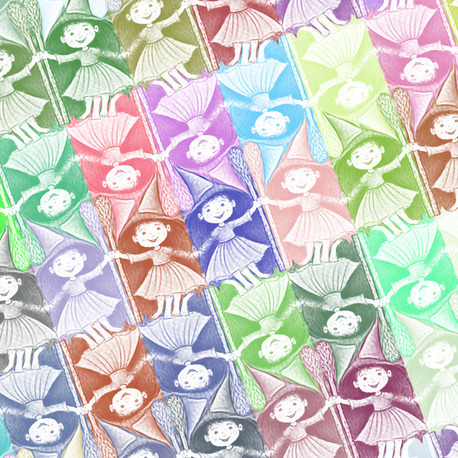}
        \caption{\theprompt{a witch} \orbIVhyb}
        \end{subfigure}
\\
        \begin{subfigure}[b]{0.196\linewidth}
        \includegraphics[width=\linewidth]{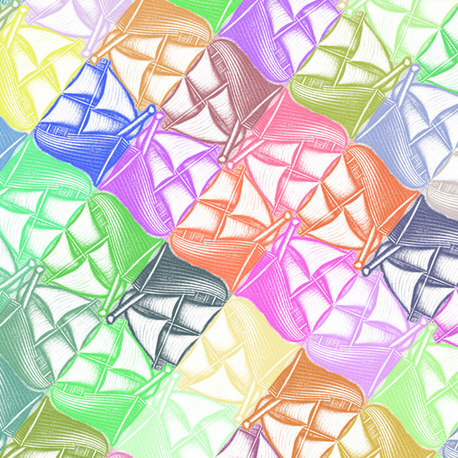}
        \caption{\theprompt{a pirate ship} \orbIVhyb}
        \end{subfigure}
        \begin{subfigure}[b]{0.196\linewidth}
        \includegraphics[width=\linewidth]{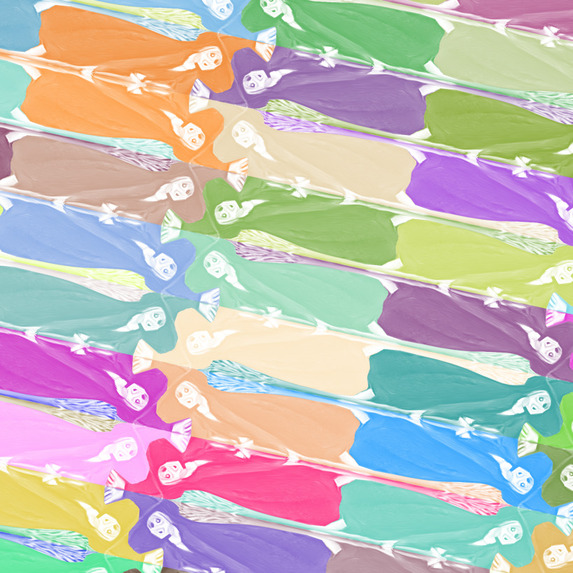}
        \caption{\theprompt{a witch} \orbIVhyb}
        \end{subfigure}
        \begin{subfigure}[b]{0.196\linewidth}
        \includegraphics[width=\linewidth]{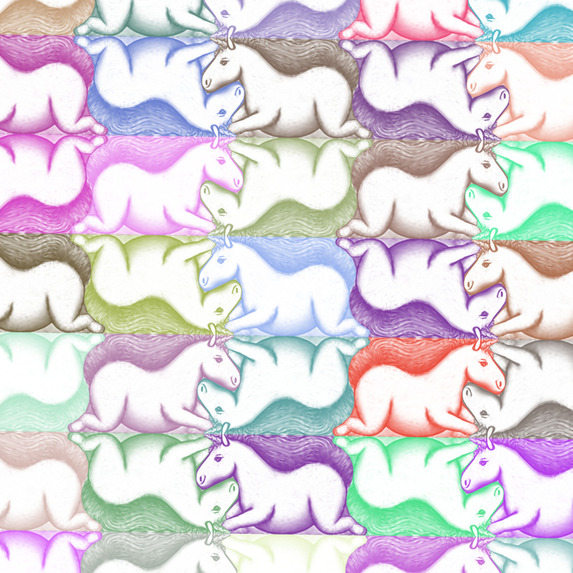}
        \caption{\theprompt{a fat unicorn} \orbIVhyb}
        \end{subfigure}
        \begin{subfigure}[b]{0.196\linewidth}
        \includegraphics[width=\linewidth]{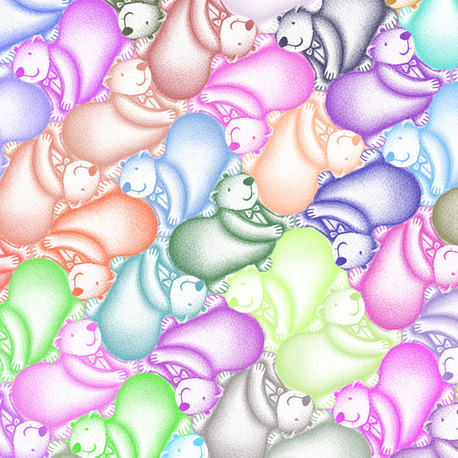}
        \caption{\theprompt{a beaver} \projective}
        \end{subfigure}
        \begin{subfigure}[b]{0.196\linewidth}
        \includegraphics[width=\linewidth]{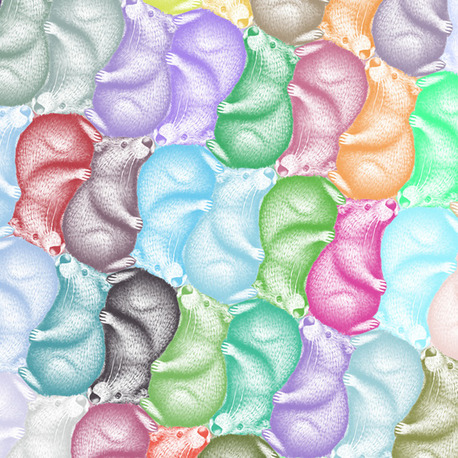}
        \caption{\theprompt{a beaver} \projective}
        \end{subfigure}
\\
        \begin{subfigure}[b]{0.196\linewidth}
        \includegraphics[width=\linewidth]{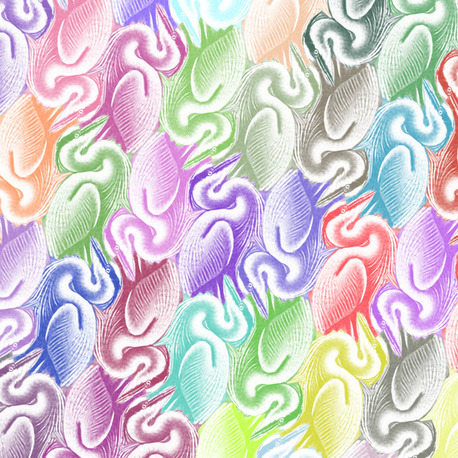}
        \caption{\theprompt{a heron} \projective}
        \end{subfigure}
        \begin{subfigure}[b]{0.196\linewidth}
        \includegraphics[width=\linewidth]{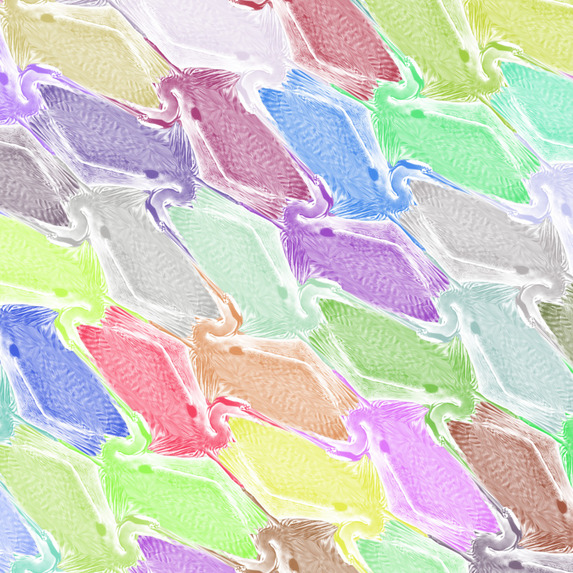}
        \caption{\theprompt{a heron} \projective}
        \end{subfigure}
        \begin{subfigure}[b]{0.196\linewidth}
        \includegraphics[width=\linewidth]{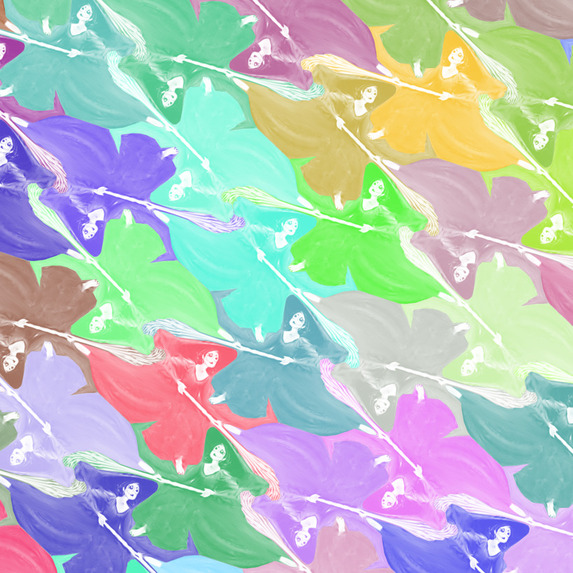}
        \caption{\theprompt{a witch} \projective}
        \end{subfigure}
        \begin{subfigure}[b]{0.196\linewidth}
        \includegraphics[width=\linewidth]{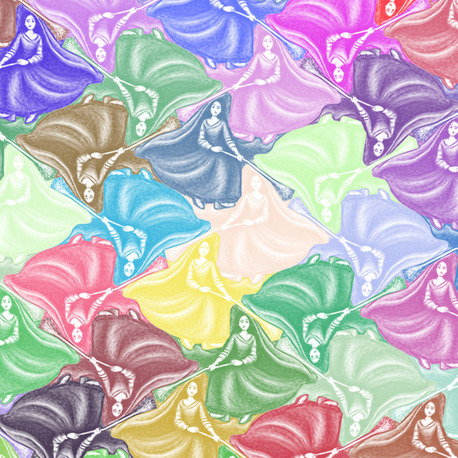}
        \caption{\theprompt{a witch} \projective}
        \end{subfigure}
        \begin{subfigure}[b]{0.196\linewidth}
        \includegraphics[width=\linewidth]{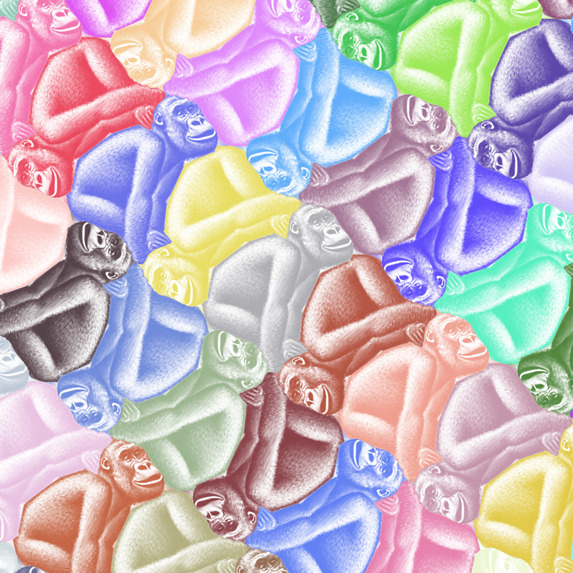}
        \caption{\theprompt{a funky gorilla} \projective}
        \end{subfigure}
\\
\captionsetup{justification=justified}
    \caption{\textbf{Tiling menagerie part VI.} Examples of tilings produced by our method for different prompts and symmetries. Our method produces appealing, plausible results which contain solely the desired object, and cover the plane without overlaps.}
    \label{fig:gallery_2}
\end{figure*}

\begin{figure*}
    \centering
    \captionsetup{justification=centering}
        \begin{subfigure}[b]{0.196\linewidth}
        \includegraphics[width=\linewidth]{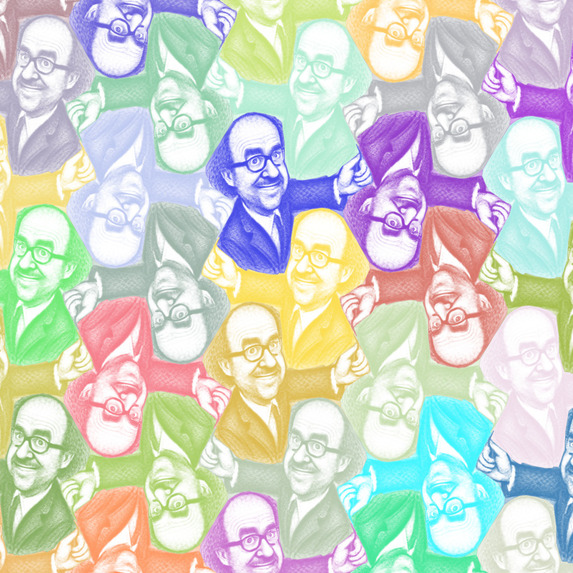}
        \caption{\theprompt{a math professor} \projective}
        \end{subfigure}
        \begin{subfigure}[b]{0.196\linewidth}
        \includegraphics[width=\linewidth]{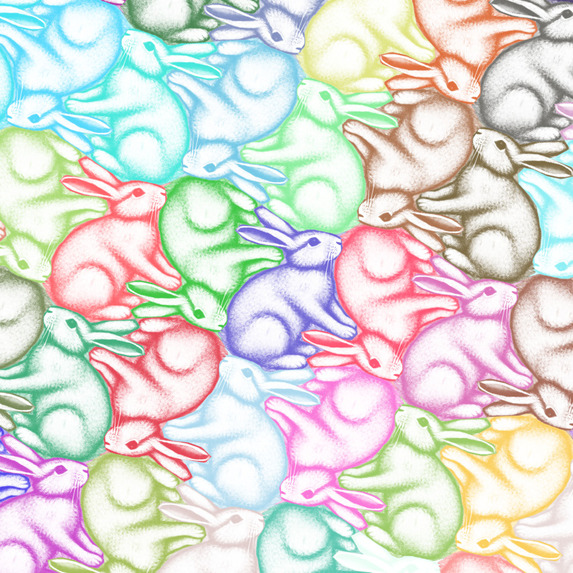}
        \caption{\theprompt{a hippie rabbit} \projective}
        \end{subfigure}
        \begin{subfigure}[b]{0.196\linewidth}
        \includegraphics[width=\linewidth]{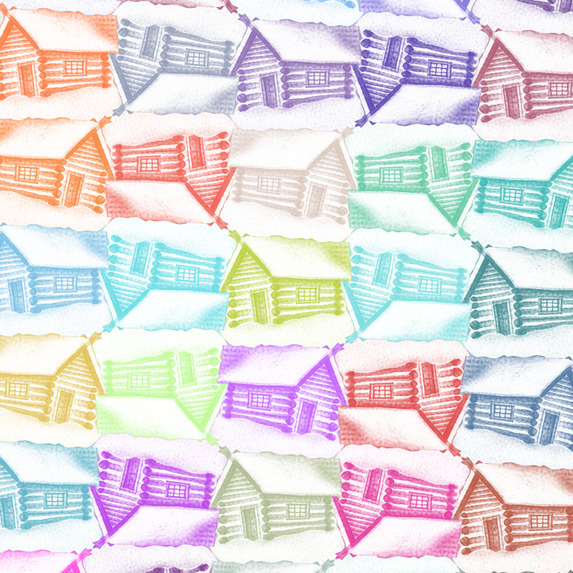}
        \caption{\theprompt{a snowy cabin} \projective}
        \end{subfigure}
        \begin{subfigure}[b]{0.196\linewidth}
        \includegraphics[width=\linewidth]{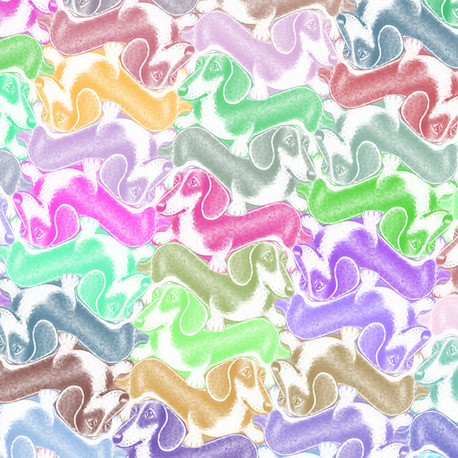}
        \caption{\theprompt{a Dachshund} \projective}
        \end{subfigure}
        \begin{subfigure}[b]{0.196\linewidth}
        \includegraphics[width=\linewidth]{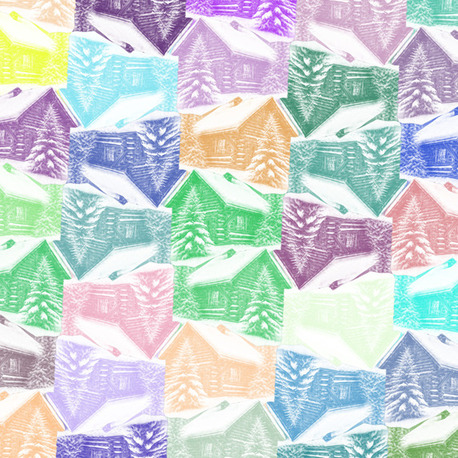}
        \caption{\theprompt{a snowy cabin} \projective}
        \end{subfigure}
\\
        \begin{subfigure}[b]{0.196\linewidth}
        \includegraphics[width=\linewidth]{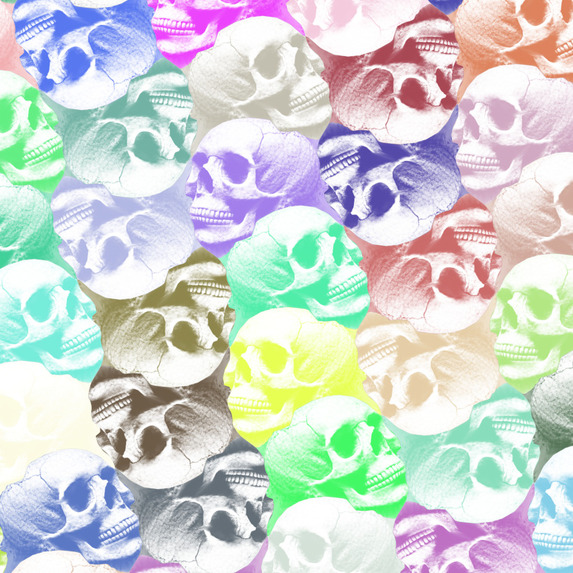}
        \caption{\theprompt{a skull} \projective}
        \end{subfigure}
        \begin{subfigure}[b]{0.196\linewidth}
        \includegraphics[width=\linewidth]{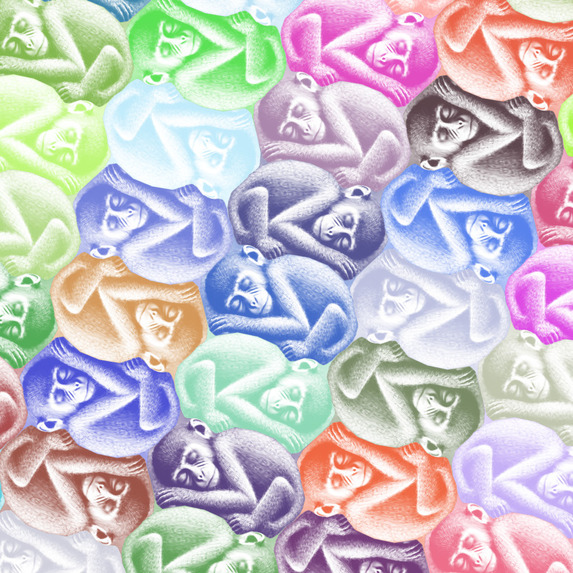}
        \caption{\theprompt{a monkey} \projective}
        \end{subfigure}
        \begin{subfigure}[b]{0.196\linewidth}
        \includegraphics[width=\linewidth]{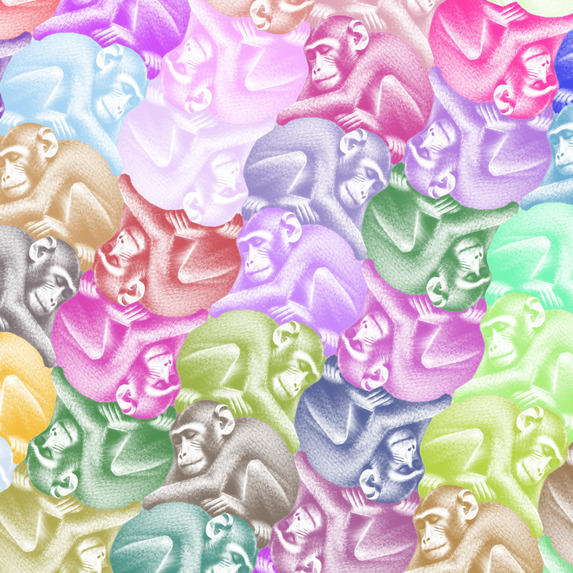}
        \caption{\theprompt{a monkey} \projective}
        \end{subfigure}
        \begin{subfigure}[b]{0.196\linewidth}
        \includegraphics[width=\linewidth]{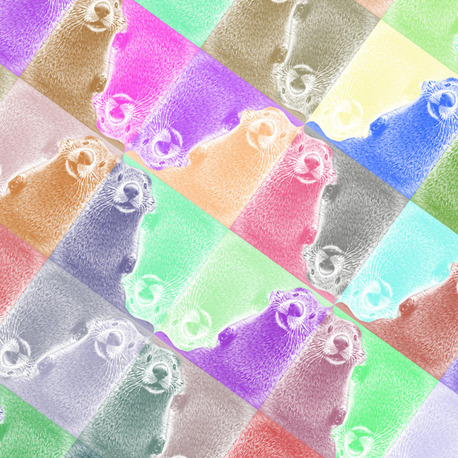}
        \caption{\theprompt{a beaver} \orbRhyb}
        \end{subfigure}
        \begin{subfigure}[b]{0.196\linewidth}
        \includegraphics[width=\linewidth]{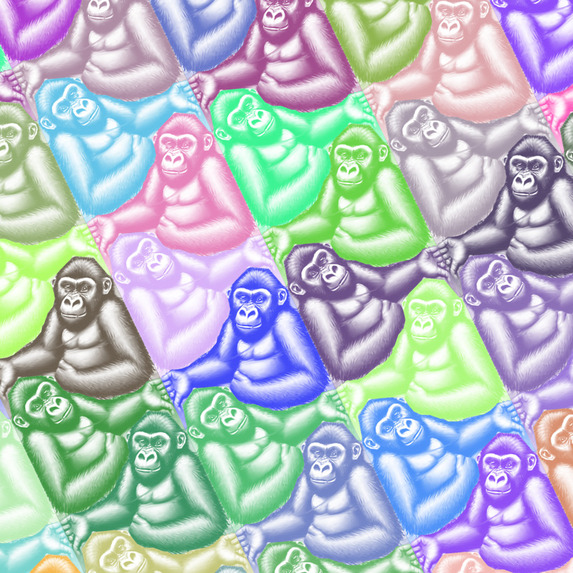}
        \caption{\theprompt{a funky gorilla} \mob}
        \end{subfigure}
\\
        \begin{subfigure}[b]{0.196\linewidth}
        \includegraphics[width=\linewidth]{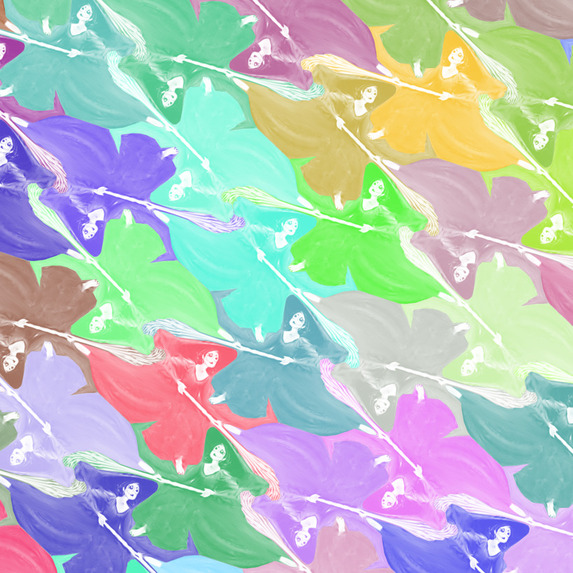}
        \caption{\theprompt{a witch} \projective}
        \end{subfigure}
        \begin{subfigure}[b]{0.196\linewidth}
        \includegraphics[width=\linewidth]{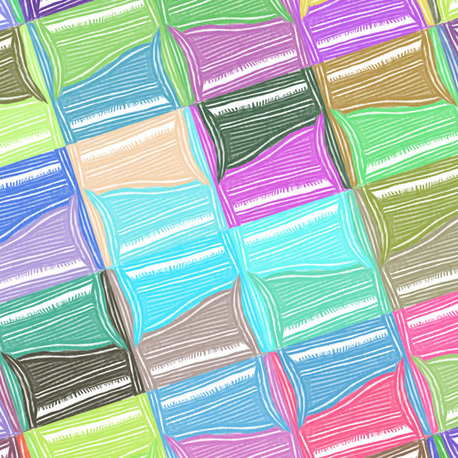}
        \caption{\theprompt{a piano} \orbRhyb}
        \end{subfigure}
        \begin{subfigure}[b]{0.196\linewidth}
        \includegraphics[width=\linewidth]{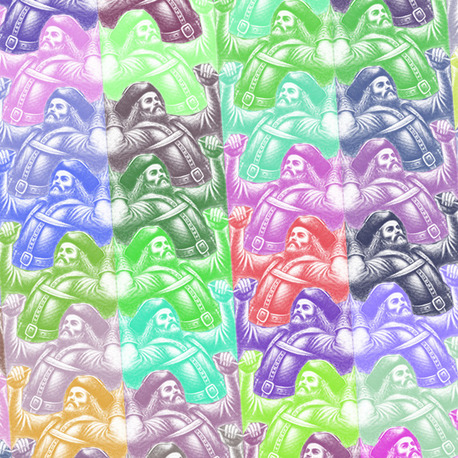}
        \caption{\theprompt{a pirate} \cylinder}
        \end{subfigure}
        \begin{subfigure}[b]{0.196\linewidth}
        \includegraphics[width=\linewidth]{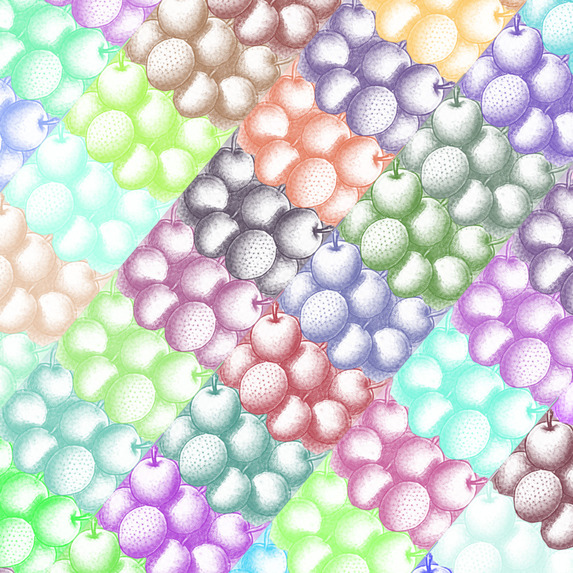}
        \caption{\theprompt{Fruits} \cylinder}
        \end{subfigure}
        \begin{subfigure}[b]{0.196\linewidth}
        \includegraphics[width=\linewidth]{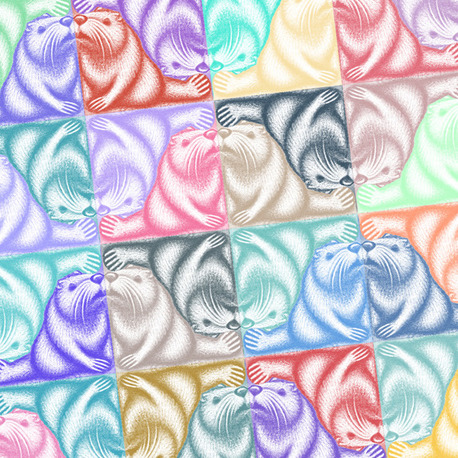}
        \caption{\theprompt{a beaver} \orbRhyb}
        \end{subfigure}
\\
        \begin{subfigure}[b]{0.196\linewidth}
        \includegraphics[width=\linewidth]{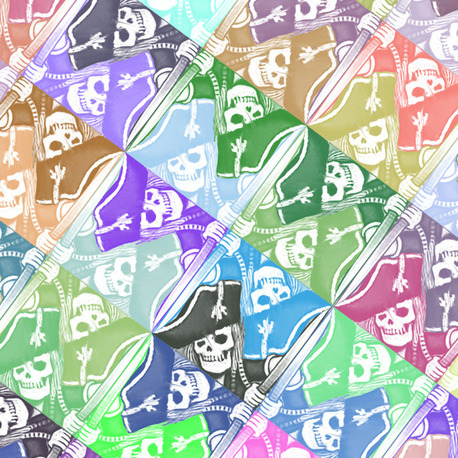}
        \caption{\theprompt{a pirate} \orbRhyb}
        \end{subfigure}
        \begin{subfigure}[b]{0.196\linewidth}
        \includegraphics[width=\linewidth]{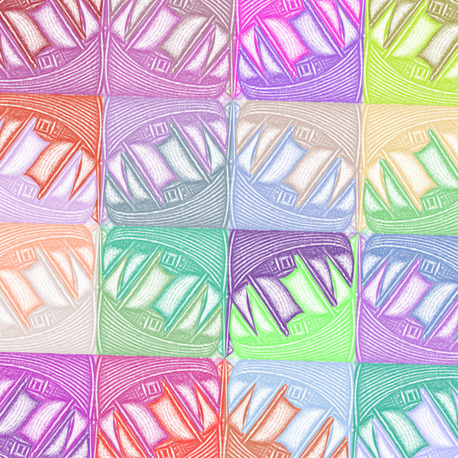}
        \caption{\theprompt{a pirate ship} \orbRhyb}
        \end{subfigure}
        \begin{subfigure}[b]{0.196\linewidth}
        \includegraphics[width=\linewidth]{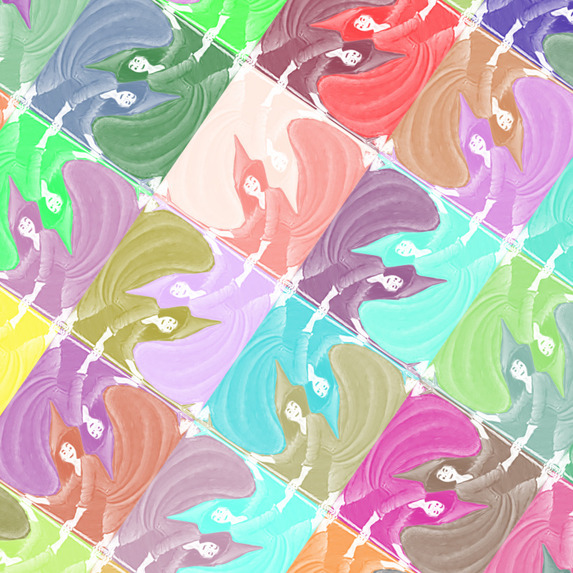}
        \caption{\theprompt{a witch} \orbRhyb}
        \end{subfigure}
        \begin{subfigure}[b]{0.196\linewidth}
        \includegraphics[width=\linewidth]{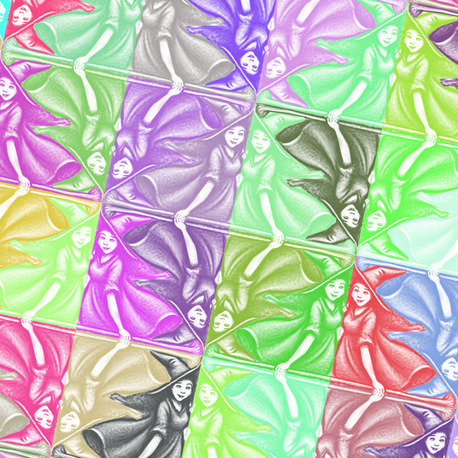}
        \caption{\theprompt{a witch} \orbRhyb}
        \end{subfigure}
        \begin{subfigure}[b]{0.196\linewidth}
        \includegraphics[width=\linewidth]{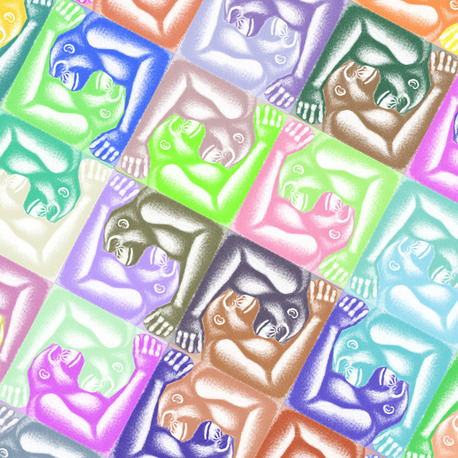}
        \caption{\theprompt{a funky gorilla} \orbRhyb}
        \end{subfigure}
\\
        \begin{subfigure}[b]{0.196\linewidth}
        \includegraphics[width=\linewidth]{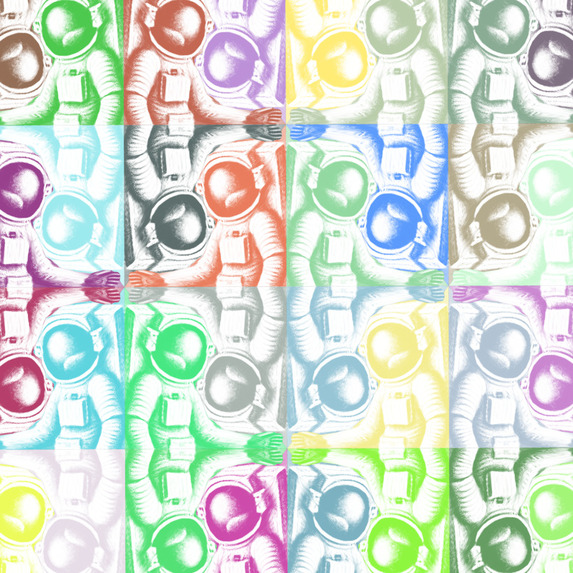}
        \caption{\theprompt{an astronaut} \orbRhyb}
        \end{subfigure}
        \begin{subfigure}[b]{0.196\linewidth}
        \includegraphics[width=\linewidth]{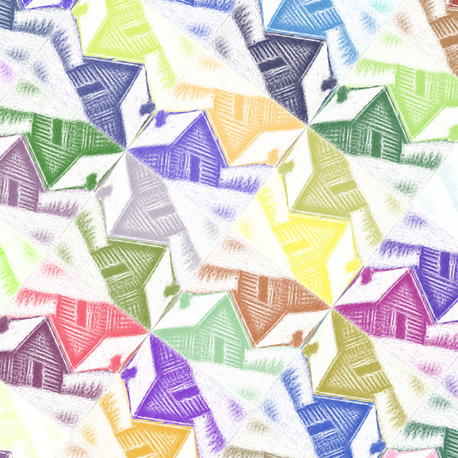}
        \caption{\theprompt{a snowy cabin} \orbRhyb}
        \end{subfigure}
        \begin{subfigure}[b]{0.196\linewidth}
        \includegraphics[width=\linewidth]{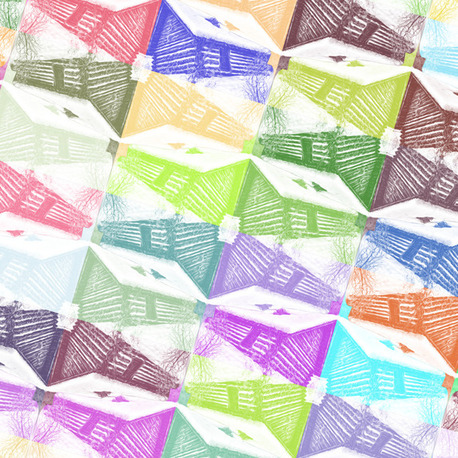}
        \caption{\theprompt{a snowy cabin} \orbRhyb}
        \end{subfigure}
        \begin{subfigure}[b]{0.196\linewidth}
        \includegraphics[width=\linewidth]{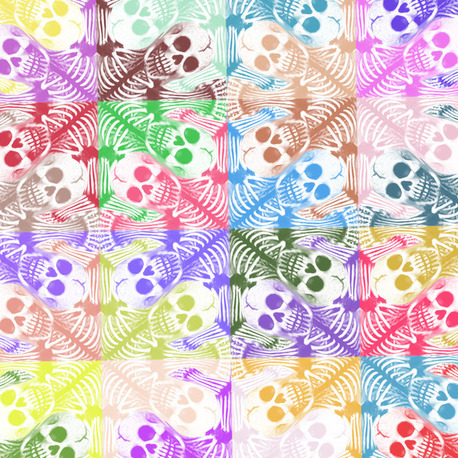}
        \caption{\theprompt{a Skeletton} \orbRhyb}
        \end{subfigure}
        \begin{subfigure}[b]{0.196\linewidth}
        \includegraphics[width=\linewidth]{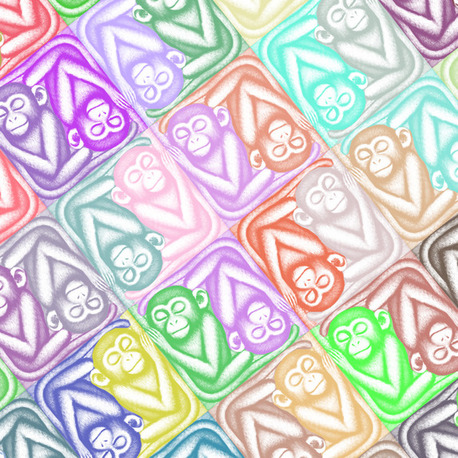}
        \caption{\theprompt{a monkey} \orbRhyb}
        \end{subfigure}
\\
\captionsetup{justification=justified}
    \caption{\textbf{Tiling menagerie part VII.} Examples of tilings produced by our method for different prompts and symmetries. Our method produces appealing, plausible results which contain solely the desired object, and cover the plane without overlaps.}
    \label{fig:gallery_3}
\end{figure*}



\end{document}